\documentclass{article}

\usepackage{microtype}
\usepackage{graphicx}
\usepackage{subcaption}
\usepackage{booktabs} 
\usepackage{multirow}
\usepackage{listings}

\usepackage{hyperref}

\usepackage[ruled,vlined]{algorithm2e}

\usepackage[accepted]{icml2026}

\usepackage{amsmath}
\usepackage{amssymb}
\usepackage{mathtools}
\usepackage{amsthm}
\usepackage{enumitem}
\usepackage{wasysym}
\usepackage{xcolor}
\definecolor{myorange}{HTML}{C04000}
\definecolor{owtgreen}{HTML}{4A8F2A}
\definecolor{tinyorange}{HTML}{D66A00}
\usepackage{tikz}
\usepackage[most]{tcolorbox}

\tcbuselibrary{breakable}

\tcbset{
  bluebox/.style={
    breakable,
    colback=blue!5,
    colframe=blue!30,
    boxrule=0.8pt,
    arc=3pt,
    left=6pt,
    right=6pt,
    top=4pt,
    bottom=4pt
  },
  orangebox/.style={
    breakable,
    colback=orange!10,
    colframe=orange!40,
    boxrule=0.8pt,
    arc=3pt,
    left=6pt,
    right=6pt,
    top=4pt,
    bottom=4pt
  }
}

\newtcolorbox{Box1}[2][]{
  lower separated=false,
  colback=white,
  colframe=black,fonttitle=\bfseries,
  colbacktitle=white,
  coltitle=black,
  boxrule=1pt,
  boxsep=1pt,
  left=2mm,
  right=2mm,
  enhanced,
  attach boxed title to top left={yshift=-0.1in,xshift=0.15in},
                 boxed title style={colframe=black},
  title={#2},#1
}

\DeclareMathOperator*{\argmin}{arg\,min}
\DeclareMathOperator*{\argmax}{arg\,max}

\newcommand{\DC}{\mathcal{D}}
\newcommand{\LC}{\mathcal{L}}
\newcommand{\NC}{\mathcal{N}}

\newcommand{\UC}{\mathcal{U}}
\newcommand{\VC}{\mathcal{V}}

\newcommand{\EE}{\mathbb{E}}
\newcommand{\PP}{\mathbb{P}}
\newcommand{\RR}{\mathbb{R}}

\newcommand{\softmax}{\mathrm{softmax}}
\icmlsetsymbol{blank}{}
\usepackage{colortbl}

\newcommand{\ecoparagraph}[1]{\vspace{-0.05cm}\noindent\textbf{#1}}
\usepackage{wasysym}

\usepackage{enumitem}

\allowdisplaybreaks
\everypar{\looseness=-1}

\usepackage[capitalize,noabbrev]{cleveref}

\theoremstyle{plain}
\newtheorem{theorem}{Theorem}[section]

\newtheorem{lemma}[theorem]{Lemma}

\theoremstyle{definition}

\theoremstyle{remark}

\definecolor{pearDark}{HTML}{2980B9}
\definecolor{MyRed}{HTML}{C0392B}
\definecolor{debugcolor}{HTML}{1C1C1C}
\definecolor{algteal}{HTML}{00897B}
\definecolor{algorange}{HTML}{F57C00}
\definecolor{algamber}{HTML}{D6A000}
\definecolor{algmagenta}{HTML}{D81B60}
\definecolor{algviolet}{HTML}{6D28D9}
\definecolor{algblue}{HTML}{2563EB}
\newcommand{\reviewchange}[1]{#1}
\newenvironment{reviewblock}{}{}
\newcommand{\algkw}[1]{\textnormal{\texttt{\textcolor{algblue}{\bfseries\detokenize{#1}}}}}
\newcommand{\algfn}[1]{\textnormal{\texttt{\textcolor{algmagenta}{\detokenize{#1}}}}}
\newcommand{\algid}[1]{\textnormal{\texttt{\textcolor{algviolet}{\detokenize{#1}}}}}
\newcommand{\algstr}[1]{\textnormal{\texttt{\textcolor{algorange}{\detokenize{#1}}}}}
\newcommand{\algcom}[1]{\textnormal{{\ttfamily\textcolor{algteal}{\# #1}}}}
\newcommand{\algcommath}[1]{\textnormal{{\ttfamily\textcolor{algteal}{\# #1}}}}
\newcommand{\idlmalgcaption}[3]{%
  \refstepcounter{algocf}\label{#1}%
  \noindent\rule{\linewidth}{0.8pt}\par\vspace{0.5mm}%
  \noindent{\bfseries Algorithm~\thealgocf\ #2}\par\vspace{0.3mm}%
  {\scriptsize #3}\par\vspace{0.5mm}%
  \noindent\rule{\linewidth}{0.4pt}\par\vspace{-1mm}%
}
\newcommand{\idlmalgbottom}{\vspace{-1.5mm}\noindent\rule{\linewidth}{0.4pt}}
\lstdefinelanguage{IDLMPseudocode}{
  basicstyle=\ttfamily\small,
  keywordstyle=\color{black},
  commentstyle=\color{black},
  stringstyle=\color{black},
  morekeywords={},
  showstringspaces=false,
  columns=fullflexible,
  keepspaces=true,
  frame=none
}

\usepackage[textsize=tiny]{todonotes}

\icmltitlerunning{IDLM: Inverse-distilled Diffusion Language Models}

\begin{document}

\twocolumn[
  \icmltitle{IDLM: Inverse-distilled Diffusion Language Models}



  \icmlsetsymbol{equal}{*}

  \begin{icmlauthorlist}
    \icmlauthor{David Li}{equal,mbzuai}
    \icmlauthor{Nikita Gushchin}{equal,appliedai,axxx}
    \icmlauthor{Dmitry Abulkhanov}{blank}
    \icmlauthor{Eric Moulines}{mbzuai,epita}
    \icmlauthor{Ivan Oseledets}{axxx,appliedai}
    \icmlauthor{Maxim Panov}{mbzuai}
    \icmlauthor{Alexander Korotin}{appliedai,axxx}
  \end{icmlauthorlist}

  \icmlaffiliation{mbzuai}{Mohamed Bin Zayed University of AI, Abu Dhabi, UAE}
  \icmlaffiliation{appliedai}{Applied AI Institute, Moscow, Russia}
  \icmlaffiliation{epita}{EPITA, Laboratoire Recherche de l’EPITA, Paris, France}
  \icmlaffiliation{axxx}{AXXX, Russia}
  \icmlcorrespondingauthor{David Li}{David.Li@mbzuai.ac.ae}
  \icmlcorrespondingauthor{Alexander Korotin}{iamalexkorotin@gmail.com}

  \icmlkeywords{Machine Learning, ICML}

  \vskip 0.3in
]



\printAffiliationsAndNotice{\icmlEqualContribution}

\begin{abstract}
    Diffusion Language Models (DLMs) have recently achieved strong results in text generation. However, their multi-step sampling leads to slow inference, limiting practical use. To address this, we extend Inverse Distillation, a technique originally developed to accelerate continuous diffusion models, to the discrete setting.
    Nonetheless, this extension introduces both theoretical and practical challenges. From a theoretical perspective, the inverse distillation objective lacks uniqueness guarantees, which may lead to suboptimal solutions. From a practical standpoint, backpropagation in the discrete space is non-trivial and often unstable. To overcome these challenges, we first provide a theoretical result demonstrating that our inverse formulation admits a unique solution, thereby ensuring valid optimization. We then introduce gradient-stable relaxations to support effective training. 
    As a result, experiments on multiple DLMs show that our method, \textit{Inverse-distilled Diffusion Language Models (IDLM)}, reduces the number of inference steps by $4 \times$-$64 \times$, while preserving the teacher model’s generation quality.  We provide the code, model checkpoints, and video tutorials on the project page: 
    \looseness=-1

    \vspace{0.3ex}
    \centerline{\href{https://david-cripto.github.io/idlm-project-page/}{https://david-cripto.com/idlm}}
    \vspace{-5mm}
\end{abstract}

\section{Introduction}
\label{sec:introduction}

    \begin{figure}[t!]
    \centering
        \includegraphics[width=\linewidth, trim = 1.6cm 0.4cm 0.8cm 0.4cm, clip]{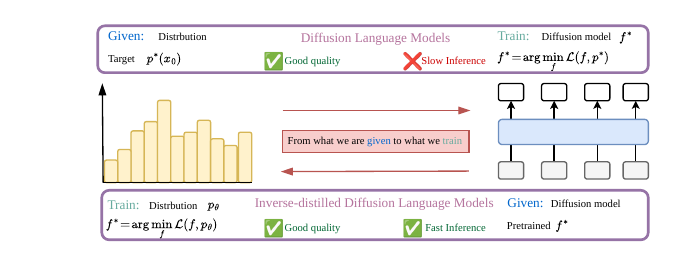}
        \caption{\textbf{Diffusion training vs inverse distillation.} DLMs learn a slow teacher $f^*$ from data. IDLM fixes $f^*$ and learns $p_\theta$ for fast few-step generation.}
        \vspace{-4mm}
        \label{fig:teaser}
    \end{figure}
  
    Generative modeling for discrete data, such as natural language, has seen widespread adoption, driven by the rapid advancement of large language models~\citep{touvron2023llama, groeneveld2024olmo, lozhkov2024starcoder}. 
    Recently, Diffusion Language Models 
    (DLMs; \citealp{sohl2015deep, austin2021structured, campbell2022continuous, lou2024discrete}) have emerged as a promising framework for modeling the distribution of the text data. A general formulation for a broad class of DLMs was introduced by~\citep{lou2024discrete}. However, the general formulation is often intractable in practice and introduces \textcolor{debugcolor}{additional} theoretical complexity. To address this, subsequent work~\citep{sahoo2024simple, sahoo2025the, schiff2024discreteguidance, shi2024simplified} has focused on specific types of diffusion processes, enabling simplified theoretical analyses and more tractable implementations. As a result, DLMs have achieved competitive performance in text generation, comparable to that of autoregressive language models~\citep{nie2025large}. However, the inherently iterative nature of reverse diffusion requires hundreds or even thousands of sampling steps at inference time, resulting in considerable latency that limits their practicality.

    On the other hand, in the continuous domain, the issue of slow inference in diffusion models~\citep{ho2020denoising, song2020score, karras2022elucidating} has been the focus of extensive research~\citep{song2021denoising, luhman2021knowledge, zhang2022fast, lu2022dpm}. Among the various approaches, \emph{distillation} has emerged as one of the most effective solutions~\citep{song2023consistency, salimans2022progressive, luhman2021knowledge, berthelot2023tract, xie2024distillation}. However, directly applying such techniques in the discrete domain is challenging due to the non-differentiable nature of discrete data. Recent efforts have begun to address this limitation for DLMs. Notably, SDTT~\citep{deschenaux2024beyond} and Duo-DCD~\citep{sahoo2025the} propose adaptations of consistency-based distillation methods~\citep{song2023consistency, song2023improved, kim2023consistency} to the discrete setting, demonstrating initial progress toward enabling fast inference in DLMs.

    In this work, we investigate an alternative distillation paradigm by extending the theoretical framework of \emph{Inverse Distillation}~\citep{gushchin2025inverse, kornilov2025universal, selikhanovych2025one}, originally developed for continuous diffusion models, to the discrete domain. However, this direct extension introduces several nontrivial challenges arising from the intrinsic properties of discrete data. First, \underline{theoretical concerns} emerge regarding the validity of the training objective, particularly due to the potential non-uniqueness of its minimizers. Second, \underline{practical challenges} arise from the non-smoothness of the discrete space, which complicates the optimization. 
  
    \textbf{Contributions.} To address the above-mentioned issues, we present \emph{Inverse-distilled Diffusion Language Models (IDLM)}, a principled framework for accelerating a broad class of discrete diffusion models. In summary, our core contributions are as follows:
    \vspace{-0.9em}

    \begin{enumerate}
    \setlength{\itemsep}{1pt}
    \setlength{\parskip}{1pt}
        \item We theoretically establish the uniqueness of the global optimum for the IDLM objective, validating the correctness of the proposed optimization procedure (\wasyparagraph\ref{sec:theoretical extension idlm}).
        \item We propose gradient-stable relaxations for effective training \textcolor{debugcolor}{in the non-smooth discrete domain} (\wasyparagraph\ref{sec:simplex relaxation}).
        \item We empirically demonstrate that IDLM achieves competitive generation quality, matching the performance of 1024-step teacher models while requiring up to $64\times$ fewer sampling steps (\wasyparagraph\ref{sec:exps}).
    \end{enumerate}


\section{Preliminaries}
\label{sec:preliminaries}

\ecoparagraph{Notations.}
    We represent scalar discrete random variables over $N$ categories as one-hot column vectors, with ${\VC = \{x \in \{0,1\}^N\colon \sum_{i=1}^N x_i = 1\}}$ denoting the set of such vectors. We further define the probability simplex $\Delta \vcentcolon= \{ z \in \RR^N\colon z \ge 0, \langle z, \vec{1} \rangle = 1 \}$ as the convex hull of $\VC$, where $\vec{1} \in \RR^N$ denotes the all-ones vector. 

\subsection{Diffusion Language Models}
\label{sec:discrete diffusion models}
    Diffusion Language Models (DLMs; \citealp{austin2021structured, campbell2022continuous, lou2024discrete}) generate samples from a data distribution $p^*(x_0)$ by learning to reverse a prescribed forward noising process. In this work, we focus on two tractable process families that are widely used in language modeling: the \textit{absorbing} process, which underlies MDLM, and the \textit{uniform} process, which underlies UDLM and Duo. \reviewchange{We defer the general CTMC formulation and its SEDD score-entropy instantiation to Appendix~\ref{app:details of considered processes}. The process-specific details for masked diffusion and uniform diffusion are expanded in Appendices~\ref{app:absorbing process} and~\ref{app:uniform process}, respectively.}

\subsubsection{Absorbing Process}
\label{sec:absorbing process mdlm}
    \paragraph{Forward process.}
    The absorbing process corrupts tokens by replacing them with the mask token $m$. Its terminal distribution is concentrated on $m$, and under the usual schedule $\alpha_t \in [0,1]$, the forward transition from a clean token $x_0$ is
    \[
        q_t(x_t \mid x_0)
        = \mathrm{Cat}\bigl(x_t; \alpha_t x_0 + (1-\alpha_t)m\bigr).
    \]
    Thus, a token is either still equal to $x_0$ or has already become masked.

    \paragraph{Training loss.}
    MDLM~\citep{sahoo2024simple, shi2024simplified} parameterizes the denoiser as $f\colon \VC \times [0,1]\to\Delta$ and trains it with the weighted masked prediction objective
    \begin{equation}
    \label{eq:mdlm loss}
    \begin{aligned}
    \LC_{\text{MDLM}}(f, p^*)
    \vcentcolon= -
    \EE_{p^*(x_0), t, q_t}
    \bigl[
        \lambda_{t} \langle  x_0, \log f(x_t, t) \rangle
    \bigr].
    \end{aligned}
    \end{equation}
    where $\lambda_t$ is a positive weighting function and the logarithm is applied element-wise.

    \paragraph{Sampling procedure.}
    The \textsc{subs} parameterization of MDLM encodes two useful properties: unmasked tokens are copied forward, and the model assigns zero probability to predicting the mask token as clean data. For $0\leq s<t\leq1$, the learned reverse transition is the absorbing posterior with the clean token replaced by the model prediction, define $f_t\vcentcolon=f(x_t,t)$:
    \begin{equation}        
    \!\!p^{f}_{s\mid t}(x_s\!\mid\! x_t)
    \!\!=\!\!
    \begin{cases}
    \mathrm{Cat}(x_s; x_t),\!\! & \!\!x_t\neq m,\\[2pt]
    \mathrm{Cat}\!\left(
    \!x_s;
    \!\!\begin{gathered}
    \dfrac{(1\!-\!\alpha_s) m \! +\! (\alpha_s\!-\!\alpha_t)f_t}{1-\alpha_t}
    \end{gathered}\!\!
    \right), \!\! & \!\!x_t=m .
    \end{cases} \nonumber
    \end{equation}
    Sampling starts from a fully masked sequence $x_1\sim\delta_m(\cdot)$ and repeatedly draws $x_s\sim p^{f}_{s\mid t}(\cdot\mid x_t)$ along a decreasing time grid. Once a token is revealed, it is carried over in later denoising steps. The masked-diffusion sampling process is visualized in Figure~\ref{fig:masked-diffusion-sampling-intuition}. The full MDLM formulation is given in Appendix~\ref{app:mdlm-formulation-theory}.

\subsubsection{Uniform Process}
\label{sec:uniform process duo}
    \paragraph{Forward process.}
    The uniform process corrupts tokens toward the uniform distribution $\frac{1}{N}\vec{1}$. With the same notation $\alpha_t$, its forward transition can be written compactly as
    \[
        q_t(x_t \mid x_0)
        = \mathrm{Cat}\bigl(x_t; \alpha_t x_0 + (1-\alpha_t)\tfrac{1}{N}\vec{1}\bigr).
    \]
    Unlike absorbing diffusion, an intermediate token can be changed again at later reverse steps. This makes the process self-correcting: generation must decide both \textit{which} token to move to and \textit{when} to move.

    \paragraph{Training loss.}
    UDLM~\citep{schiff2024discreteguidance} uses the clean-token predictor as MDLM. Duo~\citep{sahoo2025the} keeps the same uniform discrete process but evaluates the model on a Gaussian-relaxed input, which gives smoother gradients during training while recovering the hard uniform process in the low-temperature limit, which makes the following parameterization of the model $f\colon \Delta \times [0,1]\to\Delta$. We write the objective as
    \begin{equation}
    \label{eq:duo loss}
    \LC_{\text{Duo}}(f,p^*)
    \vcentcolon=
      \textstyle
    \EE_{p^*(x_0), t, q_t}
      \bigl[
        g_{\mathrm{Duo}}\bigl(x_t ,x_0,f(x_t,t)\bigr)
      \bigr].
    \end{equation}
    All details of $g_{\mathrm{Duo}}$ are deferred to Appendix~\ref{app:uniform process}.

    \paragraph{Sampling procedure.}
    Sampling starts from a sequence of tokens sampled independently from the uniform distribution, $x_1\sim\mathrm{Cat}(\vec{1}/N)$. Then the sequence is iteratively refined by the learned uniform reverse posterior. For $0\leq s<t\leq1$, define $\alpha_{t\mid s}\vcentcolon=\alpha_t/\alpha_s$ and $f_t\vcentcolon=f(x_t,t)$. The ancestral sampler uses
    \begin{gather}
    \label{eq:uniform reversed}
    p^f_{s\mid t}(x_s\mid x_t) =
        \mathrm{Cat}\!\Bigl(x_s;
        \frac{N\alpha_t\,x_t\odot f_t
        +(\alpha_{t\mid s}-\alpha_t)x_t}{N\alpha_t\langle x_t,f_t\rangle+1-\alpha_t} \nonumber \\[-1pt]
        +\frac{(\alpha_s-\alpha_t)f_t
        +(1-\alpha_{t\mid s})(1-\alpha_s)\vec{1}/N}{{N\alpha_t\langle x_t,f_t\rangle+1-\alpha_t}} \Bigr). \nonumber
    \end{gather}
    \noindent In the sampling procedure each reverse step can revise the whole sequence.
    The uniform-diffusion sampling process is visualized in Figure~\ref{fig:appendix process intuition}. The full UDLM/Duo formulation is given in Appendix~\ref{app:uniform process}.
    
    The Duo paper also introduces an additional greedy sampling procedure. In the experiments, we report both variants and denote ancestral sampling by superscript ($^{\mathrm{a}}$) and greedy sampling by ($^{\mathrm{g}}$).

\subsubsection{Unified Notations \& Sequence Level Loss}
\label{sec:unified notations sequence loss}
    \paragraph{Unified token-level objective.}
    SEDD~\eqref{eq:sedd loss}, MDLM~\eqref{eq:mdlm loss}, and Duo~\eqref{eq:duo loss} use different losses, but they share the same structure. We write $\LC \in \{\LC_{\text{SEDD}}, \LC_{\text{MDLM}}, \LC_{\text{Duo}}\}$ in order to generalize our distillation scheme. Each loss can be expressed by the integrand $g$ of respective loss, so for any distribution $p$ we define:
    \[
    \begin{aligned}
    \LC(f,p)
    &=
    \EE_{p(x_0), t, q_t}\bigl[
        g\bigl(x_t,x_0,f(x_t,t)\bigr)
    \bigr].
    \end{aligned}
    \]

    \paragraph{Sequence-level objective.}
    In practical applications, we are interested in generating sequences $x_0^{1:L}$ of length $L$. To avoid an exponentially large state space, DLMs usually apply the forward corruption independently across sequence positions~\citep{lou2024discrete}. Under this assumption, the forward distribution factorizes as
    \[
        q_t(x_t^{1:L} \mid x_0^{1:L})
        = \prod_{l=1}^L q_t(x_t^l \mid x_0^l).
    \]
    The corresponding sequence-level diffusion loss becomes
    \begin{equation}
    \label{eq:sequence level loss}
    \begin{aligned}
    \LC^{1:L}(f,p^*)
    &=
    \EE_{p^*(x_0), t, q_t}\biggl[
        \sum_{l=1}^L
        g\bigl(
        x_t^l,x_0^l,f^l(x_t^{1:L},t)
        \bigr) \nonumber
    \biggr].
    \end{aligned}
    \end{equation}

\subsection{Inverse Distillation}
\label{sec:inverse distillation}

\ecoparagraph{Main goal.}
    The goal of inverse distillation is to directly train a student distribution $p_\theta(x_0)$, represented by a fast generator, so that it approximates the true data distribution $p^*(x_0)$. We use the pretrained diffusion model $f^*$, which was trained on samples from $p^*$. The good $p_\theta$ should be one for which the same diffusion training procedure would recover $f^*$.

\ecoparagraph{Diffusion training.}
    In usual diffusion training problem: given samples from $p^*$, we train the diffusion model by solving
    \begin{equation}
        f^* = \arg\min\nolimits_{f} \LC(f, p^*).
        \label{eq:teacher_objective_discr}
    \end{equation}
    Thus, the pretrained model $f^*$ is the diffusion model whose predictions are optimal for the real data distribution, we call this model the \textit{teacher}.

\ecoparagraph{Inverse viewpoint.}
    Inverse distillation reverses the roles of the known and unknown objects. We are given the pretrained diffusion model $f^*$ and seek a student distribution $p_\theta(x_0)$, such that training the same diffusion objective on samples from $p_\theta$ would recover the teacher:
    \begin{equation}
        f^* = \arg\min\nolimits_{f} \LC(f, p_{\theta}).
        \label{eq:inverse_optimality_condition}
    \end{equation}
    In words, diffusion training asks which diffusion model fits a distribution, while inverse distillation asks which distribution would make the fixed pretrained diffusion model optimal.

\ecoparagraph{Inverse distillation loss.}
    The optimality condition in~\eqref{eq:inverse_optimality_condition} is not directly feasible as a training objective, since it requires optimizing parameters $\theta$ through an $\argmin$. Following prior work on inverse distillation~\citep{gushchin2025inverse,kornilov2025universal, selikhanovych2025one}, we use tractable inverse loss
    \begin{gather}
    \begin{aligned}
    \LC_{\text{inv}}(\theta)
    \vcentcolon=
        \LC(f^*, p_{\theta})-
        \min\nolimits_{f} \LC(f, p_{\theta}).
    \end{aligned}
    \label{eq:inverse optimization problem}
    \end{gather}
    The inner minimization defines a \emph{fake} diffusion model trained on samples from $p_\theta$. The term $\LC(f^*,p_\theta)$ measures the teacher denoiser's performance on samples from $p_\theta$, while $\min_f \LC(f,p_\theta)$ is the best achievable denoising loss for a diffusion model trained on $p_\theta$. The generator is therefore optimized by minimizing this gap. By construction, $\LC_{\text{inv}}(\theta)\geq0$, and if $p_\theta=p^*$ then $\LC_{\text{inv}}(\theta)=0$. The converse is not guaranteed for this general objective.

\ecoparagraph{Connection to this work.}
    In \emph{continuous} space, inverse distillation was extensively explored by prior works~\citep{gushchin2025inverse,kornilov2025universal, selikhanovych2025one}. Our main goal is to extend this idea to the \emph{discrete} domain. However, it presents several challenges, stemming from the inherent characteristics of discrete spaces. These include \underline{theoretical concerns} regarding the validity of the training objective, due to the non-uniqueness of its minimizers, as well as \underline{practical difficulties} related to the non-differentiability of discrete sampling.

\begin{figure*}[t!]
\centering
    \includegraphics[width=1.0\linewidth, trim = 0.32cm 0.3cm 0.33cm 0.2cm, clip]{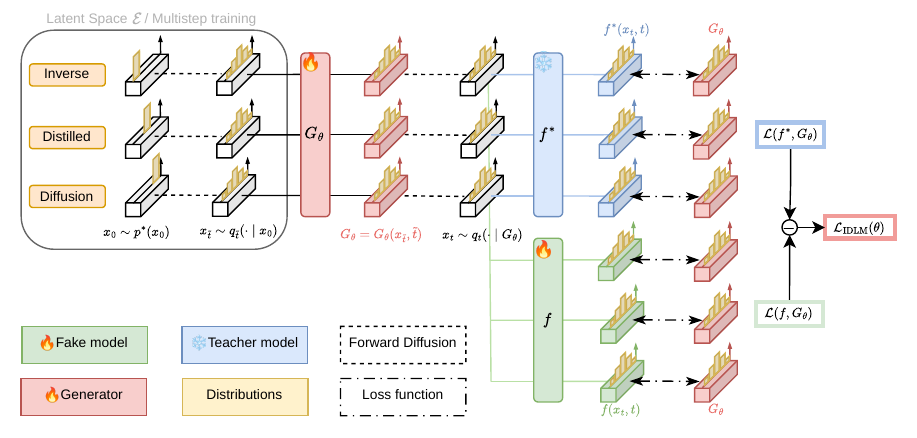}
    \caption{\textbf{IDLM training loop.} The student generator $G_\theta$ maps noised data to a clean sample, which is forward-diffused and passed as input to the frozen teacher $f^*$ and the learnable fake model $f$. Training alternates between fitting the fake model to the current student distribution and updating the generator with the IDLM loss. This loss pushes $G_\theta$ toward samples supported by the teacher more than by the fake model.}
\label{fig:method}
\vspace{-5mm}
\end{figure*}

\vspace{-2mm}
\section{Inverse-distilled Diffusion Language Models}
\vspace{-2mm}
\label{sec:idlm}
    In this section, we present our core methodology for distilling a broad class of Diffusion Language Models into a few-step generator. We begin by providing the theoretical foundations supporting the extension of the Inverse Distillation framework to the discrete domain (\wasyparagraph\ref{sec:theoretical extension idlm}). We then address the practical challenges arising from the non-smooth nature of discrete spaces (\wasyparagraph\ref{sec:practical extension of idlm}). Lastly, we outline key implementation details \textcolor{debugcolor}{and introduce the sequence-level scaling of the proposed objective} (\wasyparagraph\ref{sec:technical aspects}). Appendices~\ref{app:absorbing process} and~\ref{app:uniform process} give the masked- and uniform-diffusion formulations used below.

\vspace{-1mm}
\subsection{Theoretical extension of Inverse Distillation}
\vspace{-1mm}
\label{sec:theoretical extension idlm}

\ecoparagraph{Inverse distillation for DLMs.}
    To address the issue of slow inference, we extend the Inverse Distillation framework, originally developed for continuous diffusion models, to the domain of Diffusion Language Models. Then, following the logic of the continuous case, described in (\wasyparagraph\ref{sec:inverse distillation}), we propose to consider the following objective for the discrete diffusion:
    \begin{equation}
    \label{eq:inverse discrete diffusion distillation}
        \mathcal{L}_{\text{IDLM}} (\theta ) = \LC(f^*, p_{\theta}) \!
        - \! \min_{f} \LC(f, p_{\theta}).\!
    \end{equation}

\ecoparagraph{Theoretical validity of the training objective.}
    However, simply formulating this objective in a manner analogous to the distillation loss used in continuous diffusion models does not guarantee the validity of its solutions. The absence of a uniqueness guarantee implies that the optimization procedure may converge to suboptimal solutions. To address this, we present the following theorem. 
    
    \begin{theorem}[Unique solution]
    \vspace{-1mm}
    \label{thm:unique solution} 
        For the SEDD~\eqref{eq:sedd loss}, MDLM~\eqref{eq:mdlm loss}, and Duo~\eqref{eq:duo loss} (in the limit as $\tau \to 0^{+}$) objectives the IDLM loss defined in equation~\eqref{eq:inverse discrete diffusion distillation} satisfies
        \begin{equation}
            \LC_{\text{IDLM}}(\theta) \geq \mathcal{D}_{\text{KL}}(p_{\theta} \parallel p^*)\geq 0 \nonumber
        \end{equation}
        and achieves its minimum (zero) if and only if the model distribution matches the target distribution
        \vspace{-2mm}
        \begin{equation}
            \LC_{\text{IDLM}}(\theta) = 0 \iff p_{\theta} = p^*. \nonumber
        \end{equation}
    \vspace{-2mm}
    \end{theorem}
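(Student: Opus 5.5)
Each of the three objectives is a Bregman-divergence–type (proper scoring) loss. This gives two facts. First, the inner minimizer $\min_f \LC(f,p_\theta)$ is attained at the "fake" model $f_{p_\theta}$, i.e.\ the conditional quantity (posterior mean of $x_0$ given $x_t$ for MDLM/Duo, concrete score ratios for SEDD) computed under $p_\theta$. Second, $\LC(f^*,p_\theta)-\LC(f_{p_\theta},p_\theta)$ equals an expected divergence between $f^*$ and $f_{p_\theta}$ along the forward process. For MDLM this is immediate. Conditioning on $(x_t,t)$, the cross-entropy $-\EE[\langle x_0,\log f\rangle\mid x_t]$ is minimized by $f=\EE_{p_\theta}[x_0\mid x_t]$, and the gap is
\[
\EE_{t}\,\lambda_t\,\EE_{p_\theta,q_t}\bigl[\mathcal{D}_{\text{KL}}\bigl(f_{p_\theta}(x_t,t)\,\|\,f^*(x_t,t)\bigr)\bigr].
\]
For SEDD the score-entropy integrand is a Bregman divergence generated by $x\log x$, so the gap is an expected score-entropy divergence between true and teacher concrete scores. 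For Duo, I would pass to the limit $\tau\to0^+$, where the relaxed input collapses to the hard uniform process and $g_{\mathrm{Duo}}$ reduces to the UDLM/ELBO integrand. I would then argue the same Bregman structure there; uniform convergence of the loss in $\tau$ lets the limit commute with the minimization.

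The key step is to identify the gap with a path-space KL. Since $f^*$ is the minimizer of $\LC(\cdot,p^*)$, it is exactly the reverse-process parametrization of $p^*$. Likewise $f_{p_\theta}$ parametrizes the true time reversal of the forward process started from $p_\theta$. Both reverse CTMCs start from the same terminal law (mask or uniform, independent of the initial distribution). By Girsanov for CTMCs (the SEDD/MDLM ELBO derivation), the KL between the two reverse path measures is the integrated divergence of the rates. That integrated divergence is precisely the weighted gap above, given the ELBO weighting $\lambda_t$ (I would check the weighting matches; if it does not, I would bound it using positivity of $\lambda_t$ and the ELBO weights). So $\LC_{\text{IDLM}}(\theta)\ge \mathcal{D}_{\text{KL}}(\mathbb{P}^{p_\theta}\|\mathbb{P}^{f^*})$ on path space. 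The data-processing inequality under marginalization to time $0$ then gives $\ge \mathcal{D}_{\text{KL}}(p_\theta\|p^*)$, because the time-$0$ marginal of $\mathbb{P}^{p_\theta}$ is $p_\theta$ and that of $\mathbb{P}^{f^*}$ is $p^*$. Since the teacher is exact, there is no discretization issue. The same argument goes through at sequence level, because the factorized forward process makes the loss a sum over positions with the same Girsanov identity.

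For the equivalence: if $p_\theta=p^*$ then $f_{p_\theta}=f^*$ and both terms coincide, so the loss is $0$. Conversely, if the loss is $0$ then $\mathcal{D}_{\text{KL}}(p_\theta\|p^*)\le0$, which forces $p_\theta=p^*$. The main obstacle is the rigorous Girsanov/ELBO identification for each parametrization, most delicately for Duo in the $\tau\to0^+$ limit. There I expect to argue via the limiting hard-uniform loss and continuity rather than at finite $\tau$. I also need to handle the support issue where $f^*$ assigns zero mass, where the KL may be $+\infty$; there the inequality holds trivially.
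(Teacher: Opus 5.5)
Your route is the same as the paper's. You identify the IDLM gap with the KL between reverse path measures via the CTMC rate-divergence (Girsanov) formula, then marginalize to the clean sample; your data-processing step is the paper's chain-rule step, and the converse is the same. The difference is only the direction of the identity. You insert the inner minimizer into a Bregman gap and match it to the rate integral. The paper instead starts from the path KL and uses $\phi(r)=r\log r-r+1=\max_l\{rl+1-e^l\}$ to rewrite it as $\max_f\{\LC_{\mathrm{SEDD}}(f^*,p_\theta)-\LC_{\mathrm{SEDD}}(f,p_\theta)\}$. It then transfers MDLM and UDLM/Duo to SEDD through a gap-preservation lemma (the losses agree up to $f$-independent constants), so it never needs the minimizer in clean-token coordinates.

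That is exactly where your proposal has a wrong step. For the uniform process (UDLM, hence Duo as $\tau\to0^+$), the inner minimizer is not the posterior mean $\EE_{p_\theta}[x_0\mid x_t]$. A direct computation with $\kappa_t=\sigma_t/q_t(x_t\mid x_0)$ gives
\[
g_{\mathrm{Duo}}(x_t,x_0,f)=\lambda_t\sigma_t\sum_{y\neq x_t}\Bigl(\frac{q_t(y\mid f)}{q_t(x_t\mid f)}-\frac{q_t(y\mid x_0)}{q_t(x_t\mid x_0)}\log\frac{q_t(y\mid f)}{q_t(x_t\mid f)}\Bigr)+(\text{terms independent of } f).
\]
This is SEDD score entropy in the ratio coordinates $q_t(y\mid f)/q_t(x_t\mid f)$. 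Hence the minimizer is the $f$ with $q_t(\cdot\mid f)=p^\theta_t$; for a single token this is $f=p_\theta$ itself, independent of $x_t$. The posterior mean gives the wrong ratios: for $N=2$ and uniform $p_\theta$ it yields $(1-\alpha_t^2)/(1+\alpha_t^2)$ instead of $1$. So your two descriptions of $f_{p_\theta}$ (posterior mean, and parametrization of the true time reversal) are incompatible for Duo. A gap computed as a KL between the posterior mean and the teacher would not equal the path KL. Defining $f_{p_\theta}$ as the actual minimizer, or equivalently reducing to SEDD as the paper does, repairs this.

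Separately, your fallback for non-ELBO weights does not work. Positivity of $\lambda_t$ preserves the equivalence $\LC_{\mathrm{IDLM}}=0\iff p_\theta=p^*$, but it does not give $\LC_{\mathrm{IDLM}}\ge\DC_{\mathrm{KL}}(p_\theta\|p^*)$. That inequality holds when $\lambda_t$ is the ELBO weighting (e.g.\ $-\alpha_t'/(1-\alpha_t)$ for MDLM) or dominates it. The paper assumes this implicitly when it equates MDLM with SEDD up to constants.
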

    \vspace{-5mm}
    
    We give the proof and discuss the sequence-level case in Appendix~\ref{app: proof of the theorem}.

\ecoparagraph{Probabilistic intuition.}
    The IDLM loss has a simple probabilistic interpretation: it matches full diffusion trajectories. If $\PP^\theta$ and $\PP^*$ denote the trajectory distributions induced by $p_\theta$ and $p^*$, respectively, then the proof shows
    \[
        \LC_{\mathrm{IDLM}}(\theta)
        =
        \DC_{\mathrm{KL}}(\PP^\theta\|\PP^*).
    \]
    Because the final clean sample is part of the trajectory, this KL controls $\DC_{\mathrm{KL}}(p_\theta\|p^*)$, which is exactly the inequality in Theorem~\ref{thm:unique solution}. Hence zero IDLM loss can occur only when the student recovers the data distribution.


\vspace{-2mm}
\subsection{Addressing Discrete Nonsmoothness}
\label{sec:practical extension of idlm}
\vspace{-2mm}
    We now explain the main optimization problems that arise when applying IDLM in a discrete domain. For clarity, we use MDLM as an example because the absorbing process gives the simplest setting in which these bottlenecks can be seen explicitly. The same problems also arise for uniform diffusion. Appendix~\ref{app:mdlm-inverse-distillation} expands the MDLM-specific inverse distillation simplification used below. Appendix~\ref{app:uniform-inverse-distillation} gives the corresponding uniform-diffusion specialization. Specializing the IDLM objective to the MDLM loss gives
    \begin{equation}
    \label{eq:draft_idlm_mdlm}
    \LC_{\mathrm{IDLM-MDLM}}(\theta)
    =
    \LC_{\text{MDLM}}(f^*,p_\theta)
    -
    \min_f \LC_{\text{MDLM}}(f,p_\theta). \nonumber
    \end{equation}
    Here we use the same MDLM loss as in the Equation~\eqref{eq:mdlm loss}, replacing the data distribution $p^*$ by the student distribution $p_\theta$. This is the quantity through which the distribution $p_\theta$ must be optimized:
    \begin{equation}
    \label{eq:draft_mdlm_distribution_loss}
    \LC_{\text{MDLM}}(f,p_\theta)
    \!\!\vcentcolon=\!\!
    -
    \EE_{p_\theta(x_0),\,t,\,q_t}
    \bigl[
    \lambda_t
    \langle x_0,\log f(x_t,t)\rangle
    \bigr].
    \end{equation}
    In practice, we parametrize ditribution $p_{\theta}$ by \textit{student} model ${G_{\theta}\colon \mathcal{E} \to \VC}$, which transforms latent variable $\epsilon \in \mathcal{E}$ to a discrete output $x_0 \in \VC$. The latent space $\mathcal{E}$ is equipped with a tractable prior distribution $p_{\mathcal{E}}$, inducing a distribution $p_{\theta}$ over $\VC$ via the mapping $G_{\theta}$:
    \[
    \epsilon\sim p_{\mathcal E},
    \qquad
    x_0=G_{\theta}(\epsilon).
    \]
    In the equations below, we highlight with \textcolor{MyRed}{red} quantities whose values depend on the generator parameter \textcolor{MyRed}{$\theta$}. Substituting the generator sample into~\eqref{eq:draft_mdlm_distribution_loss} gives the loss:
    \begin{equation}
    \label{eq:draft_mdlm_generator_loss}
    \LC_{\text{MDLM}}(f,p_\theta)
    =
    -
    \EE_{\epsilon,\,t,\,
    \textcolor{MyRed}{x_t}\sim q_t(\cdot\mid G_{\textcolor{MyRed}{\theta}}(\epsilon))}
    \bigl[
    \lambda_t
    \langle
    G_{\textcolor{MyRed}{\theta}}(\epsilon),
    \log f(\textcolor{MyRed}{x_t},t)
    \rangle
    \bigr]. \nonumber
    \end{equation}

    This formulation reveals two gradient bottlenecks. First, if $G_{\textcolor{MyRed}{\theta}}$ is forced to output a one-hot token in $\VC$, then the generator output is not a smooth function of its logits. Second, $\textcolor{MyRed}{x_t}$ is sampled from $q_t(\cdot\mid G_{\textcolor{MyRed}{\theta}}(\epsilon))$, hence the noised token depends on $\textcolor{MyRed}{\theta}$ through the probabilities of this categorical distribution. Backpropagating through the objective would therefore require passing gradients through the sampling of $\textcolor{MyRed}{x_t}$. This is not handled by automatic differentiation, and common alternatives such as hard Gumbel-Softmax can be unstable.

    \ecoparagraph{Simplex relaxation.}
    \label{sec:simplex relaxation}
    To remove the first bottleneck, we relax the generator range from $\VC$ to the simplex $\Delta$:
    \[
    G_{\textcolor{MyRed}{\theta}}(\epsilon)\in\Delta.
    \]
    This relaxation is compatible with both ingredients of the MDLM loss: the cross-entropy term remains a valid soft-label loss, and the absorbing conditional distribution $q_t(\cdot~\mid ~G_{\textcolor{MyRed}{\theta}}(\epsilon))$ remains a categorical distribution when $G_{\textcolor{MyRed}{\theta}}(\epsilon)\in\Delta$.
    Figures~\ref{fig:masked-forward-kernel},~\ref{fig:mdlm-simplex-relaxation}, and~\ref{fig:mdlm-token-loss} in Appendix~\ref{app:mdlm-inverse-distillation} visualize why this remains a valid update and how it changes a one-hot token loss into a weighted token loss.
    Appendix~\ref{app:uniform-inverse-distillation} uses the same relaxation for uniform diffusion.

    \ecoparagraph{MDLM: \textsc{subs} parameterization.}
    For MDLM, the \textsc{subs} parameterization copies already unmasked tokens: if $x_t\neq m$, then $f^*(x_t,t)=f(x_t,t)=x_t$ for any model. Consequently, the corresponding token contribution to $\LC_{\mathrm{IDLM-MDLM}}$ is zero. Hence the IDLM update is nonzero only when the noised token is the mask token, $x_t=m$. The absorbing-process details are given in Appendix~\ref{app:absorbing process}. Appendix~\ref{app:mdlm-inverse-distillation} and Figure~\ref{fig:mdlm-subs-cancellation} show this mask-only cancellation explicitly. Therefore, for the generator update, the MDLM loss reduces to
    \begin{equation}
    \label{eq:draft_mdlm_mask_only}
    \LC_{\text{MDLM}}(f,p_\theta)
    =
    -
    \EE_{\epsilon,t}
    \bigl[
    (1-\alpha_t)\lambda_t
    \langle
    G_{\textcolor{MyRed}{\theta}}(\epsilon),
    \log f(m,t)
    \rangle
    \bigr]. \nonumber
    \end{equation}
    The sampled token $x_t$ no longer appears in the generator update. The remaining dependence on $\textcolor{MyRed}{\theta}$ is only through the differentiable simplex-valued output $G_{\textcolor{MyRed}{\theta}}(\epsilon)$. Appendix~\ref{app:mdlm-inverse-distillation} explains why this can be viewed as an implicit stop-gradient through the intermediate state for masked diffusion.

    \ecoparagraph{Duo: Gaussian reparameterization.}
    Duo provides a differentiable path by introducing an auxiliary Gaussian variable $\xi$ and using the relaxed state
    \begin{equation}
    \label{eq:draft_duo_soft_state}
    \textcolor{MyRed}{x_t}
    =
    \softmax\!\left(
    \frac{
    \tilde{\alpha}_t G_{\textcolor{MyRed}{\theta}}(\epsilon)
    +
    \sqrt{1-\tilde{\alpha}_t^2}\,\xi
    }{\tau}
    \right),
    \qquad
    \xi\sim\NC(0,I), \nonumber
    \end{equation}
    where the \textcolor{MyRed}{red} terms indicate the path through the generator parameters. The map $G_{\textcolor{MyRed}{\theta}}(\epsilon)\mapsto \textcolor{MyRed}{x_t}$ is differentiable. This is possible in the Duo parameterization because the denoiser is trained on simplex-valued states rather than only on discrete one-hot tokens.
    More details are given in Appendix~\ref{app:uniform-inverse-distillation}.

\vspace{-1mm}
\subsection{Loss Intuition}
\label{sec:loss intuition}
\vspace{-1mm}
\ecoparagraph{Alternating Optimization.}
    The IDLM objective contains an inner optimization over the fake diffusion model $f$. In practice, we approximate this nested problem by
    alternating two updates, following prior work~\citep{yin2024one,
    yin2024improved, zhou2024score, gushchin2025inverse,
    kornilov2025universal}.

    First, we fix the student generator $G_{\theta}$ and train
    $f$ on samples from the current student distribution
    $p_{\theta}$. This approximates the inner minimization in the IDLM
    objective:
    \begin{gather}
    \label{eq:update fake distribution}
      \text{\textit{Update:} } f \text{;} \quad
      \text{\textit{Fix:} } \theta  \\
      \LC_{f} = \LC(f, p_{\theta}). \nonumber
    \end{gather}

    Second, we fix $f$ and update $G_{\theta}$ using the IDLM gap:
    \begin{gather}
    \label{eq:update generator distribution}
      \text{\textit{Update:} } \theta \text{;} \quad
      \text{\textit{Fix:} } f \\
      \LC_{\mathrm{IDLM}}(\theta)
      =
      \LC(f^*, p_{\theta})
      -
      \LC(f, p_{\theta}). \nonumber
    \end{gather}

    These two updates are summarized visually in Figure~\ref{fig:method},
    which depicts the fake model and the student generator as the two
    learnable components of the IDLM training loop.

\ecoparagraph{IDLM intuition.}
    For MDLM, the \textsc{subs} parameterization makes the generator update
    especially transparent: only masked positions contribute. Substituting the
    mask-only MDLM loss into the IDLM gap gives
    \[
    \LC_{\mathrm{IDLM-MDLM}}(\theta)
    =
    -\EE_{\epsilon,t}\!
    \left[
    \lambda_t\big\langle G_\theta(\epsilon),a_t\big\rangle
    \right].
    \]
    Here \(a_t=\log f^*(m,t)-\log f(m,t)\), and \(f\) is the current fake model. Thus
    each masked position gives the
    generator a teacher-over-fake advantage vector \(a_t\). If
    \(G_\theta(\epsilon)=\softmax(z_\theta)\), then for a fixed \((\epsilon,t)\)
    \[
    \frac{\partial \LC_{\mathrm{IDLM-MDLM}}}{\partial z_{\theta,i}}
    =
    -\lambda_t G_{\theta,i}(\epsilon)
    \Big(a_{t,i}-\langle G_\theta(\epsilon),a_t\rangle\Big).
    \]
    Hence gradient descent increases token \(i\)'s logit exactly when its advantage
    is above the generator's current average advantage. In words, the student does
    not simply chase the teacher's most likely token, it promotes tokens that the
    teacher prefers more than the fake model does. When the fake model catches up
    to the teacher on student samples, this relative advantage vanishes.

    \begin{center}
    {\small
    \setlength{\tabcolsep}{2.5pt}
    \begin{tabular}{@{}>{\raggedright\arraybackslash}p{0.31\linewidth}>{\raggedright\arraybackslash}p{0.23\linewidth}>{\raggedright\arraybackslash}p{0.37\linewidth}@{}}
    \toprule
    Case & Token behavior & Intuition \\
    \midrule
    \(a_{t,i}>\langle G_\theta,a_t\rangle\)
    & logit increases
    & $f^*$ prefers token \(i\) more than $f$ does \\
    \(a_{t,i}<\langle G_\theta,a_t\rangle\)
    & logit decreases
    & token \(i\) is below the current advantage \\
    \(a_t\approx 0\)
    & no relative signal
    & $f$ has matched $f^*$ on student samples \\
    \bottomrule
    \end{tabular}
    }
    \end{center}

    For uniform diffusion, the same idea applies to a noised transition rather
    than only to a masked clean-token prediction. The local advantage becomes
    \[
    \begin{aligned}
    a_t
    =
    g_{\mathrm{Duo}}(x_t,G_{\theta}(\cdot),f^*(x_t,t))-
    g_{\mathrm{Duo}}(x_t,G_{\theta}(\cdot),f(x_t,t)),
    \end{aligned}
    \]
    Appendix~\ref{app:uniform process} gives \(g_{\mathrm{Duo}}\), and Appendix~\ref{app:uniform-inverse-distillation} expands the advantage.
    Since uniform diffusion can revise tokens repeatedly, this advantage guides
    both which token to choose and when the reverse path should jump.

\ecoparagraph{Why few-step distillation is difficult.}
    Standard discrete diffusion samplers usually draw the next state
    conditionally independently across output positions, even though each
    factor can attend to the whole current sequence. With many reverse steps,
    the composition of these local transitions can still represent a rich
    sequence distribution. In the few-step regime, however, each reverse step
    must replace many small teacher updates, making this factorization a
    bottleneck.
    IDLM addresses this bottleneck by learning a mixture of sequence-level
    distributions, as described next.

\begin{figure*}[!t]
\centering
\begin{minipage}[t]{0.49\linewidth}
\idlmalgcaption{alg:idlm}{IDLM: training.}{One branch is active per update: fit the fake model or distill the student.}
\begin{lstlisting}[language=IDLMPseudocode, escapechar=`, basicstyle=\ttfamily\scriptsize, breaklines=true, gobble=4]
    `\algcom{teacher \algid{f*}, fake model \algid{f}, and student generator \algid{net}}`
    `\algcom{mode in \{\algstr{"fake"}, \algstr{"student"}\} selects the updated component}`

    x_0 = `\algfn{sample_data}`(p_data)
    t = `\algfn{uniform}`(0, 1)
    x_t = `\algfn{corrupt}`(x_0, t)  `\algcommath{$q_t(\cdot\mid x_0)$}`
    x_hat = `\algid{net}`(x_t, t)

    `\algkw{if}` mode == `\algstr{"fake"}`:
        loss = `\algfn{L_seq}`(`\algid{f}`, x_hat)
        `\algfn{update}`(`\algid{f}`, loss)
    `\algkw{else}`:
        loss = `\algfn{L_seq}`(`\algid{f*}`, x_hat) - `\algfn{L_seq}`(`\algid{f}`, x_hat)
        `\algfn{update}`(`\algid{net}`, loss)
    \end{lstlisting}
\idlmalgbottom
\end{minipage}\hfill
\begin{minipage}[t]{0.49\linewidth}
\idlmalgcaption{alg:idlm-sampling}{IDLM: few-step sampling.}{The reverse transition is chosen by the diffusion process.}
\begin{lstlisting}[language=IDLMPseudocode, escapechar=`, basicstyle=\ttfamily\scriptsize, breaklines=true, gobble=4]
    `\algcommath{grid = [$t_K$, ..., $t_0$]}`

    `\algcommath{masked or uniform terminal distribution}`
    x_t = `\algfn{sample_terminal}`(process)
    `\algkw{for}` k `\algkw{in}` `\algfn{range}`(K, 0, -1):
        t = grid[k]
        t_prev = grid[k - 1]
        x_hat = `\algid{net}`(x_t, t)

        `\algcommath{masked or uniform reverse step}`
        x_prev = `\algfn{sample_reverse}`(x_t, x_hat, t, t_prev)
        x_t = x_prev
    `\algkw{return}` x_t
    \end{lstlisting}
\idlmalgbottom
\end{minipage}
\vspace{-2mm}
\end{figure*}

\ecoparagraph{Sequence-level mixture.}
    A full distribution over $\VC^L$ would require $N^L$
    probabilities, which is intractable to parameterize directly.
    The key is that the student uses a structured mixture: it samples a
    shared variable $\epsilon\sim p_{\mathcal E}$ and outputs
    $G_\theta^{1:L}(\epsilon)
    =(G_\theta^1(\epsilon),\ldots,G_\theta^L(\epsilon))\in\Delta^L$.
    For this fixed $\epsilon$, the component may factorize over positions,
    \[
    \begin{aligned}
        p_\theta(x_0^{1:L}\mid\epsilon)
        &=
        \prod_{i=1}^L
        \mathrm{Cat}\!\left(x_0^i;G_\theta^i(\epsilon)\right),\\
        p_\theta(x_0^{1:L})
        &=
        \mathbb{E}_{\epsilon\sim p_{\mathcal E}}
        \left[
        p_\theta(x_0^{1:L}\mid\epsilon)
        \right].
    \end{aligned}
    \]
    The product describes one component, while the expectation mixes many
    such components. Since the same $\epsilon$ controls all positions, a
    component can encode a sentence-level choice. Although we use
    simplex-valued outputs for differentiable training, in theory the
    generator should produce vocabulary tokens. Hence, during training, each
    $G_\theta^i(\epsilon)$ should concentrate near a one-hot token, so the
    component degenerates to a point mass on one full sequence. Thus the
    generator represents text as a mixture over coherent sequence-level atoms,
    allowing $p_\theta$ to capture semantic correlations without explicitly
    enumerating the full sequence space.
    A similar latent-mixture intuition appears in VADD~\citep{xie2026variational}
    and Di4C$^2$~\citep{hayakawa2025distillation}, but with different
    objectives.

\vspace{-1mm}
\subsection{Technical Aspects}
\label{sec:technical aspects}
\vspace{-1mm}
    This section specifies what latent space we use in practice and clarifies the remaining implementation details. The training update is summarized in Algorithm~\ref{alg:idlm}, the few-step sampler in Algorithm~\ref{alg:idlm-sampling}, and the full pipeline in Figure~\ref{fig:method}.
    
\ecoparagraph{Model initialization.}
    We initialize both the student generator $G_\theta$ and the auxiliary fake model $f$ from the pretrained teacher $f^*$. Since the teacher has already learned the semantic structure of the token space, this provides a strong initialization and lets the student refine the teacher's predictions rather than learn them from scratch.

\ecoparagraph{Multistep distillation.}
    We found that directly distilling the model into a one-step generator is difficult in practice, as the student must map terminal noise to a clean sample in a single pass. We therefore allow $G_\theta$ to produce samples in the few-step regime. Concretely, we set the generator latent input to $\epsilon=(x_t,t)$. During training, these inputs are obtained by corrupting real data:
    \[
        x_0\sim p^*(x_0),\qquad
        t\sim \UC[0,1],\qquad
        x_t\sim q_t(\cdot\mid x_0),
    \]
    and then feeding $(x_t,t)$ to the student. This multistep parameterization allows the generator to accept intermediate noised states as input, so at inference we can use the same few-step reverse sampling procedure as the teacher model, see Algorithm~\ref{alg:idlm-sampling}.

\ecoparagraph{Sequence-level loss.}
    All practical updates use the sequence-level loss introduced in the (\wasyparagraph\ref{sec:unified notations sequence loss}). That is, the token-level objective $\LC(f,p)$ is replaced by $\LC^{1:L}(f,p)$. The forward corruption factorizes across positions, but the teacher, fake model, and student are evaluated on the full noised sequence.

\begin{figure*}[t]
    \centering
    \begin{minipage}[t]{0.47\textwidth}
      \centering
      \includegraphics[width=0.85\linewidth]{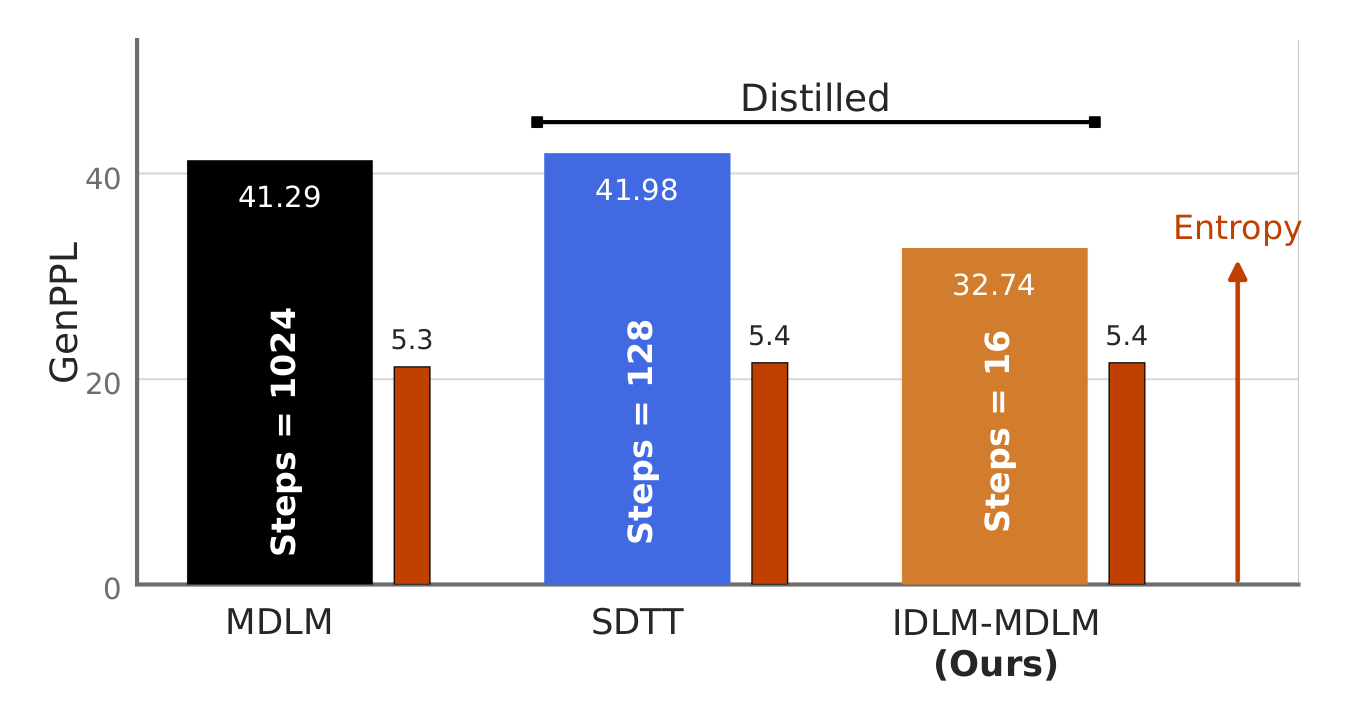}
    \end{minipage}\hfill
    \begin{minipage}[t]{0.47\textwidth}
      \centering
      \includegraphics[width=0.85\linewidth]{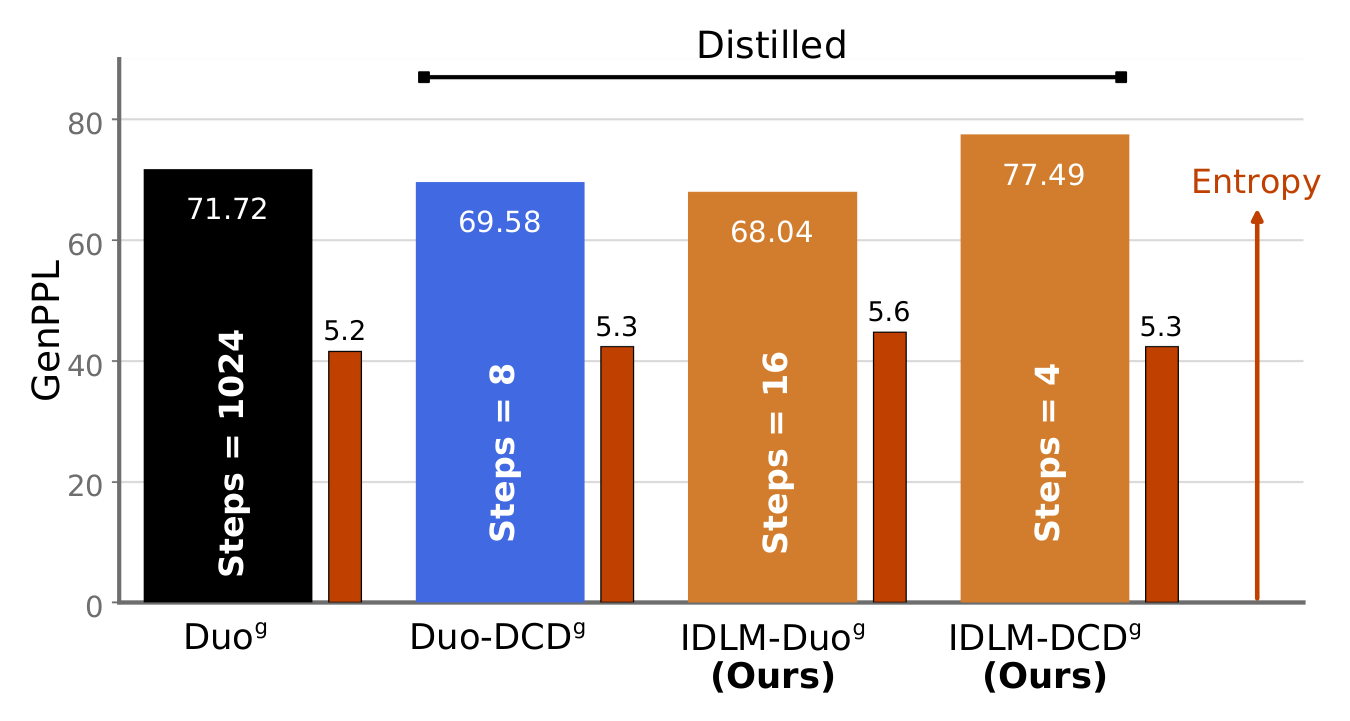}
    \end{minipage}
    \vspace{1mm}

    \begin{minipage}[t]{0.49\textwidth}
      \centering
      \textbf{Masked Diffusion}\\[0.5mm]
      \scriptsize
      \renewcommand{\arraystretch}{1.05}
      \setlength{\tabcolsep}{3pt}
      \resizebox{\linewidth}{!}{%
      \begin{tabular}{llccccc}
          \toprule
          && {\scriptsize Steps} & {\scriptsize GenPPL $\downarrow$} & {\scriptsize MAUVE $\uparrow$} & {\scriptsize Entropy $\uparrow$} & {\scriptsize GM $\downarrow$} \\
          \midrule
          & MDLM (Teacher)~\citep{sahoo2024simple} & 1024 & 41.29 & 0.89 $\pm$ 0.01 & 5.28 & 0.672 \\
          \midrule
          & SDTT~\citep{deschenaux2024beyond} & \multirow{5}{*}{32} & 41.45 & 0.91 $\pm$ 0.03 & \textbf{5.31} & -- \\
          & SDTT + Di4C$^2$~\citep{hayakawa2025distillation} & & 31.51 & \textbf{0.93} $\pm$ 0.02 & 5.28 & -- \\
          & DiDi-Instruct~\citep{zheng2026ultrafast} & & 24.97 & -- & 5.18 & -- \\
          & D-MMD~\citep{hoogeboom2026beyond} & & 72.1 & -- & \textbf{5.57} & \textbf{0.578} \\
          & IDLM-MDLM (\textbf{Ours}) & & \textbf{20.45} $\pm$ 1.88 & 0.91 $\pm$ 0.01 & 5.23 $\pm$ 0.06 & 0.629 \\
          \midrule
          & SDTT~\citep{deschenaux2024beyond} & \multirow{5}{*}{16} & 61.34 & 0.88 $\pm$ 0.03 & 5.36 & -- \\
          & SDTT + Di4C$^2$~\citep{hayakawa2025distillation} & & 44.82 & 0.90 $\pm$ 0.01 & 5.34 & -- \\
          & DiDi-Instruct~\citep{zheng2026ultrafast} & & 38.19 & -- & 5.21 & -- \\
          & D-MMD~\citep{hoogeboom2026beyond} & & 67.7 & -- & \textbf{5.57} & 0.558 \\
          & IDLM-MDLM (\textbf{Ours}) & & \textbf{32.75} $\pm$ 1.62 & \textbf{0.93} $\pm$ 0.01 & 5.42 $\pm$ 0.06 & \textbf{0.527} \\
          \midrule
          & SDTT~\citep{deschenaux2024beyond} & \multirow{4}{*}{8} & 121.31 & 0.89 $\pm$ 0.03 & 5.39 & -- \\
          & SDTT + Di4C$^2$~\citep{hayakawa2025distillation} & & 84.15 & 0.90 $\pm$ 0.04 & 5.38 & -- \\
          & DiDi-Instruct~\citep{zheng2026ultrafast} & & \textbf{62.24} & -- & 5.17 & -- \\
          & IDLM-MDLM (\textbf{Ours}) & & 79.75 $\pm$ 5.61 & \textbf{0.91} $\pm$ 0.01 & \textbf{5.61} $\pm$ 0.04 & \textbf{0.835} \\
          \bottomrule
      \end{tabular}}
    \end{minipage}\hfill
    \begin{minipage}[t]{0.49\textwidth}
      \centering
      \textbf{Uniform \& Continuous Diffusion}\\[0.5mm]
      \scriptsize
      \renewcommand{\arraystretch}{0.90}
      \setlength{\tabcolsep}{7pt}
      \resizebox{\linewidth}{!}{%
      \begin{tabular}{lcccc}
          \toprule
           & {\scriptsize Steps} & {\scriptsize GenPPL $\downarrow$} & {\scriptsize MAUVE $\uparrow$} & {\scriptsize Entropy $\uparrow$} \\
          \midrule
          FLM~\citep{lee2026flow} & \multirow{2}{*}{1024} & \textbf{62.23} & - & \textbf{5.33} \\
          Duo$^{\mathrm{g}}$ (Teacher)~\citep{sahoo2025the} &  & 71.72 & 0.90 $\pm$ 0.02 & 5.22 \\
          \midrule
          ELF~\citep{hu2026elf} & \multirow{4}{*}{32} & \textbf{24.08} $\pm$ 0.16 & -- & 5.14 \\
          FMLM~\citep{lee2026flow} &  & 45.09  & -- & 5.25 \\
          Duo-DCD$^{\mathrm{g}}$~\citep{sahoo2025the} & & 46.31 & \textbf{0.96} $\pm$ 0.01 & \textbf{5.38} \\
          IDLM-DCD$^{\mathrm{g}}$ (\textbf{Ours}) & & 39.34 $\pm$ 8.51 & 0.92 $\pm$ 0.01 & 5.35 $\pm$ 0.11 \\
          \midrule
          ELF~\citep{hu2026elf} & \multirow{4}{*}{16} & \textbf{33.66} $\pm$ 1.09 & -- & 5.16 \\
          FMLM~\citep{lee2026flow} &  & 63.63  & -- & 5.29 \\
          Duo-DCD$^{\mathrm{g}}$~\citep{sahoo2025the} & & 54.11 & \textbf{0.96} $\pm$ 0.01 & 5.37 \\
          IDLM-DCD$^{\mathrm{g}}$ (\textbf{Ours}) & & 43.36 $\pm$ 3.76 & 0.94 $\pm$ 0.01 & \textbf{5.41} $\pm$ 0.04 \\
          \midrule
          ELF~\citep{hu2026elf} & \multirow{4}{*}{8} & 67.32 $\pm$ 2.25 & -- & 5.14 \\
          FMLM~\citep{lee2026flow} &  & 86.5  & -- & 5.36  \\
          Duo-DCD$^{\mathrm{g}}$~\citep{sahoo2025the} & & 69.58 & 0.93 $\pm$ 0.01 & 5.30 \\
          IDLM-DCD$^{\mathrm{g}}$ (\textbf{Ours}) & & \textbf{53.61} $\pm$ 3.08 & \textbf{0.96} $\pm$ 0.01 & \textbf{5.41} $\pm$ 0.03 \\
          \midrule
          FMLM~\citep{lee2026flow} & \multirow{3}{*}{4} & 111.31 & -- & 5.26 \\
          Duo-DCD$^{\mathrm{g}}$~\citep{sahoo2025the} & & 96.24 & 0.69 $\pm$ 0.02 & 4.93 \\
          IDLM-DCD$^{\mathrm{g}}$ (\textbf{Ours}) & & \textbf{77.47} $\pm$ 4.43 & \textbf{0.89} $\pm$ 0.01 & \textbf{5.28} $\pm$ 0.06 \\
          \bottomrule
      \end{tabular}}
    \end{minipage}
    \caption{\textbf{OWT distillation.} IDLM preserves the quality/diversity tradeoff while using far fewer reverse steps for both masked and uniform diffusion. The strongest gains appear in the low-step regime.}
    \label{fig:owt-main-results}
    \vspace{-3mm}
\end{figure*}

\begin{figure}[t]
\centering
\includegraphics[width=\linewidth]{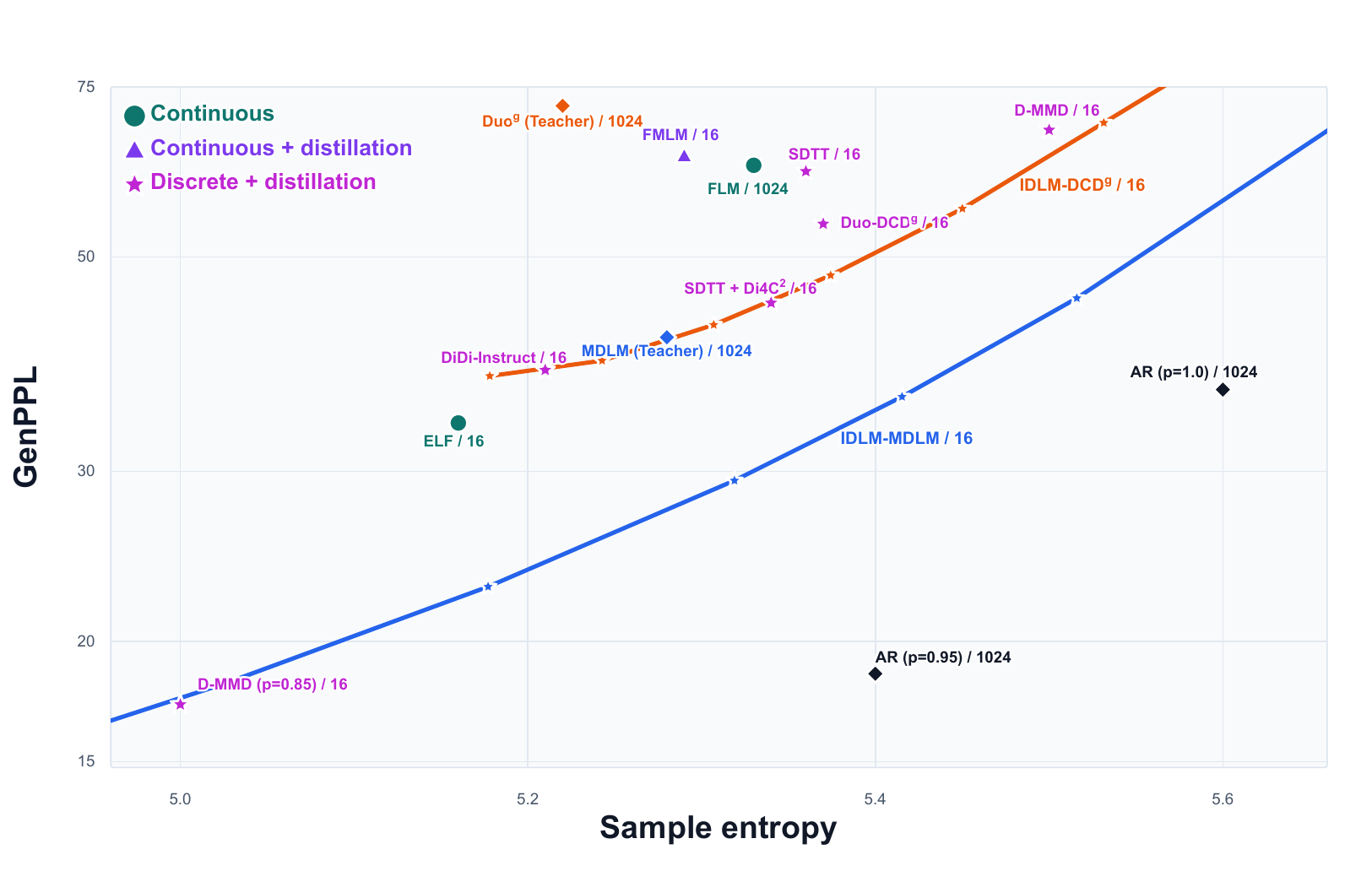}
\caption{\textbf{GenPPL--Entropy tradeoff on OWT.} Each point denotes a Method/Sampling steps. Lower GenPPL and higher entropy are preferred.}
\label{fig:owt-entropy-frontier}
\end{figure}

\begin{figure*}[t]
\centering
\begin{tcolorbox}[
    enhanced,
    colframe=owtgreen,
    colback=green!3!white,
    boxrule=0.65pt,
    arc=4pt,
    left=5pt,
    right=5pt,
    top=3pt,
    bottom=3pt,
    width=\textwidth,
    before upper={\setlength{\parindent}{0pt}}]
\noindent
\begin{minipage}[c]{0.04\linewidth}
\centering
\begin{tikzpicture}[baseline=-0.5ex, x=1cm, y=1cm]
    \filldraw[fill=owtgreen, draw=owtgreen, rounded corners=1.5pt] (0,0) rectangle (0.31,0.31);
    \draw[white, line width=0.95pt, line join=round, rounded corners=0.5pt]
        (0.08,0.10) -- (0.17,0.24) -- (0.25,0.20) -- (0.15,0.06) -- cycle;
    \draw[white, line width=0.8pt, line join=round]
        (0.08,0.10) -- (0.06,0.03) -- (0.15,0.06);
    \draw[white, line width=0.65pt, line cap=round] (0.18,0.23) -- (0.24,0.19);
    \draw[white, line width=0.85pt, line cap=round]
        (0.14,0.05) .. controls (0.18,0.01) and (0.22,0.06) .. (0.27,0.05);
\end{tikzpicture}
\end{minipage}%
\begin{minipage}[c]{0.94\linewidth}
{\normalsize\bfseries\textcolor{owtgreen}{Unconditional Generation (OpenWebText)}}
\end{minipage}

\vspace{-0.9mm}
\noindent\begin{tikzpicture}
\draw[owtgreen, densely dashed, line width=0.5pt] (0,0) -- (\linewidth,0);
\end{tikzpicture}
\vspace{-2.35mm}

\begin{minipage}[t]{0.485\linewidth}
{\footnotesize\bfseries\textcolor{owtgreen}{IDLM-MDLM / 16 Sampling steps}}\\[-0.2mm]
{\scriptsize\bfseries\textcolor{owtgreen}{Gen. PPL:} 34.1 \quad \textcolor{owtgreen}{H:} 5.43}\\[-0.1mm]
{\scriptsize
\ldots{} After this short presentation, let me briefly describe the layout of the grid. I initially intended to drag the page away from the phone screen and place it onto the grid platform. \ldots{}}
\end{minipage}
\hfill
\begin{minipage}[t]{0.485\linewidth}
{\footnotesize\bfseries\textcolor{owtgreen}{IDLM-DCD$^g$ / 4 Sampling steps}}\\[-0.2mm]
{\scriptsize\bfseries\textcolor{owtgreen}{Gen. PPL:} 77.5 \quad \textcolor{owtgreen}{H:} 5.28}\\[-0.1mm]
{\scriptsize
\ldots{} The thing, programs are a language, and structures are highly hierarchical. With the complexity of a programming language, the underlying structure grows, and you improve that structure. \ldots{}}
\end{minipage}
\end{tcolorbox}
\vspace{-1mm}
\begin{tcolorbox}[
    enhanced,
    colframe=tinyorange,
    colback=orange!3!white,
    boxrule=0.65pt,
    arc=4pt,
    left=5pt,
    right=5pt,
    top=3pt,
    bottom=3pt,
    width=\textwidth,
    before upper={\setlength{\parindent}{0pt}}]
\noindent
\begin{minipage}[c]{0.04\linewidth}
\centering
\begin{tikzpicture}[baseline=-0.5ex, x=1cm, y=1cm]
    \filldraw[fill=tinyorange, draw=tinyorange, rounded corners=1.5pt] (0,0) rectangle (0.31,0.31);
    \draw[white, line width=0.85pt, rounded corners=0.35pt]
        (0.085,0.055) -- (0.085,0.255) -- (0.205,0.255) -- (0.255,0.205) -- (0.255,0.055) -- cycle;
    \draw[white, line width=0.65pt] (0.205,0.255) -- (0.205,0.205) -- (0.255,0.205);
    \draw[white, line width=0.65pt, line cap=round] (0.12,0.175) -- (0.22,0.175);
    \draw[white, line width=0.65pt, line cap=round] (0.12,0.135) -- (0.22,0.135);
    \draw[white, line width=0.65pt, line cap=round] (0.12,0.095) -- (0.19,0.095);
\end{tikzpicture}
\end{minipage}%
\begin{minipage}[c]{0.94\linewidth}
{\normalsize\bfseries\textcolor{tinyorange}{Conditional Generation (TinyGSM)}}
\end{minipage}

\vspace{-0.9mm}
\noindent\begin{tikzpicture}
\draw[tinyorange, densely dashed, line width=0.5pt] (0,0) -- (\linewidth,0);
\end{tikzpicture}
\vspace{-2.35mm}

\begin{minipage}[t]{0.485\linewidth}
{\footnotesize\bfseries\textcolor{tinyorange}{IDLM-MDLM / 128 Sampling steps}}\\[-0.2mm]
{\scriptsize\bfseries\textcolor{tinyorange}{Accuracy:} 19.9\%}\\[-0.1mm]
{\scriptsize\bfseries Prompt.}
{\scriptsize A baseball coach buys 9 baseballs for \$3 each, and a basketball coach buys 8 basketballs for \$14 each. How much more did the basketball coach spend?}\\[-0.1mm]
{\scriptsize\bfseries Generation.}\\[-2.4mm]
\begin{lstlisting}[language=Python,basicstyle=\ttfamily\tiny,columns=fullflexible,keepspaces=true,breaklines=true,aboveskip=0pt,belowskip=0pt,xleftmargin=0.08\linewidth,xrightmargin=0.08\linewidth]
def simple_math_problem() -> int:
    baseball_cost = 9 * 3
    basketball_cost = 8 * 14
    difference = basketball_cost - baseball_cost
    result = difference
    return result
\end{lstlisting}
\end{minipage}
\hfill
\begin{minipage}[t]{0.485\linewidth}
{\footnotesize\bfseries\textcolor{tinyorange}{IDLM-Duo / 64 Sampling steps}}\\[-0.2mm]
{\scriptsize\bfseries\textcolor{tinyorange}{Accuracy:} 19.0\%}\\[-0.1mm]
{\scriptsize\bfseries Prompt.}
{\scriptsize The red car is 40\% cheaper than the blue car. The price of the blue car is \$100. How much do both cars cost?}\\[-0.1mm]
{\scriptsize\bfseries Generation.}\\[-2.4mm]
\begin{lstlisting}[language=Python,basicstyle=\ttfamily\tiny,columns=fullflexible,keepspaces=true,breaklines=true,aboveskip=0pt,belowskip=0pt,xleftmargin=0.08\linewidth,xrightmargin=0.08\linewidth]
def simple_math_problem() -> int:
    blue_car = 100
    red_car = blue_car * 0.60
    total_cost = blue_car + red_car
    result = total_cost
    return result
\end{lstlisting}
\end{minipage}
\end{tcolorbox}
\caption{\textbf{Qualitative examples.} Representative IDLM samples. Additional qualitative samples are provided in Appendix~\ref{app: qualitative samples}.}
\label{fig:owt-qualitative}
\vspace{-4mm}
\end{figure*}

\vspace{-2mm}
\section{Related Works}
\label{sec:related-works}
    \ecoparagraph{Continuous diffusion distillation.}
    Continuous distillation has two main branches: consistency-like methods~\citep{song2023consistency,song2023improved}, which train students to jump along teacher trajectories, and distribution-matching methods~\citep{yin2024one,yin2024improved,zhou2024score,huang2024flow,gushchin2025inverse,kornilov2025universal, selikhanovych2025one},
    which directly optimize the student distribution, usually with an auxiliary or
    fake model. IDLM follows the distribution-matching view and extends it to
    discrete diffusion.

    \ecoparagraph{Inverse distillation.}
    Inverse distillation directly optimizes a student generator to approximate the
    target distribution, placing it within the family of distribution-matching
    distillation methods. This idea was introduced by IBMD for diffusion bridge models~\citep{gushchin2025inverse}, then
    generalized to continuous-space flow and diffusion
    models in UID framework~\citep{kornilov2025universal}. RSD further extended this view to
    discrete-time diffusion and clarified its relation to DMD: a DMD-like update
    appears after a stop-gradient reduction~\citep{selikhanovych2025one}. IDLM
    adopts the same inverse view, but extends it to discrete space.

    \ecoparagraph{Consistency-style distillation for DLMs.}
    The consistency branch has discrete analogues such as
    SDTT~\citep{deschenaux2024beyond} and Duo-DCD~\citep{sahoo2025the}. However,
    the objective mainly teaches trajectory skipping. In practice, this preserves a
    conditional-independence factorization of the teacher's denoising process, which
    can miss the full joint distribution and favor common modes. Thus, especially in
    the one-step limit, these methods lack guaranteeing recovery of the data
    distribution and remain largely empirical. In contrast, IDLM represents the
    student as a sequence-level mixture, see (\wasyparagraph\ref{sec:loss intuition}), and Theorem~\ref{thm:unique solution} shows that its ideal
    objective is minimized only by the data distribution.
    \nocite{consistency2024rep, abulkhanov2026unifying}

    \ecoparagraph{Distribution matching in discrete space.}
    Discrete distribution-matching methods such as
    Di[M]O~\citep{zhu_dimathttmo_2025}, D-MMD~\citep{hoogeboom2026beyond}, and
    DiDi-Instruct~\citep{zheng2026ultrafast} match teacher and auxiliary signals at
    sampled noising times, corresponding to KL matching of noised marginals,
    \[
        \mathcal L_{\mathrm{DMD}}(\theta)
        =
        \int_0^1 w(t)\,
        D_{\mathrm{KL}}\!\left(p_t^\theta\,\|\,p_t^*\right)\,dt .
    \]
    Here $p_t^\theta$ and $p_t^*$ are the student and target noised marginal
    distributions at time $t$.
    IDLM instead matches full diffusion path measures,
    \[
        \mathcal L_{\mathrm{IDLM}}(\theta)
        =
        D_{\mathrm{KL}}\!\left(\mathbb P^\theta\,\|\,\mathbb P^*\right),
    \]
    so the comparison is trajectory-level rather than marginal-only. A DMD-like
    update appears only after a stop-gradient reduction~\citep{selikhanovych2025one},
    so in masked diffusion, because of the \textsc{subs} parameterization, the
    marginal and path-based losses coincide. In uniform diffusion, this distinction
    matters in practice, as our experiments show.

    \ecoparagraph{Continuous-space language flows.}
    A parallel line changes the state space: FLM/FMLM~\citep{lee2026flow} use
    one-hot flows, while ELF~\citep{hu2026elf} uses embedding-space flows. These
    models can obtain strong GenPPL, but their Entropy can drop relative to other
    baselines. This matters because lower GenPPL may trade off Entropy, for example
    when sweeping the sampling temperature. Moreover, FLM underperforms on recent
    benchmarks~\citep{deschenaux2026language}. Thus, it remains unclear whether a
    continuous or discrete state space is preferable for language modeling.

\vspace{-2mm}
\section{Experiments}
\label{sec:exps}

\reviewchange{This section demonstrates the applicability of our IDLM distillation method across several types of DLMs. We conduct the main experiments for MDLM and Duo. Although we focus on these two models, our theoretical formulation also covers the general SEDD setting. Therefore, we report an additional IDLM-SEDD evaluation in Appendix~\ref{app:sedd-experiments}. Representative generations are shown in Figure~\ref{fig:owt-qualitative}. Further implementation details, additional experimental results, and uncurated text samples are provided in Appendices~\ref{app: experimental details},~\ref{app:additional-experimental-results}, and~\ref{app: qualitative samples}, respectively.}

\subsection{Unconditional Generation on OpenWebText}
\label{sec:openwebtext-results}
We use the suffix to indicate the teacher: IDLM-MDLM distills MDLM, IDLM-Duo distills the original Duo model, and IDLM-DCD distills the Duo-DCD model. We describe the calculation of all evaluation metrics used in this section in Appendix~\ref{app:metric-definitions}.

\ecoparagraph{Teacher acceleration.}
    Figure~\ref{fig:owt-main-results} summarizes how well IDLM distills teachers. For masked diffusion MDLM, IDLM-MDLM reduces the $1024$-step teacher to $16$ reverse steps, giving a ($64\times$) acceleration while preserving GenPPL and Entropy metrics. For Duo with the greedy sampler (\wasyparagraph\ref{sec:uniform process duo}), IDLM-Duo also reaches $16$ steps ($64\times$), while IDLM-DCD$^{\mathrm{g}}$ reaches $4$ steps ($256\times$). The corresponding ancestral Duo results are reported in Appendix~\ref{app:ancestral-sampling-results}, where IDLM-DCD$^{\mathrm{a}}$ is evaluated down to $4$ steps and gives its best speed-quality tradeoff at $8$ steps ($128\times$). Also, GenPPL and Entropy vs sampling-step curves are provided in Figure~\ref{fig:owt-genppl-steps}.

\ecoparagraph{Comparison with baselines.}
    The lower panels of Figure~\ref{fig:owt-main-results} compare IDLM against other baselines with GenPPL, MAUVE, Entropy, and GM where available. For absorbing-state distillation, IDLM-MDLM gives the best overall low-step tradeoff at $16$ steps, improving GenPPL while preserving diversity. For uniform-state distillation, IDLM-DCD$^{\mathrm{g}}$ is strongest in the $8$--$4$ step regime. The ancestral step curve in Figure~\ref{fig:owt-genppl-steps} and the detailed results in Table~\ref{tab:uniform-ancestral-expanded} show the same trend.

\ecoparagraph{GenPPL--Entropy tradeoff.}
    Recent evaluations of flow and diffusion language models show that GenPPL should be interpreted together with sample entropy: lowering entropy can substantially improve GenPPL without necessarily improving the underlying generative distribution. Figure~\ref{fig:owt-entropy-frontier} visualizes this tradeoff on OWT. IDLM-MDLM improves GenPPL at comparable entropy to the MDLM teacher, and IDLM-DCD remains competitive.

\ecoparagraph{Scaling.}
    We additionally test IDLM on an $0.9$B MDLM checkpoint from SDTT~\citep{deschenaux2024beyond}, the full scaling comparison with SDTT is reported in Table~\ref{tab:mdlm-scaling}. The same pattern persists at larger scale: IDLM improves the low-step GenPPL/Entropy tradeoff over SDTT.

\ecoparagraph{Ablation study.}\phantomsection\label{sec:ablation}
    We ablate the main IDLM design choices in Appendix~\ref{app:ablation-results}. The results indicate that the original masked-diffusion update gives the best early GenPPL--Entropy tradeoff, while the full IDLM-Duo update is more stable than its stop-gradient variant. Adding simple Gaussian input noise to IDLM-MDLM does not improve performance (Table~\ref{tab:gaussian-noise-control}).

\subsection{Conditional Generation on TinyGSM}
\label{sec:tinygsm}

GenPPL and entropy measure unconditional sample quality, but they do not directly test whether a model preserves sequence-level correctness. We therefore follow the TinyGSM/GSM8K protocol used by~\citet{deschenaux2026language}. The results in Table~\ref{tab:tinygsm} show that IDLM improves steadily with more reverse steps and recovers teacher-level conditional generation with far fewer sampling steps. IDLM-MDLM reaches teacher-level accuracy $19.86\%$ with only $128$ steps, matching and slightly surpassing the $1024$-step MDLM teacher at $18.0\%$, corresponding to an ($8\times$) reduction in sampling steps. For Duo, IDLM-Duo already reaches comparable accuracy $19.03\%$ at $64$ steps relative to the $1024$-step Duo teacher at $17.2\%$, giving a ($16\times$) reduction. Overall, IDLM retains conditional generation with far fewer steps.

\par\noindent
\begin{minipage}{\columnwidth}
\vspace{-1mm}
\captionsetup{type=table}
\centering
\captionof{table}{\textbf{Conditional generation on GSM8K.} Baseline accuracies are reproduced from~\citet{deschenaux2026language}.}
\label{tab:tinygsm}
\scriptsize
\setlength{\tabcolsep}{5pt}
\renewcommand{\arraystretch}{0.94}
\begin{tabular}{lcc}
\toprule
Model & Steps & Accuracy (\%) \\
\midrule
\multicolumn{3}{l}{\textit{Autoregressive}} \\
Sample & \multirow{2}{*}{512} & 53.9 \\
Greedy & & \textbf{63.3} \\
\midrule
\multicolumn{3}{l}{\textit{Diffusion}} \\
MDLM (Teacher)~\citep{sahoo2024simple} & \multirow{4}{*}{1024} & \textbf{18.0} \\
Duo (Teacher)~\citep{sahoo2025the} & & 17.2 \\
FLM~\citep{lee2026flow} & & 0.3 \\
S-FLM~\citep{deschenaux2026language} & & \textbf{18.0} \\
\midrule
\multicolumn{3}{l}{\textit{Distillation}} \\
IDLM-MDLM (\textbf{Ours}) & \multirow{2}{*}{128} & 19.86 \\
IDLM-Duo (\textbf{Ours}) & & \textbf{21.38} \\
\midrule
IDLM-MDLM (\textbf{Ours}) & \multirow{2}{*}{64} & 14.94 \\
IDLM-Duo (\textbf{Ours}) & & \textbf{19.03} \\
\midrule
IDLM-MDLM (\textbf{Ours}) & \multirow{2}{*}{32} & 12.81 \\
IDLM-Duo (\textbf{Ours}) & & \textbf{15.39} \\
\bottomrule
\end{tabular}
\vspace{-1mm}
\end{minipage}
\par

\color{debugcolor}
\section{Discussion and Conclusion}
\label{sec:conclusion}
    We extend Inverse Distillation to pretrained Diffusion Language Models (DLMs), deriving the IDLM loss (\wasyparagraph\ref{sec:theoretical extension idlm}) and practical discrete relaxations for non-smooth token spaces (\wasyparagraph\ref{sec:practical extension of idlm}). Experiments (\wasyparagraph\ref{sec:exps}) show that IDLM accelerates teacher sampling by $4\times$--$64\times$ across DLM families while preserving generation quality.
\subsection{Limitations and Future Work}
\label{sec:limitations}
    Our experiments use relatively small-scale DLMs; a natural next step is to test whether IDLM's few-step gains persist for larger models such as LLaDA~\citep{nie2025large} and on standardized downstream benchmarks.
\color{black}

\phantomsection
\section*{Acknowledgements}
\label{sec:acknowledgements}
  The work was supported by the grant for research centers in the field of AI provided by the Ministry of Economic Development of the Russian Federation in accordance with the agreement 000000C313925P4F0002 and the agreement №139-10-2025-033.


\phantomsection
\section*{Impact Statement}
\label{sec:impact-statement}
  This paper presents work whose goal is to advance the field of Machine
  Learning. There are many potential societal consequences of our work, none
  which we feel must be specifically highlighted here.

\nocite{kushnareva2024ai}

\phantomsection
\label{sec:references}
\bibliography{bibliography}
\bibliographystyle{icml2026}

\newpage
\appendix
\onecolumn

\section*{Contents}
\newcommand{\contentsheading}[1]{%
  \vspace{0.8mm}\noindent{\bfseries #1}\par\vspace{0.2mm}%
}
\newcommand{\contentssectionline}[2]{%
  \noindent\hyperref[#1]{\textbf{\ref*{#1}.}\quad #2}\dotfill\pageref*{#1}\par%
}
\newcommand{\contentssubsectionline}[2]{%
  \noindent\hspace*{4mm}\hyperref[#1]{\ref*{#1}\quad #2}\dotfill\pageref*{#1}\par%
}
\newcommand{\contentssubsubsectionline}[2]{%
  \noindent\hspace*{8mm}\hyperref[#1]{\ref*{#1}\quad #2}\dotfill\pageref*{#1}\par%
}
\newcommand{\contentsappendixline}[2]{%
  \noindent\hyperref[#1]{\textbf{Appendix~\ref*{#1}.}\quad #2}\dotfill\pageref*{#1}\par%
}
\newcommand{\contentsplainline}[2]{%
  \noindent\hyperref[#1]{#2}\dotfill\pageref*{#1}\par%
}
\begingroup
\large
\setlength{\parskip}{3pt}
\contentsheading{Main Paper}
\contentssectionline{sec:introduction}{Introduction}
\contentssectionline{sec:preliminaries}{Preliminaries}
\contentssubsectionline{sec:discrete diffusion models}{Diffusion Language Models}
\contentssubsubsectionline{sec:absorbing process mdlm}{Absorbing Process}
\contentssubsubsectionline{sec:uniform process duo}{Uniform Process}
\contentssubsubsectionline{sec:unified notations sequence loss}{Unified Notations \& Sequence Level Loss}
\contentssubsectionline{sec:inverse distillation}{Inverse Distillation}
\contentssectionline{sec:idlm}{Inverse-distilled Diffusion Language Models}
\contentssubsectionline{sec:theoretical extension idlm}{Theoretical Extension of Inverse Distillation}
\contentssubsectionline{sec:practical extension of idlm}{Addressing Discrete Nonsmoothness}
\contentssubsectionline{sec:loss intuition}{Loss Intuition}
\contentssubsectionline{sec:technical aspects}{Technical Aspects}
\contentssectionline{sec:related-works}{Related Works}
\contentssectionline{sec:exps}{Experiments}
\contentssubsectionline{sec:openwebtext-results}{Unconditional Generation on OpenWebText}
\contentssubsectionline{sec:tinygsm}{Conditional Generation on TinyGSM}
\contentssectionline{sec:conclusion}{Discussion and Conclusion}
\contentssubsectionline{sec:limitations}{Limitations and Future Work}
\contentsheading{Appendix}
\contentsappendixline{app:details of considered processes}{SEDD Formulation, Theory, and Experiments}
\contentssubsectionline{app:sedd-formulation-theory}{SEDD Formulation and Theory}
\contentssubsectionline{app:sedd-reverse-distillation}{Inverse Distillation for SEDD}
\contentssubsectionline{app:sedd-experiments}{SEDD Experimental Validation}
\contentsappendixline{app:absorbing process}{Masked Diffusion}
\contentssubsectionline{app:mdlm-formulation-theory}{Masked Diffusion Formulation and Theory}
\contentssubsectionline{app:mdlm-inverse-distillation}{Inverse Distillation for Masked Diffusion}
\contentsappendixline{app:uniform process}{Uniform Diffusion}
\contentssubsectionline{app:uniform-formulation-theory}{Uniform Diffusion Formulation and Theory}
\contentssubsectionline{app:uniform-inverse-distillation}{Inverse Distillation for Uniform Diffusion}
\contentsappendixline{app: proofs}{Proofs}
\contentssubsectionline{app: connection with the main text}{Connection with the Main Text}
\contentssubsectionline{app: proof of the theorem}{Proof of the Theorem}
\contentsappendixline{app: experimental details}{Experimental Details}
\contentssubsectionline{app:owt-experimental-details}{Unconditional Generation on OpenWebText}
\contentssubsectionline{app:tinygsm-experimental-details}{Conditional Generation on TinyGSM}
\contentsappendixline{app:additional-experimental-results}{Additional Experimental Results}
\contentssubsectionline{app:ancestral-sampling-results}{Ancestral Sampling Results}
\contentssubsectionline{app:genppl-step-plots}{GenPPL and Entropy vs. Sampling Steps}
\contentssubsectionline{app:mdlm-scaling}{Scaling to Larger MDLM}
\contentssubsectionline{app:ablation-results}{Ablation Study}
\contentsappendixline{app: qualitative samples}{Qualitative Samples}
\contentssubsectionline{app:owt-qualitative-samples}{Unconditional Generation (OpenWebText)}
\contentssubsectionline{app:tinygsm-qualitative-samples}{Conditional Generation (TinyGSM)}
\endgroup
\clearpage

\section{\texorpdfstring{SEDD Formulation, Theory, and Experiments}{SEDD Formulation, Theory, and Experiments}}
\label{app:details of considered processes}

This appendix collects the material needed to use SEDD within the IDLM framework. Appendix~\ref{app:sedd-formulation-theory} gives the CTMC score-entropy formulation and its sequence factorization. Appendix~\ref{app:sedd-reverse-distillation} specializes the IDLM objective to SEDD and explains the resulting teacher--fake advantage. Appendix~\ref{app:sedd-experiments} reports the SEDD experimental validation. We use the notation of Section~\ref{sec:discrete diffusion models}: $x_0,x_t,y\in\VC$ are one-hot tokens, $p^*$ is the data distribution, $p_\theta$ is the student distribution, and $q_t(x_t\mid x_0)$ is the forward noising kernel. For a matrix $A$, write $\langle u,v\rangle_A\vcentcolon=u^\top A v$; when the subscript is omitted, $\langle u,v\rangle=u^\top v$.

\subsection{\texorpdfstring{SEDD Formulation and Theory}{SEDD Formulation and Theory}}
\label{app:sedd-formulation-theory}

\paragraph{Forward process.}
As stated in Section~\ref{sec:discrete diffusion models}, the main text focuses on absorbing and uniform diffusion and defers the general CTMC formulation to this appendix. A SEDD forward process defines noised data marginals $p_t\in\Delta$ through
\begin{equation}
    \frac{dp_t}{dt} = Q_t p_t, \quad p_0 = p^*,
    \label{eq:forward diffusion}
\end{equation}
where each diffusion matrix $Q_t \in \RR^{N \times N}$ has nonnegative off-diagonal entries and columns summing to zero. The off-diagonal entries are transition rates between tokens. The zero column-sum condition preserves total probability in Eq.~\eqref{eq:forward diffusion}, because probability mass that leaves a token is balanced by mass assigned to outgoing transitions. In the common parameterization $Q_t=\sigma_t Q$, the fixed matrix $Q$ determines the diffusion process, including which token-to-token transitions are allowed, while $\sigma_t$ controls the noise rate. Let $\bar{\sigma}_t \vcentcolon= \int_0^t \sigma_s ds$. Then $\exp(\bar{\sigma}_t Q)$ is the matrix exponential of $\bar{\sigma}_t Q$, and for a fixed clean token $x_0$ we write
\begin{equation}
    q_t(x_t \mid x_0)
    \vcentcolon= \mathrm{Cat}\bigl(x_t;\exp(\bar{\sigma}_t Q)x_0\bigr).
    \label{eq:conditional_distribution}
\end{equation}
The corresponding noised data marginal is $p_t(x_t)=\int q_t(x_t\mid x_0)\,p^*(dx_0)$. Absorbing and uniform diffusion are specific choices of $Q$ and are described in Appendices~\ref{app:absorbing process} and~\ref{app:uniform process}.

\paragraph{Training loss.}
In Section~\ref{sec:unified notations sequence loss}, SEDD is one instance of the unified token-level objective $\LC(f,p)$. SEDD~\citep{lou2024discrete} learns a positive score model $f\colon \VC\times[0,1]\to\RR_{+}^{N}$ that estimates reverse-process probability ratios. For a fixed clean token $x_0$, define the conditional score ratio
\[
    \langle s(x_t,t\mid x_0),y\rangle
    \vcentcolon=
    \frac{q_t(y\mid x_0)}{q_t(x_t\mid x_0)}.
\]
For any clean-token distribution $p$, the score-entropy objective has the same distribution-level form as the MDLM and Duo losses:
\begin{equation}
\label{eq:sedd loss}
\begin{aligned}
\LC_{\mathrm{SEDD}}(f,p)
&\vcentcolon=
\EE_{p(x_0),\,t,\,q_t}
\bigl[
    g_{\mathrm{SEDD}}\bigl(x_t,x_0,f(x_t,t)\bigr)
\bigr],
\\
g_{\mathrm{SEDD}}\bigl(x_t,x_0,f(x_t,t)\bigr)
&\vcentcolon=
\sum_{y \neq x_t} \langle x_t,y\rangle_{Q_t}
\biggl(
\langle f(x_t,t),y\rangle
- \langle s(x_t,t\mid x_0),y\rangle
\log \langle f(x_t,t),y\rangle
\biggr).
\end{aligned}
\end{equation}
Here $\langle x_t,y\rangle_{Q_t}$ weights the loss by the forward transition rate from $y$ to $x_t$, so transitions not allowed by $Q_t$ do not contribute. For a generic training distribution $p$, we use the same notation $p_t$ for the noised marginal induced by $p$:
\[
    p_t(x)=\int q_t(x\mid x_0)\,p(dx_0).
\]
At a minimizer of Eq.~\eqref{eq:sedd loss}, the model satisfies
\begin{equation}
\label{eq:sedd-generic-optimum}
    \langle f(x_t,t),y\rangle
    =
    \EE_{p(x_0\mid x_t)}
    \bigl[
    \langle s(x_t,t\mid x_0),y\rangle
    \bigr]
    =
    \frac{p_t(y)}{p_t(x_t)}, \qquad y\neq x_t .
\end{equation}
In particular, when $p=p^*$ we denote the trained teacher by $f^*$, which gives
\begin{equation}
    \langle f^{*}(x_t,t),y\rangle
    =
    \frac{p_t(y)}{p_t(x_t)}, \qquad y\neq x_t .
    \label{eq:sedd optimal ratio}
\end{equation}
Thus, the trained SEDD model provides the marginal ratios required by the reverse process.

\paragraph{Sampling procedure.}
Sampling uses the learned score model to instantiate a reverse-time CTMC. The reverse process starts from the tractable terminal distribution $\pi$ and evolves backward in time:
\begin{equation}
    \frac{dp_{1-t}}{dt} = \overline{Q}_{1-t}^{f} p_{1-t}, \quad p_1 = \pi.
    \label{eq:backward diffusion}
\end{equation}
For $y\neq x_t$, the reverse transition rate induced by $f$ is
\begin{equation}
\label{eq:sedd-reverse-rates}
    \langle y,x_t\rangle_{\overline Q_t^f}
    \vcentcolon=
    \langle f(x_t,t),y\rangle
    \langle x_t,y\rangle_{Q_t},
    \qquad
    \langle x_t,x_t\rangle_{\overline Q_t^f}
    =-
    \sum_{y\neq x_t}
    \langle y,x_t\rangle_{\overline Q_t^f}.
\end{equation}
With the teacher $f^*$, Eq.~\eqref{eq:sedd-reverse-rates} becomes the exact reverse rate
\[
    \langle y,x_t\rangle_{\overline Q_t^{f^*}}
    =
    \frac{p_t(y)}{p_t(x_t)}
    \langle x_t,y\rangle_{Q_t}.
\]
Therefore, SEDD learns the probability ratios that convert the forward CTMC rates into reverse CTMC rates. Equivalently, for a small reverse step $h>0$, these rates define the transition probabilities
\[
    \PP(X_{t-h}=y\mid X_t=x_t)
    =
    \delta_{y,x_t}
    +
    h\langle y,x_t\rangle_{\overline Q_t^f}
    +
    o(h),
\]
where $\delta_{y,x_t}$ is the Kronecker delta. For $y\neq x_t$, this reduces to
\[
    \PP(X_{t-h}=y\mid X_t=x_t)
    =
    h\langle y,x_t\rangle_{\overline Q_t^f}
    +
    o(h),
\]
while the diagonal term gives the probability of staying at $x_t$ up to order $h$. Since $X_{t-h}$ is a discrete random vector, its expected one-step displacement is
\[
    \begin{aligned}
    \EE[X_{t-h}-x_t\mid X_t=x_t]
    =
    \sum_{y\in\VC}
    (y-x_t)\PP(X_{t-h}=y\mid X_t=x_t)
    =
    \sum_{y\neq x_t}
    (y-x_t)\PP(X_{t-h}=y\mid X_t=x_t),
    \end{aligned}
\]
where the two sums are equal because the $y=x_t$ term is zero.
Substituting the small-step probabilities yields
\[
\begin{aligned}
    \EE[X_{t-h}-x_t\mid X_t=x_t]
    &=
    h\sum_{y\neq x_t}
    (y-x_t)
    \langle y,x_t\rangle_{\overline Q_t^f}
    +
    o(h).
\end{aligned}
\]
The delta and diagonal terms do not contribute to the displacement because they correspond to $y=x_t$, where $y-x_t=0$. Thus the reverse CTMC rates determine both the probabilities of individual jumps and the average local reverse update.
We use this quantity below to interpret IDLM-SEDD. For a fixed corrupted token $x_t$, a SEDD model does not choose a single denoised token directly; it assigns rates to possible reverse jumps $x_t\to y$. The expectation $\EE[X_{t-h}-x_t\mid X_t=x_t]$ compresses these local jump rates into the average one-step denoising motion induced by the model. Thus, it gives a compact summary of how the model locally moves $x_t$ toward cleaner tokens; related local-move views are used in discrete flow matching~\citep{gat2024discrete}.

\paragraph{Loss for sequences.}
For language modeling, directly comparing all token pairs in a full sequence is expensive for large vocabularies. SEDD avoids this by using a coordinate-wise CTMC: each transition changes one position, while the forward conditional distribution factorizes across positions,
\[
    q_t(x_t^{1:L}\mid x_0^{1:L})
    =
    \prod_{\ell=1}^{L}q_t(x_t^\ell\mid x_0^\ell).
\]
Consequently, the conditional score ratio also factorizes. If $y^{1:L}$ differs from $x_t^{1:L}$ only at position $\ell$, then all unchanged-position factors cancel:
\[
\begin{aligned}
    \frac{q_t(y^{1:L}\mid x_0^{1:L})}
         {q_t(x_t^{1:L}\mid x_0^{1:L})}
    &=
    \prod_{j=1}^{L}
    \frac{q_t(y^j\mid x_0^j)}
         {q_t(x_t^j\mid x_0^j)}
    =
    \frac{q_t(y^\ell\mid x_0^\ell)}
         {q_t(x_t^\ell\mid x_0^\ell)},
    \qquad y^j=x_t^j\ \text{for }j\neq \ell .
\end{aligned}
\]
The sequence-level objective follows the unified notation of Section~\ref{sec:unified notations sequence loss}:
\[
    \LC_{\mathrm{SEDD}}^{1:L}(f,p)
    =
    \EE_{p(x_0^{1:L}),\,t,\,q_t}
    \biggl[
    \sum_{\ell=1}^{L}
    g_{\mathrm{SEDD}}
    \bigl(x_t^\ell,x_0^\ell,f^\ell(x_t^{1:L},t)\bigr)
    \biggr],
\]
where the score at position $\ell$ may condition on the full noised sequence. This factorization makes the general CTMC formulation practical for sequence models.

\subsection{\texorpdfstring{Inverse Distillation for SEDD}{Inverse Distillation for SEDD}}
\label{app:sedd-reverse-distillation}

We now specialize the IDLM objective from Section~\ref{sec:idlm} to the SEDD loss above. The teacher $f^*$ is fixed and estimates reverse ratios for the data distribution. The fake model $f$ is trained on the current student distribution $p_\theta$ and estimates reverse ratios for student samples. IDLM-SEDD updates the student by comparing these two SEDD losses on the same student-generated examples.

\paragraph{Student samples.}
The expectations in $\LC_{\mathrm{SEDD}}(f,p_\theta)$ are taken over
\[
    x_0\sim p_\theta(x_0),
    \qquad
    t\sim\UC[0,1],
    \qquad
    x_t\sim q_t(\cdot\mid x_0).
\]
Equivalently, when $p_\theta$ is induced by the generator used in Section~\ref{sec:technical aspects}, we may write $x_0=G_\theta(\epsilon)$. The real data distribution affects this update only through the fixed teacher $f^*$; the fake model and the student are evaluated on samples from $p_\theta$.

\paragraph{IDLM-SEDD objective.}
Using Eq.~\eqref{eq:inverse discrete diffusion distillation}, the inverse-distillation objective for SEDD is
\begin{equation}
\label{eq:idlm-sedd-compact}
    \LC_{\mathrm{IDLM\text{-}SEDD}}(\theta)
    \vcentcolon=
    \LC_{\mathrm{SEDD}}(f^*,p_\theta)
    -
    \min_f \LC_{\mathrm{SEDD}}(f,p_\theta).
\end{equation}
In practice, as in the main IDLM algorithm, we alternate between fitting the fake model and updating the student. With $\theta$ fixed, the fake model minimizes $\LC_{\mathrm{SEDD}}(f,p_\theta)$. With $f$ fixed, the student minimizes
\begin{equation}
\label{eq:idlm-sedd-fixed-compact}
    \LC_{\mathrm{IDLM\text{-}SEDD}}(\theta)
    \vcentcolon=
    \LC_{\mathrm{SEDD}}(f^*,p_\theta)
    -
    \LC_{\mathrm{SEDD}}(f,p_\theta).
\end{equation}
Substituting Eq.~\eqref{eq:sedd loss} into both terms gives
\begin{equation}
\label{eq:idlm-sedd-expanded}
\begin{aligned}
\LC_{\mathrm{IDLM\text{-}SEDD}}(\theta)
&=
\EE_{p_\theta(x_0),\,t,\,q_t}
\biggl[
\sum_{y\neq x_t}
\langle x_t,y\rangle_{Q_t}
\biggl(
\langle f^*(x_t,t),y\rangle
-
\langle s(x_t,t\mid x_0),y\rangle
\log\langle f^*(x_t,t),y\rangle
\biggr)
\biggr]
\\
&\quad-
\EE_{p_\theta(x_0),\,t,\,q_t}
\biggl[
\sum_{y\neq x_t}
\langle x_t,y\rangle_{Q_t}
\biggl(
\langle f(x_t,t),y\rangle
-
\langle s(x_t,t\mid x_0),y\rangle
\log\langle f(x_t,t),y\rangle
\biggr)
\biggr].
\end{aligned}
\end{equation}
Since both expectations use the same $p_\theta(x_0)$, $t$, and $q_t(x_t\mid x_0)$, this simplifies to
\begin{equation}
\label{eq:idlm-sedd-simplified}
\begin{aligned}
\LC_{\mathrm{IDLM\text{-}SEDD}}(\theta)
&=
\EE_{p_\theta(x_0),\,t,\,q_t}
\biggl[
\sum_{y\neq x_t}
\langle x_t,y\rangle_{Q_t}
\biggl(
\langle f^*(x_t,t)-f(x_t,t),y\rangle
-
\langle s(x_t,t\mid x_0),y\rangle
\log
\frac{\langle f^*(x_t,t),y\rangle}{\langle f(x_t,t),y\rangle}
\biggr)
\biggr].
\end{aligned}
\end{equation}
Equation~\eqref{eq:idlm-sedd-simplified} is the fully substituted IDLM-SEDD loss with fixed fake model $f$.

\paragraph{Advantage intuition.}
Following the local-advantage notation of Section~\ref{sec:loss intuition}, for a sampled pair $(x_0,x_t)$ define
\[
\begin{aligned}
    a_t
    &\vcentcolon=
    g_{\mathrm{SEDD}}\bigl(x_t,x_0,f^*(x_t,t)\bigr)
    -
    g_{\mathrm{SEDD}}\bigl(x_t,x_0,f(x_t,t)\bigr)
    \\
    &=
    \sum_{y\neq x_t}
    \langle x_t,y\rangle_{Q_t}
    \biggl(
    \langle f^*(x_t,t)-f(x_t,t),y\rangle
    -
    \langle s(x_t,t\mid x_0),y\rangle
    \log
    \frac{\langle f^*(x_t,t),y\rangle}
         {\langle f(x_t,t),y\rangle}
    \biggr).
\end{aligned}
\]
In SEDD, $a_t$ compares teacher and fake local score-entropy losses on the same corrupted student sample. The comparison is made over the same jumps that define the reverse CTMC motion in Appendix~\ref{app:sedd-formulation-theory}: each candidate jump $x_t\to y$ contributes according to its forward rate $\langle x_t,y\rangle_{Q_t}$ and the teacher--fake difference in the score assigned to that jump. Equivalently, the teacher $f^*$ and the fake model $f$ induce two average local reverse updates,
\[
    d_{f^*}(x_t,t)
    =
    \sum_{y\neq x_t}
    (y-x_t)\langle y,x_t\rangle_{\overline Q_t^{f^*}},
    \qquad
    d_f(x_t,t)
    =
    \sum_{y\neq x_t}
    (y-x_t)\langle y,x_t\rangle_{\overline Q_t^f}.
\]
The quantity $a_t$ should be read as a local teacher-minus-fake loss, not as a Euclidean distance between $d_{f^*}$ and $d_f$. Since IDLM-SEDD minimizes $a_t$, the generator is encouraged to produce samples $x_0$ whose corrupted versions $x_t$ are better explained by the teacher reverse CTMC than by the fake reverse CTMC. In jump terms, a low teacher loss means that the teacher assigns plausible rates to reverse moves $x_t\to y$ that are compatible with denoising $x_t$ toward $x_0$; equivalently, its average local update $d_{f^*}(x_t,t)$ gives a useful teacher signal. The fake model contributes the opposite side of the comparison, so the student is pushed away from samples whose local reverse jumps are explained only by the fake model.

The sign of $a_t=g_{\mathrm{SEDD}}(f^*)-g_{\mathrm{SEDD}}(f)$ gives the corner cases. If $a_t<0$, then the teacher has lower local SEDD loss than the fake model: the teacher likes the reverse jumps around this corrupted sample more, and the generator receives an attractive signal for such samples. If $a_t>0$, then the fake model has lower local loss: the sample is better explained by the current student-trained dynamics than by the teacher, so minimizing the objective discourages the generator from producing it. If $a_t\approx0$, the two models give nearly the same local explanation, and the relative teacher-versus-fake signal is weak.

At the sequence level, we use the factorized SEDD loss from Appendix~\ref{app:sedd-formulation-theory}:
\[
    \LC_{\mathrm{IDLM\text{-}SEDD}}^{1:L}(\theta)
    \vcentcolon=
    \LC_{\mathrm{SEDD}}^{1:L}(f^*,p_\theta)
    -
    \min_f \LC_{\mathrm{SEDD}}^{1:L}(f,p_\theta).
\]

\subsection{\texorpdfstring{SEDD Experimental Validation}{SEDD Experimental Validation}}
\label{app:sedd-experiments}

\begin{figure}[!htbp]
  \centering
  \begin{minipage}[t]{0.48\textwidth}
    \centering
    \includegraphics[width=\linewidth]{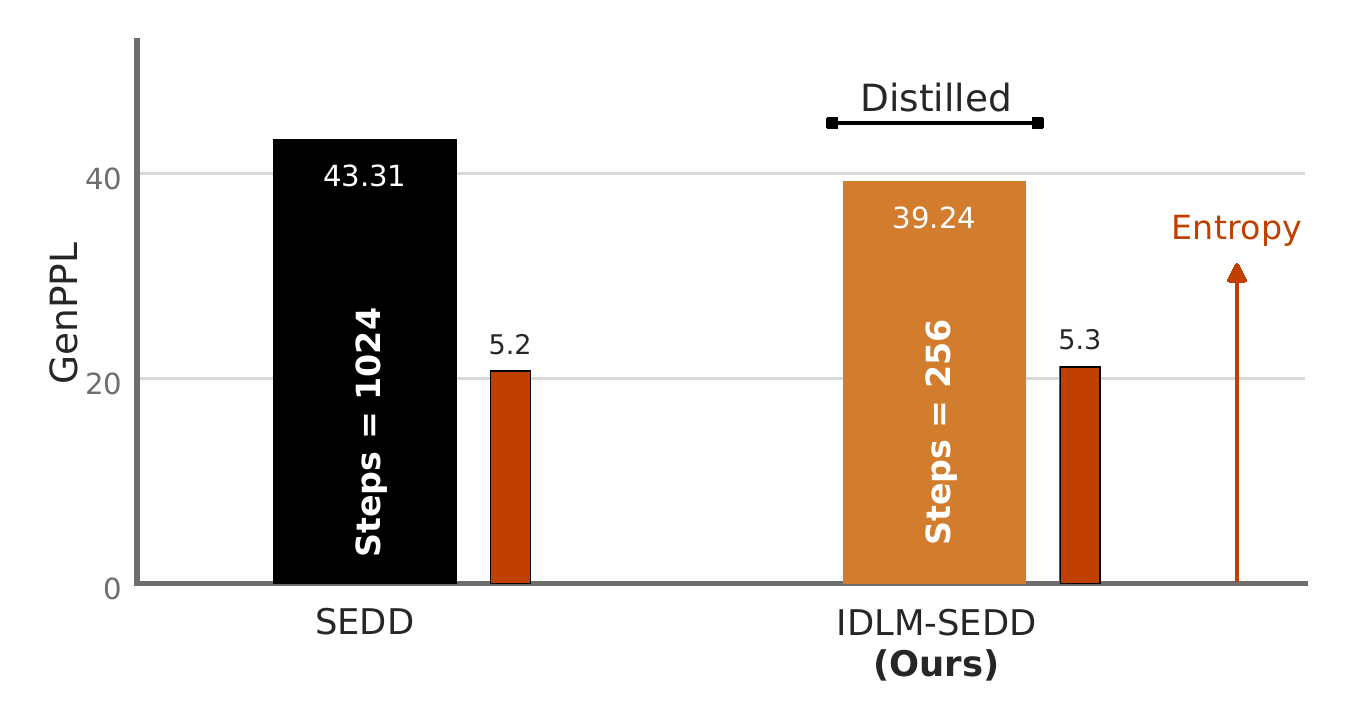}
  \end{minipage}
  \hfill
  \begin{minipage}[t]{0.48\textwidth}
    \centering
    \includegraphics[width=\linewidth]{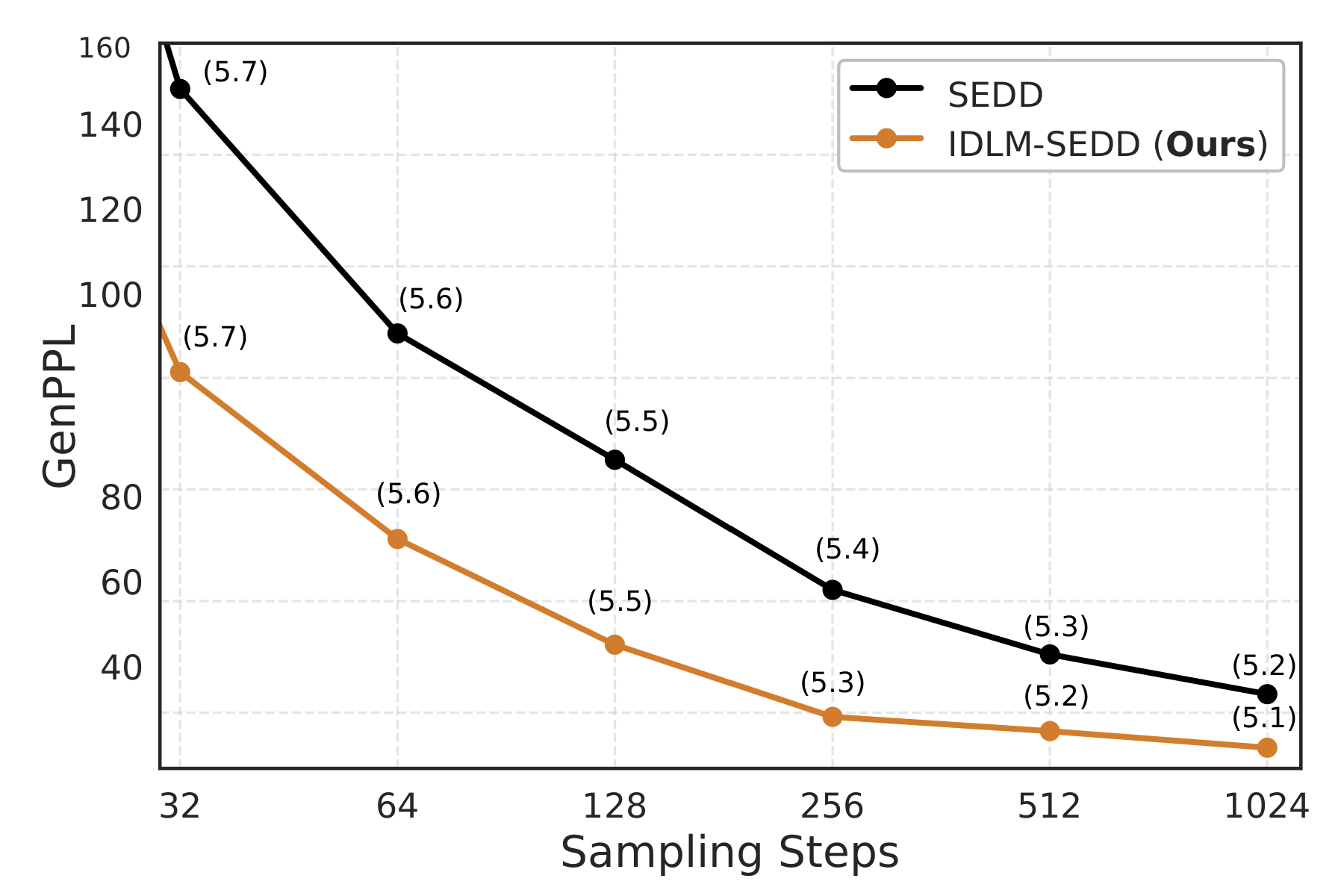}
  \end{minipage}
  \vspace{-1mm}
  \caption{\small\textbf{IDLM-SEDD comparison with SEDD.} \textit{Left:} IDLM-SEDD matches SEDD generation quality and diversity while reducing sampling from $1024$ to $256$ steps ($4\times$). \textit{Right:} IDLM-SEDD improves GenPPL across sampling steps while preserving high entropy.}
  \label{fig:sedd-results}
  \vspace{-3mm}
\end{figure}

We use SEDD to evaluate IDLM under the general CTMC score-entropy formulation. Since only checkpoints for the absorbing-process SEDD variant are publicly available, we distill this variant and refer to it simply as SEDD for clarity.

As shown in Figure~\ref{fig:sedd-results}, IDLM-SEDD improves GenPPL over SEDD across the evaluated sampling-step range while preserving high sequence entropy at low step counts. In particular, it achieves a $4\times$ reduction in sampling steps, accelerating generation from $1024$ to $256$ steps while maintaining both GenPPL and output diversity. Uncurated samples are provided in Appendix~\ref{app: qualitative samples}.

\section{\texorpdfstring{\reviewchange{Masked Diffusion}}{Masked Diffusion}}
\label{app:absorbing process}

\begin{reviewblock}
This appendix expands the masked diffusion process used by MDLM. The section
has two parts. First, in Appendix~\ref{app:mdlm-formulation-theory}, we
describe the MDLM forward process, training objective, and reverse sampler.
Second, in Appendix~\ref{app:mdlm-inverse-distillation}, we explain why this
process gives a particularly simple inverse distillation signal. Masked
diffusion is a process-specific instance of the general continuous-time Markov
chain (CTMC) formulation in
Appendix~\ref{app:details of considered processes}, where ordinary tokens move
to the mask state.

\subsection{\texorpdfstring{Masked Diffusion Formulation and Theory}{Masked Diffusion Formulation and Theory}}
\label{app:mdlm-formulation-theory}

Masked diffusion corrupts text by replacing clean tokens with a special mask
token. Reverse sampling then gradually reveals the masked positions. This
process is also called absorbing diffusion because, once a token reaches the
mask state in the forward process, it stays there.

\paragraph{Forward process.}
Let $m\in\VC$ be the mask token. We use column-vector
conventions throughout: probability vectors are columns, and each diffusion
matrix has columns that sum to zero. The absorbing process sends every
non-mask token to $m$ and leaves $m$ fixed. Hence its terminal distribution is
the point mass $\pi=m$.

The base generator of this continuous-time Markov chain is
\begin{equation}
    Q_{\text{abs}} =
    \begin{bmatrix}
        -1 & 0 & \cdots & 0 & 0 \\
        0 & -1 & \cdots & 0 & 0 \\
        \vdots & \vdots & \ddots & \vdots & \vdots \\
        0 & 0 & \cdots & -1 & 0 \\
        1 & 1 & \cdots & 1 & 0
    \end{bmatrix}.
\end{equation}
With time-dependent rate $\sigma_t$, the forward generator is
\begin{equation}
    \label{eq: absorbing diffusion matrix}
    Q_t = \sigma_t Q_{\text{abs}} =
    \begin{bmatrix}
        -\sigma_t & 0 & \cdots & 0 & 0 \\
        0 & -\sigma_t & \cdots & 0 & 0 \\
        \vdots & \vdots & \ddots & \vdots & \vdots \\
        0 & 0 & \cdots & -\sigma_t & 0 \\
        \sigma_t & \sigma_t & \cdots & \sigma_t & 0
    \end{bmatrix}.
\end{equation}
If $\bar{\sigma}_t\vcentcolon=\int_0^t\sigma_sds$, then the cumulative
transition matrix is
\begin{equation}
    \exp(\bar{\sigma}_t Q_{\text{abs}}) =
    \begin{bmatrix}
        \exp(-\bar{\sigma}_t) & 0 & \cdots & 0 & 0 \\
        0 & \exp(-\bar{\sigma}_t) & \cdots & 0 & 0 \\
        \vdots & \vdots & \ddots & \vdots & \vdots \\
        0 & 0 & \cdots & \exp(-\bar{\sigma}_t) & 0 \\
        1 - \exp(-\bar{\sigma}_t) & 1 - \exp(-\bar{\sigma}_t) & \cdots & 1 - \exp(-\bar{\sigma}_t) & 1
    \end{bmatrix}.
\end{equation}
Equivalently, with $\alpha_t\vcentcolon=\exp(-\bar{\sigma}_t)$, a clean token
$x_0$ is preserved with probability $\alpha_t$ and absorbed into the mask with
probability $1-\alpha_t$:
\[
    q_t(x_t\mid x_0)
    =
    \mathrm{Cat}\bigl(x_t;\alpha_t x_0 + (1-\alpha_t)m\bigr).
\]
Thus, at any time $t$, a token has only two possible forward states: it is
still equal to the original clean token, or it has become the mask token.
Figure~\ref{fig:masked-forward-kernel} shows the same kernel visually. The
important point is that the masked kernel is linear in the clean-token
distribution: it can be applied either to a hard one-hot token or to the
simplex-valued generator output used later in IDLM.

\begin{figure*}[t]
    \centering
    \begin{subfigure}[b]{0.48\textwidth}
        \centering
        \includegraphics[width=\linewidth]{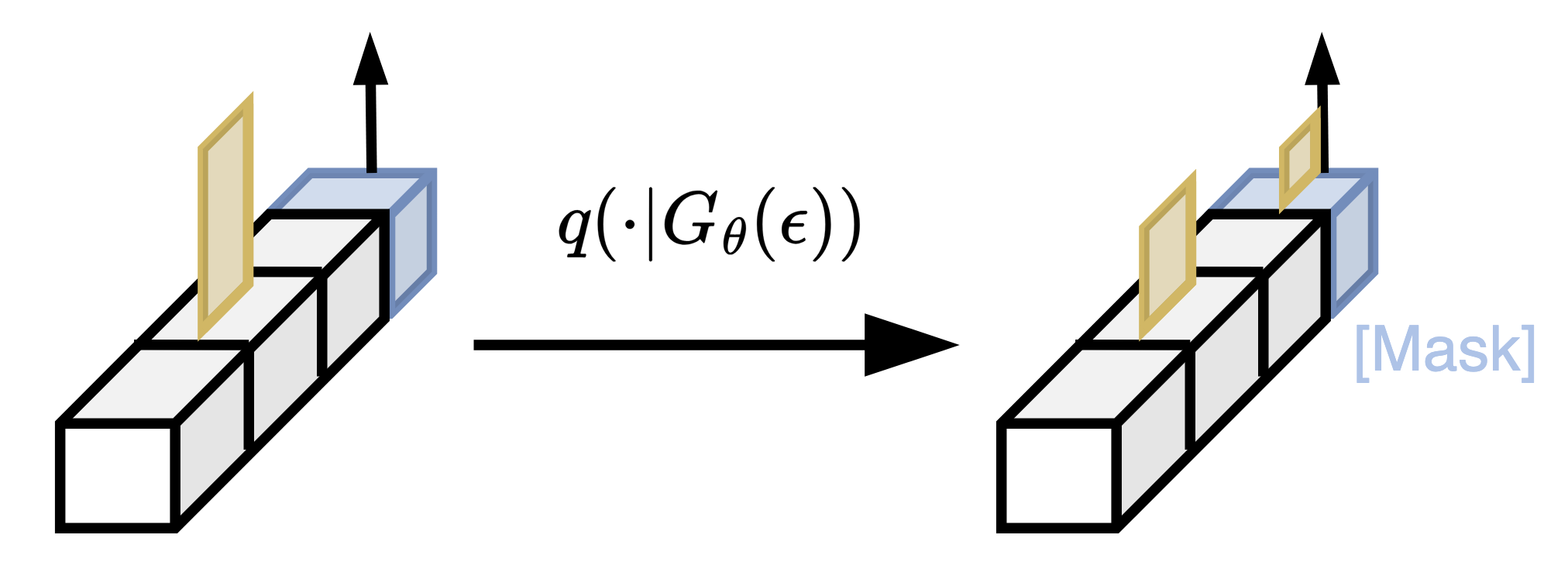}
        \caption{Hard one-hot input.}
    \end{subfigure}
    \hfill
    \begin{subfigure}[b]{0.48\textwidth}
        \centering
        \includegraphics[width=\linewidth]{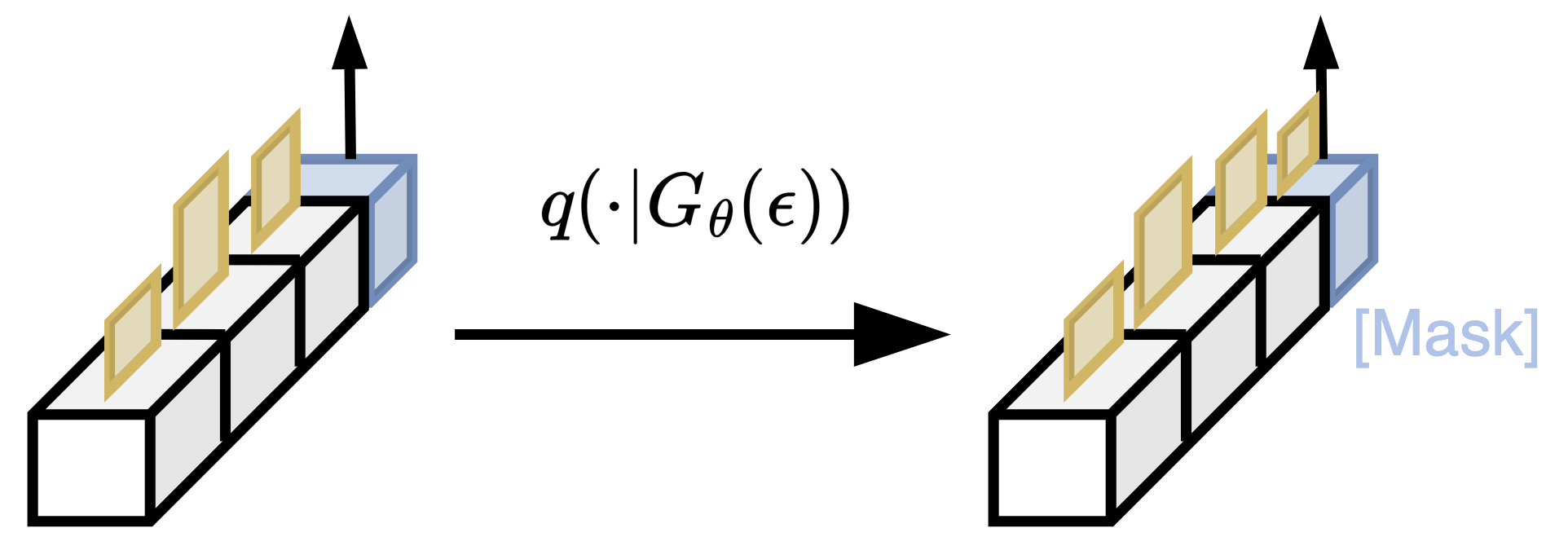}
        \caption{Simplex-valued input.}
    \end{subfigure}
    \caption{Masked forward process. The transition
    $q_t(\cdot\mid x_0)$ preserves the clean-token mass with probability
    $\alpha_t$ and moves the remaining mass to the mask token. Because this map
    is linear, the same operation applies to a one-hot token and to a
    differentiable simplex output $G_\theta(\epsilon)$.}
    \label{fig:masked-forward-kernel}
\end{figure*}

\paragraph{Training loss.}
MDLM~\citep{sahoo2024simple, shi2024simplified} uses this absorbing process
with a clean-token predictor $f(x_t,t)\in\Delta$. Training is weighted
masked-token prediction: when a position has been absorbed into the mask, the
model predicts its original clean token.

\paragraph{Sampling procedure.}
Sampling starts from the terminal distribution, i.e. from a fully masked
sequence, and then integrates the learned reverse process toward $t=0$.
For $0\leq s<t\leq1$, let $f_t\vcentcolon=f(x_t,t)$. MDLM uses the absorbing
posterior with the true clean token replaced by the model prediction:
\[
\begin{aligned}
p^{f}_{s\mid t}(x_s\mid x_t)
&=
\begin{cases}
    \mathrm{Cat}(x_s; x_t), & x_t\neq m,\\[2pt]
    \mathrm{Cat}\!\left(x_s;\dfrac{1-\alpha_s}{1-\alpha_t}m
    + \dfrac{\alpha_s-\alpha_t}{1-\alpha_t}f_t\right), & x_t=m .
\end{cases}
\end{aligned}
\]
Figure~\ref{fig:masked-diffusion-sampling-intuition} visualizes this reverse
sampling path. The two cases simply say that masked diffusion is one-way: a
revealed token stays revealed, while a masked token can either stay masked or be
revealed. This is the \textsc{subs} sampling rule used by MDLM. If the token has
already been revealed, \textsc{subs} copies it unchanged. If the token is still
masked, the reverse step either keeps it masked until time $s$ or reveals it
according to the clean-token prediction $f_t$. The noise schedule controls how
many tokens are still masked at each time. The model controls which token is
chosen when a masked position is revealed.

\begin{figure*}[t]
    \centering
    \includegraphics[width=0.95\textwidth]{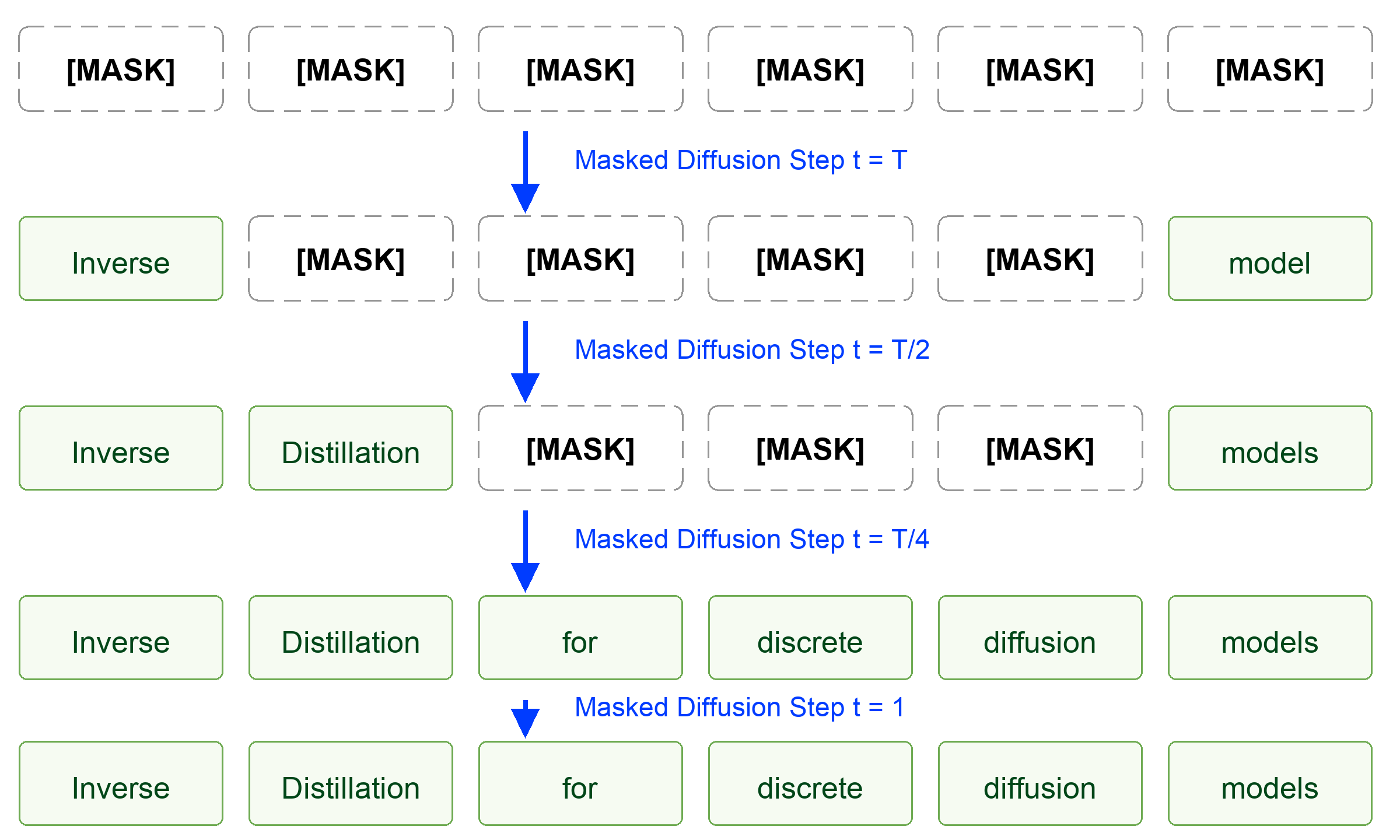}
    \caption{Masked-diffusion sampling procedure. The sampler
    starts from a
    fully masked sequence and reveals each position once; after a token is
    revealed, later reverse steps copy it. Thus the schedule decides when a
    position is revealed, and the model decides which token to put there.}
    \label{fig:masked-diffusion-sampling-intuition}
\end{figure*}

This is the main contrast with uniform diffusion. In masked diffusion, revealed
tokens stay fixed. In uniform diffusion, tokens can still change later, so the
IDLM signal for uniform diffusion must track intermediate states; see
Appendix~\ref{app:uniform process} for the corresponding uniform-diffusion
formulation.

\subsection{\texorpdfstring{Inverse Distillation for Masked Diffusion}{Inverse Distillation for Masked Diffusion}}
\label{app:mdlm-inverse-distillation}

We now connect the masked diffusion process to the IDLM objective. The general
inverse distillation loss, introduced in
Section~\ref{sec:inverse distillation}, asks for a student distribution
$p_\theta$ on which the pretrained teacher $f^*$ is better than any diffusion
model trained only on student samples:
\[
    \LC_{\mathrm{IDLM}}(\theta)
    =
    \LC(f^*,p_\theta)-\min_{\tilde f}\LC(\tilde f,p_\theta).
\]
In practice, the inner minimizer is represented by a fake model $f$ trained
on the current student distribution, as in the alternating optimization in
Section~\ref{sec:loss intuition}. Therefore, during a generator step we use the
teacher-minus-fake objective
\[
    \LC_{\mathrm{IDLM}}(\theta)
    =
    \LC(f^*,p_\theta)-\LC(f,p_\theta).
\]
For masked diffusion, this gap has an especially transparent form because of
the \textsc{subs} parameterization.

\paragraph{Simplex relaxation.}
The student generator should ultimately choose vocabulary tokens, but a hard
one-hot output blocks ordinary backpropagation. We therefore allow the
generator output to lie in the simplex, following
Section~\ref{sec:simplex relaxation},
\[
    G_\theta(\epsilon)\in\Delta .
\]
This is still a valid token distribution: its entries are nonnegative and sum
to one. Moreover, the masked forward kernel remains well-defined, as illustrated
in Figure~\ref{fig:masked-forward-kernel}, because
\[
    \alpha_t G_\theta(\epsilon)+(1-\alpha_t)m\in\Delta .
\]
Thus the generator update can use a weighted sum over possible clean tokens
instead of committing to one sampled token before gradients are computed.
Figure~\ref{fig:mdlm-simplex-relaxation} shows the relaxation: the discrete
target is a one-hot token, but during optimization the generator can expose
continuous token weights.

\begin{figure*}[t]
    \centering
    \begin{subfigure}[b]{0.18\textwidth}
        \centering
        \includegraphics[width=\linewidth]{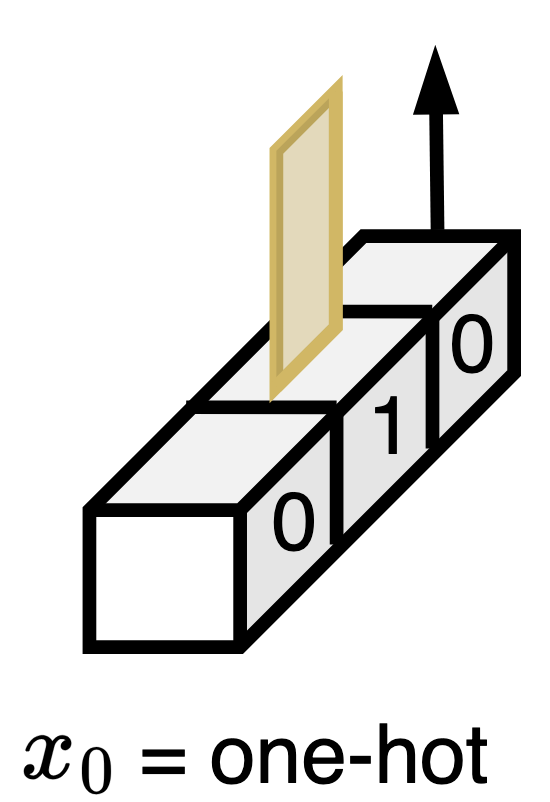}
        \caption{Discrete token.}
    \end{subfigure}
    \hfill
    \begin{subfigure}[b]{0.39\textwidth}
        \centering
        \includegraphics[width=\linewidth]{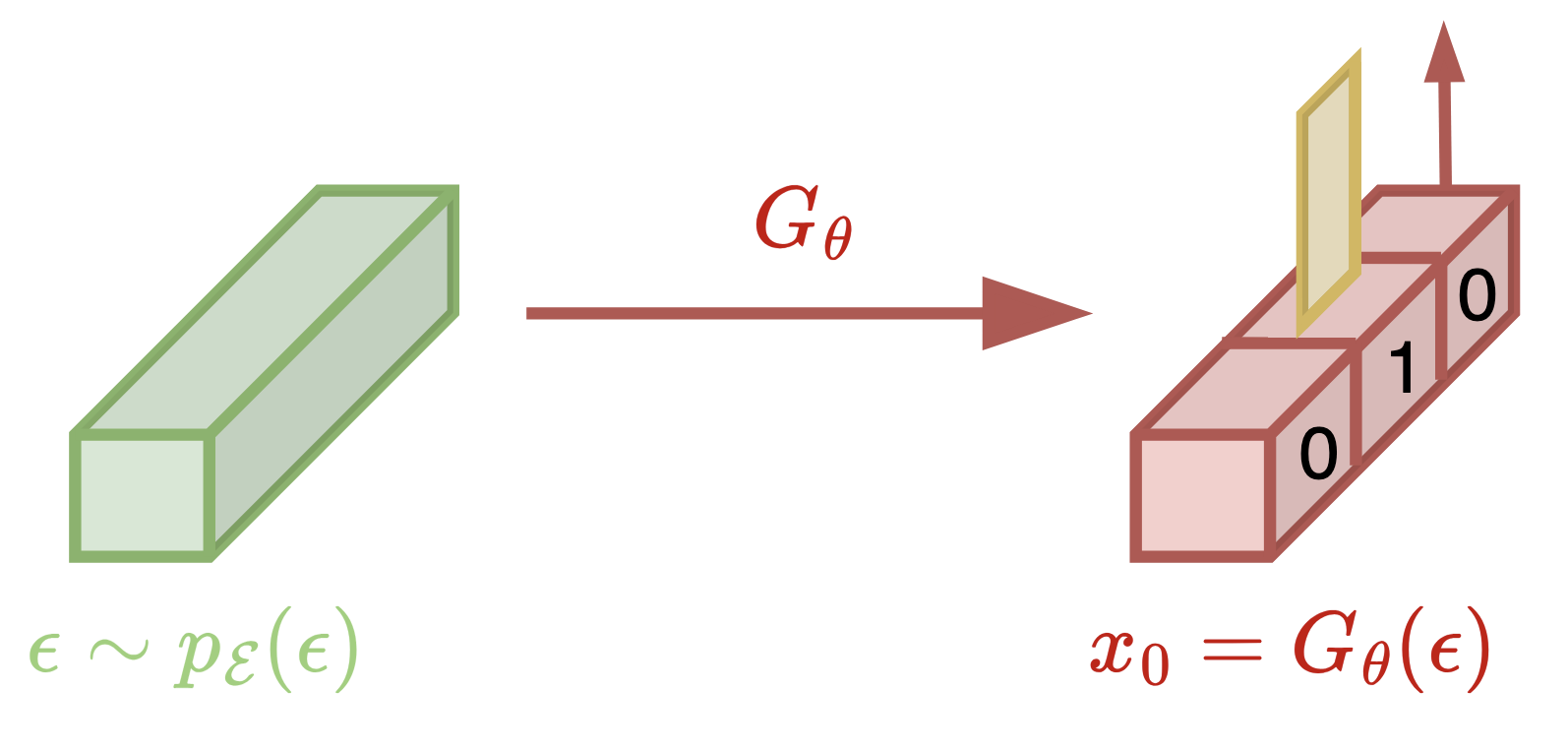}
        \caption{Hard generator output.}
    \end{subfigure}
    \hfill
    \begin{subfigure}[b]{0.39\textwidth}
        \centering
        \includegraphics[width=\linewidth]{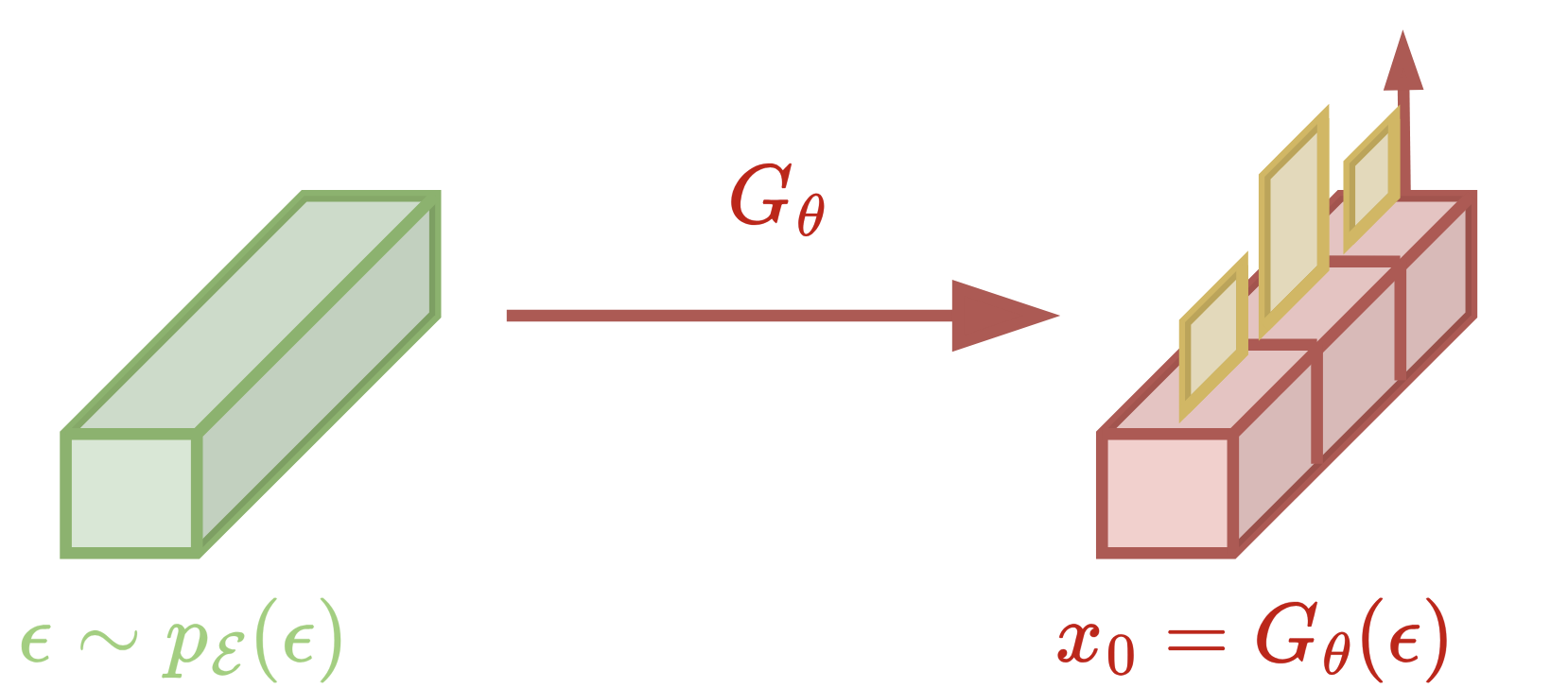}
        \caption{Simplex generator output.}
    \end{subfigure}
    \caption{Simplex relaxation for IDLM-MDLM, following
    Section~\ref{sec:simplex relaxation}. A generated token is ultimately
    interpreted as a one-hot vocabulary choice, but the generator update uses
    $G_\theta(\epsilon)\in\Delta$ so gradients can weight several candidate
    tokens before a hard sample is chosen.}
    \label{fig:mdlm-simplex-relaxation}
\end{figure*}

This also changes how the token loss should be read. With a hard one-hot
target, the inner product with $\log f(x_t,t)$ simply picks the log-probability
of one token. With $G_\theta(\epsilon)\in\Delta$, the same inner product becomes
a weighted sum over token log-probabilities, as shown in
Figure~\ref{fig:mdlm-token-loss}.

\begin{figure*}[t]
    \centering
    \begin{subfigure}[b]{0.48\textwidth}
        \centering
        \includegraphics[width=\linewidth]{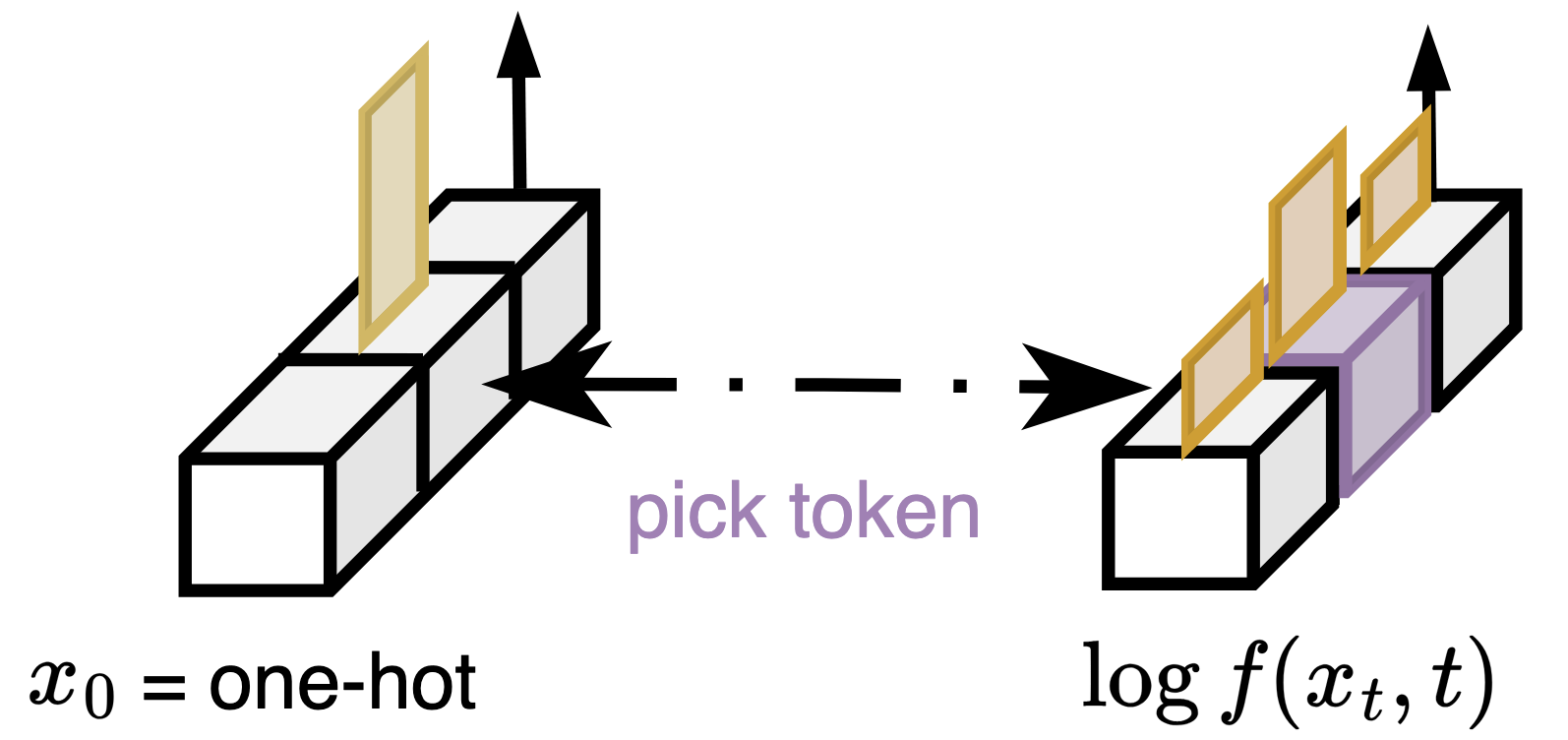}
        \caption{One-hot target: pick one coordinate.}
    \end{subfigure}
    \hfill
    \begin{subfigure}[b]{0.48\textwidth}
        \centering
        \includegraphics[width=\linewidth]{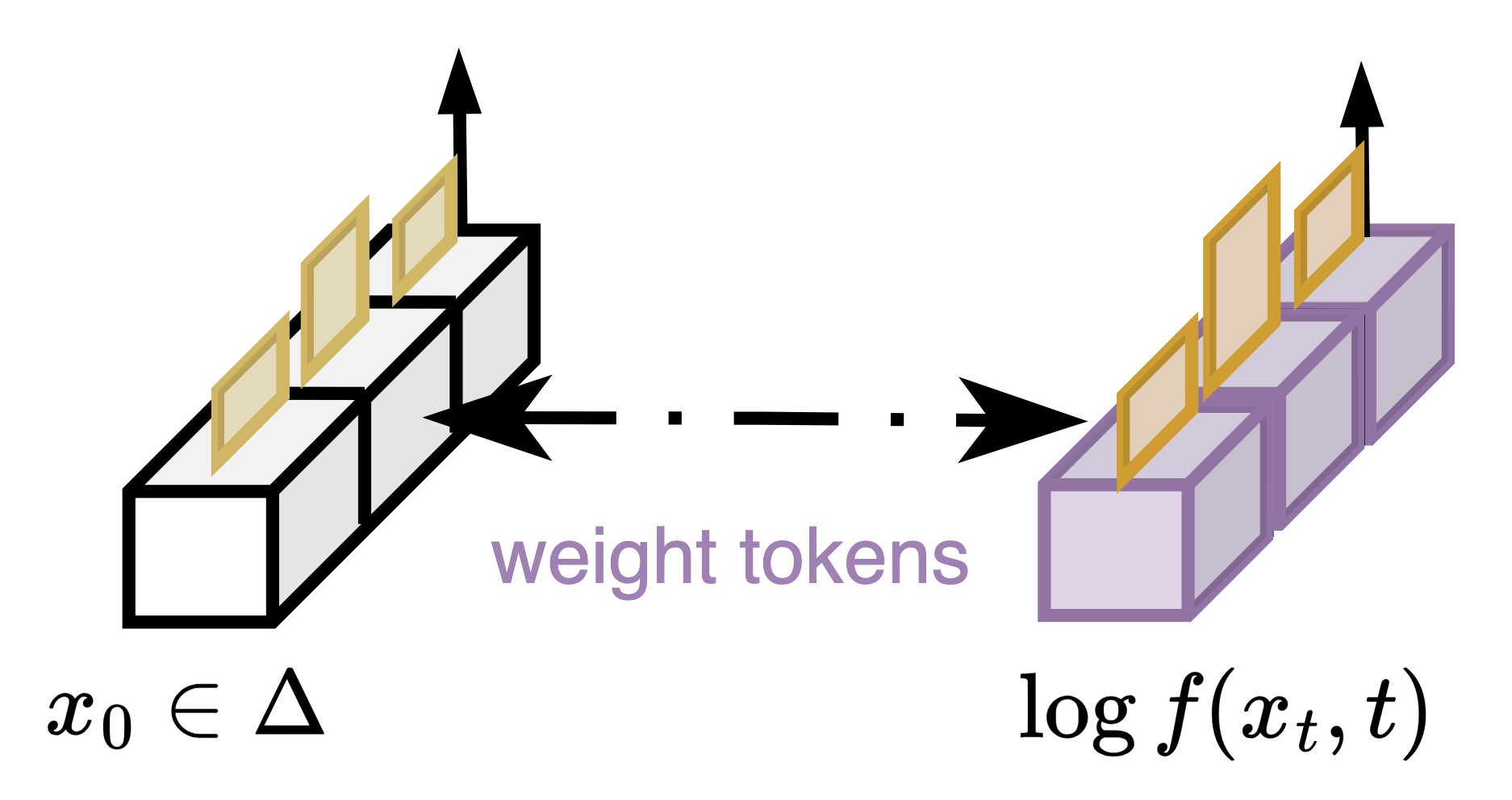}
        \caption{Simplex target: weight token coordinates.}
    \end{subfigure}
    \caption{Loss evaluation under the simplex relaxation.
    The one-hot case selects a single log-probability from $\log f(x_t,t)$,
    while the simplex case evaluates the weighted sum
    $\langle G_\theta(\epsilon),\log f(x_t,t)\rangle$.}
    \label{fig:mdlm-token-loss}
\end{figure*}

\paragraph{Mask-only cancellation.}
The \textsc{subs} parameterization copies any already unmasked token:
\[
    x_t\neq m
    \quad\Longrightarrow\quad
    f^*(x_t,t)=f(x_t,t)=x_t .
\]
Therefore, for non-mask states, the teacher and fake model make the same
prediction and their contributions cancel in the IDLM gap. Only masked states
remain. Figure~\ref{fig:mdlm-subs-cancellation} gives the presentation view of
this cancellation. If the forward process returns an unmasked state, the state
is already the clean token, so both the teacher and the fake model copy it. If
the forward process returns the mask state, the model must infer the clean
token, and this is where the IDLM signal survives.

\begin{figure*}[t]
    \centering
    \includegraphics[width=0.95\textwidth]{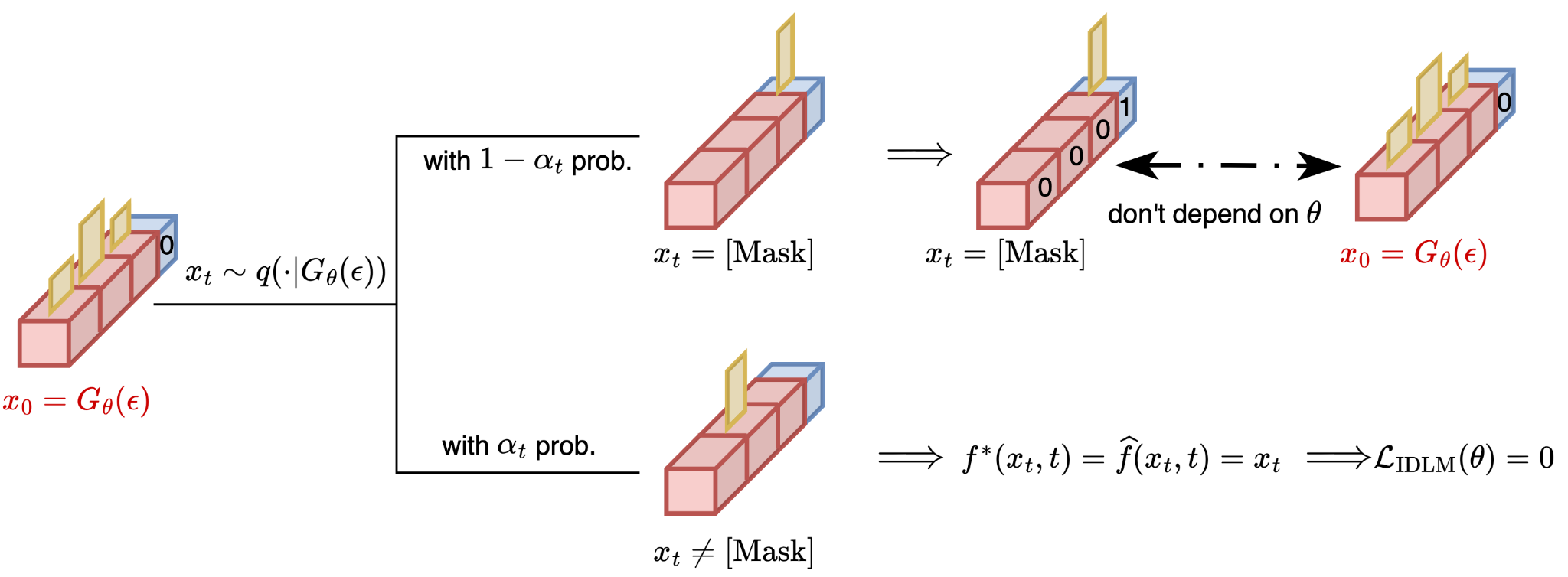}
    \caption{Mask-only cancellation under \textsc{subs}. The
    masked forward process produces either an unmasked token with probability
    $\alpha_t$ or the mask state with probability $1-\alpha_t$. The unmasked
    branch is copied by \textsc{subs}, so teacher and fake predictions agree and
    its IDLM contribution cancels. The remaining generator signal comes from
    the masked branch.}
    \label{fig:mdlm-subs-cancellation}
\end{figure*}

If $G_\theta(\epsilon)$ denotes the simplex-valued student output used
in the generator update, the MDLM term reduces to the mask-only loss used in
Section~\ref{sec:practical extension of idlm}:
\[
    \LC_{\mathrm{MDLM}}(f,p_\theta)
    =
    -\EE_{\epsilon,t}
    \left[
    (1-\alpha_t)\lambda_t
    \left\langle
    G_\theta(\epsilon),\log f(m,t)
    \right\rangle
    \right].
\]

\paragraph{Teacher-over-fake advantage.}
Consequently, the IDLM-MDLM generator update is driven by the relative
advantage of the teacher over the fake model at the mask state:
\[
    \LC_{\mathrm{IDLM-MDLM}}(\theta)
    =
    -\EE_{\epsilon,t}
    \left[
    (1-\alpha_t)\lambda_t
    \left\langle
    G_\theta(\epsilon),
    \log f^*(m,t)-\log f(m,t)
    \right\rangle
    \right].
\]
This is the mask-only signal used in Section~\ref{sec:practical extension of idlm}.
The sign structure gives the update an attraction--repulsion interpretation.
This is the same teacher-over-fake view developed in
Section~\ref{sec:loss intuition}.
The teacher term attracts the student toward tokens supported by the pretrained
model, while the fake-model term repels tokens that are already easy for a
model trained on the current student samples. Therefore, the student is not
simply pushed toward the teacher's most likely token. A token is promoted when
the teacher assigns it higher log-probability than the fake model does. Once
the fake model catches up with the teacher on student samples, this relative
signal disappears.

\paragraph{Why \textsc{subs} is special.}
The practical benefit is that the generator does not need to backpropagate
through a sampled intermediate token $x_t$. The only generator-dependent object
in the update is the differentiable simplex-valued output $G_\theta(\epsilon)$.
In DMD-like reductions of inverse distillation, an explicit stop-gradient is
usually needed to remove auxiliary gradients through intermediate states
\citep{selikhanovych2025one}. For MDLM, the \textsc{subs} parameterization
effectively handles this cancellation itself: non-mask states are copied and
masked states use the fixed input $m$. In this sense, \textsc{subs} implicitly
applies a stop-gradient to the intermediate state in the generator update. The
resulting mask-only expression resembles the token-level distribution-matching
loss used by discrete MMD distillation~\citep{hoogeboom2026beyond}.
The resemblance is structural rather than exact: D-MMD uses a different
posterior for sampling the intermediate state.
This is a special property of masked diffusion rather than a generic property
of all discrete diffusion processes. In more revisable processes such as
uniform diffusion, gradients through the intermediate state can change the
training behavior substantially; this is tested in Section~\ref{sec:ablation}
and Appendix~\ref{app:ablation-results}.

\paragraph{Beyond \textsc{subs}.}
One can try to pass gradients through masked-diffusion intermediate states
using Gumbel or Argmax relaxations, but this introduces a different problem:
the teacher MDLM is trained on one-hot or mask-token inputs, whereas these
relaxations may feed it simplex-valued states. The resulting teacher signal can
therefore be weaker or less stable. This is why the main IDLM-MDLM update uses
the mask-only \textsc{subs} signal and treats the other relaxations as
ablations in Appendix~\ref{app:ablation-results}.
\end{reviewblock}

\clearpage
\section{\texorpdfstring{\reviewchange{Uniform Diffusion}}{Uniform Diffusion}}
\label{app:uniform process}

\begin{reviewblock}
\reviewchange{This appendix gives the process-specific details for uniform diffusion used by UDLM and Duo in Section~\ref{sec:uniform process duo}, and then explains the corresponding IDLM update from Section~\ref{sec:loss intuition}.}

\subsection{Uniform Diffusion Formulation and Theory}
\label{app:uniform-formulation-theory}
Section~\ref{sec:uniform process duo} introduces uniform diffusion in compact form. Here we unpack the same process: first the CTMC generator and forward kernel, then the UDLM/Duo objective, the Gaussian relaxation used by Duo, and the reverse sampler. This makes explicit which quantities are reused later by the inverse distillation framework in Appendix~\ref{app:uniform-inverse-distillation}.

\paragraph{Forward process.}
Following the general CTMC notation of Appendix~\ref{app:details of considered processes}, the uniform process has terminal distribution $\pi=\frac{1}{N}\vec{1}$. Unlike masked diffusion in Appendix~\ref{app:absorbing process}, which only moves tokens toward the mask, the uniform process lets every token transition to every other token at the same rate. Its base matrix is
\begin{equation}
    Q_{\text{uni}} =
    \begin{bmatrix}
        1 - N & 1 & \cdots & 1 \\
        1 & 1 - N & \cdots & 1 \\
        \vdots & \vdots & \ddots & \vdots \\
        1 & 1 & \cdots & 1 - N
    \end{bmatrix}.
\end{equation}
The time-dependent diffusion matrix is
\begin{equation}
\label{eq: uniform diffusion matrix}
    Q_t = \sigma_t Q_{\text{uni}} =
    \begin{bmatrix}
        \sigma_t(1 - N) & \sigma_t & \cdots & \sigma_t \\
        \sigma_t & \sigma_t(1 - N) & \cdots & \sigma_t \\
        \vdots & \vdots & \ddots & \vdots \\
        \sigma_t & \sigma_t & \cdots & \sigma_t(1 - N)
    \end{bmatrix}.
\end{equation}
Its cumulative transition matrix is
\begin{equation}
    \exp(\bar{\sigma}_t Q_{\text{uni}}) =
    \begin{bmatrix}
        \exp(-N\bar{\sigma}_t) + \frac{1-\exp(-N\bar{\sigma}_t)}{N}
        & \frac{1-\exp(-N\bar{\sigma}_t)}{N} & \cdots & \frac{1-\exp(-N\bar{\sigma}_t)}{N} \\
        \frac{1-\exp(-N\bar{\sigma}_t)}{N}
        & \exp(-N\bar{\sigma}_t) + \frac{1-\exp(-N\bar{\sigma}_t)}{N}
        & \cdots & \frac{1-\exp(-N\bar{\sigma}_t)}{N} \\
        \vdots & \vdots & \ddots & \vdots \\
        \frac{1-\exp(-N\bar{\sigma}_t)}{N}
        & \frac{1-\exp(-N\bar{\sigma}_t)}{N}
        & \cdots & \exp(-N\bar{\sigma}_t) + \frac{1-\exp(-N\bar{\sigma}_t)}{N}
    \end{bmatrix}.
\end{equation}
With $\alpha_t \vcentcolon= \exp(-N\bar{\sigma}_t)$, the conditional distribution used in the main text is therefore
\[
    q_t(x_t\mid x_0)
    = \mathrm{Cat}\bigl(x_t;\alpha_t x_0 + (1-\alpha_t)\tfrac{1}{N}\vec{1}\bigr).
\]
Thus $x_t$ either keeps the clean token or mixes with the uniform distribution; unlike masked diffusion, a trajectory can revise the same position multiple times.

\paragraph{UDLM and Duo objective.}
UDLM~\citep{schiff2024discreteguidance} keeps the clean-token predictor $f(x_t,t)$ but uses the uniform noising law. Duo~\citep{sahoo2025the} uses the same discrete process while evaluating the denoiser on a relaxed input. The main text writes their shared training objective through the compact token-level function $g_{\mathrm{Duo}}$ in Eq.~\eqref{eq:duo loss}. Below we write this function explicitly.

Let $q_t(\cdot\mid x)$ denote the uniform-process transition distribution at time $t$. We use $\tilde{x}_t$ for the input passed to the denoiser and $\bar{x}_t$ for the hard intermediate token whose probability is evaluated in the token-level loss. For UDLM these two objects coincide, $\tilde{x}_t=\bar{x}_t=x_t$. For Duo, $\tilde{x}_t$ is the relaxed model input and $\bar{x}_t$ is the hard token induced by that relaxation. For the corresponding model prediction, write
\[
    f_t\vcentcolon=f(\tilde{x}_t,t).
\]
For the uniform CTMC above, the scalar weight becomes \citep{rutte2025generalized}
\[
    \kappa_t(\bar{x}_t,x_0)
    \vcentcolon=
    \frac{\sigma_t}{q_t(\bar{x}_t\mid x_0)}.
\]
The pointwise Itakura--Saito divergence is
\[
    D_{\mathrm{IS}}(a\|b) \vcentcolon= \frac{a}{b} - \log \frac{a}{b} - 1.
\]
The integrand used in Eq.~\eqref{eq:duo loss} is
\begin{equation}
\label{eq:uniform duo integrand}
g_{\mathrm{Duo}}\bigl(\bar{x}_t,x_0,f_t\bigr)
\vcentcolon=
\lambda_t\kappa_t(\bar{x}_t,x_0)\Bigl[\DC_{\mathrm{KL}}\bigl(q_t(\cdot\mid x_0)\|q_t(\cdot\mid f_t)\bigr) + D_{\mathrm{IS}}\bigl(q_t(\bar{x}_t\mid x_0)\|q_t(\bar{x}_t\mid f_t)\bigr)\Bigr].
\end{equation}
We keep $\lambda_t$ as the optional positive time reweighting used in the main notation. The KL term compares the true and predicted noised distributions. The Itakura--Saito term compares the probability assigned to the currently observed token $\bar{x}_t$, which is the term that later yields the holding-time intuition.

For comparison with the original CTMC notation, the UDLM objective can be written as
\begin{equation}
\label{eq:udlm loss}
\LC_{\mathrm{UDLM}}(f,x_0) = \int_0^1 \EE_{x_t\sim q_t(\cdot\mid x_0)}\bigl[g_{\mathrm{UDLM}}\bigl(x_t,x_0,f(x_t,t)\bigr)\bigr]dt,
\end{equation}
where $g_{\mathrm{UDLM}}$ is the hard-token version of Eq.~\eqref{eq:uniform duo integrand}: set $\tilde{x}_t=\bar{x}_t=x_t$ and $f_t=f(x_t,t)$. Here $q_t(\cdot\mid f_t)$ is obtained by applying the linear forward kernel to the predicted clean-token distribution.

\paragraph{Duo Gaussian relaxation.}
Duo exposes a Gaussian relaxation that makes the noised model input differentiable while keeping the same limiting discrete process. Following the notation of Section~\ref{sec:loss intuition}, introduce an auxiliary Gaussian noise variable and set
\[
    \xi\sim\NC(0,I),
    \qquad
    w_t=\tilde{\alpha}_t x_0+\sqrt{1-\tilde{\alpha}_t^2}\,\xi,
\]
where $\tilde{\alpha}_t$ is the rescaled Duo schedule. The hard intermediate token is obtained by
\[
    \bar{x}_t=x_t(w_t) \vcentcolon= \arg\max(w_t),
    \qquad
    \bar{x}_t\sim q_t(\cdot\mid x_0),
\]
where $\arg\max(w_t)$ denotes the one-hot vector at the largest coordinate. Instead of feeding this hard token to the model, Duo uses the soft approximation
\[
    \tilde{x}_t=x_t^\tau(w_t) = \mathrm{softmax}(w_t/\tau),
\]
which gives the relaxed parameterization $f\colon\Delta\times[0,1]\to\Delta$. With the notation of Eq.~\eqref{eq:duo loss}, Duo evaluates the same KL plus Itakura--Saito divergence objective at $g_{\mathrm{Duo}}(\bar{x}_t,x_0,f(\tilde{x}_t,t))$. As $\tau\to0^+$, the soft input recovers the hard uniform-process token and the objective recovers the UDLM/NELBO limit.

\paragraph{Sampling procedure.}
Sampling starts from the uniform terminal distribution and repeatedly applies reverse transitions estimated from the clean-token predictor. For $0\leq s<t\leq1$, set $\alpha_{t\mid s}\vcentcolon=\alpha_t/\alpha_s$ and $f_t\vcentcolon=f(x_t,t)$. The learned reverse transition used in Section~\ref{sec:uniform process duo} is
\[
p^f_{s\mid t}(x_s\mid x_t)
=
\mathrm{Cat}\!\left(x_s;\frac{N\alpha_t\,x_t\odot f_t+(\alpha_{t\mid s}-\alpha_t)x_t+(\alpha_s-\alpha_t)f_t+(1-\alpha_{t\mid s})(1-\alpha_s)\vec{1}/N}{N\alpha_t\langle x_t,f_t\rangle+1-\alpha_t}\right).
\]
This reverse step can either keep the current token or move to another token, so the sampler has a timing decision in addition to a token-choice decision, as illustrated in Figure~\ref{fig:appendix process intuition}. This is the main structural difference from masked diffusion.

\subsection{Inverse Distillation for Uniform Diffusion}
\label{app:uniform-inverse-distillation}
We now connect the uniform diffusion process to the IDLM objective. The ideal
inverse-distillation loss compares the pretrained teacher $f^*$ with the best
diffusion model that could be trained only on the current student distribution:
\begin{equation}
\label{eq:idlm-duo-compact}
    \LC_{\mathrm{IDLM\text{-}Duo}}(\theta)
    \vcentcolon=
    \LC_{\mathrm{Duo}}(f^*,p_\theta)
    -
    \min_{\tilde f}\LC_{\mathrm{Duo}}(\tilde f,p_\theta).
\end{equation}
In practice, as in Section~\ref{sec:loss intuition}, we alternate between
training the fake model and updating the generator. During the generator step,
the fake model $f$ is fixed. Write
\[
    f_t^* \vcentcolon= f^*(\tilde{x}_t,t),
    \qquad
    f_t \vcentcolon= f(\tilde{x}_t,t)
\]
for the teacher and fake predictions at the Duo input. The objective becomes
\begin{equation}
\label{eq:idlm-duo-fixed}
\begin{aligned}
    \LC_{\mathrm{IDLM\text{-}Duo}}(\theta)
    &\vcentcolon=
    \LC_{\mathrm{Duo}}(f^*,p_\theta)
    -
    \LC_{\mathrm{Duo}}(f,p_\theta)
    \\
    &=
    \EE_{\epsilon,t,\xi}
    \Bigl[
    \lambda_t\kappa_t(\bar{x}_t,G_\theta(\epsilon))
    \Bigl(
    \\
    &\quad
    \sum_{y\in\VC}q_t(y\mid G_\theta(\epsilon))\log\frac{q_t(y\mid f_t)}{q_t(y\mid f_t^*)}
    + q_t(\bar{x}_t\mid G_\theta(\epsilon))
    \left(\frac{1}{q_t(\bar{x}_t\mid f_t^*)}-\frac{1}{q_t(\bar{x}_t\mid f_t)}\right)
    + \log\frac{q_t(\bar{x}_t\mid f_t^*)}{q_t(\bar{x}_t\mid f_t)}
    \Bigr)
    \Bigr].
\end{aligned}
\end{equation}
Here $G_\theta(\epsilon)$ is the relaxed student sample, and
$(\bar{x}_t,\tilde{x}_t)$ are the hard and relaxed Duo states defined below.
This is the same teacher-minus-fake principle as in Appendix~\ref{app:mdlm-inverse-distillation}, but the uniform process does not collapse to a mask-only term.

\paragraph{Simplex relaxation.}
We use the same simplex relaxation as in Section~\ref{sec:simplex relaxation}
and Appendix~\ref{app:mdlm-inverse-distillation}. The generator output is
allowed to be a token distribution,
\[
    G_\theta(\epsilon)\in\Delta ,
\]
which is the same relaxation illustrated in
Figure~\ref{fig:mdlm-simplex-relaxation}. The uniform forward kernel is linear
in the clean token, so it remains well-defined on this relaxed output:
\[
    q_t(\cdot\mid G_\theta(\epsilon))
    =
    \alpha_tG_\theta(\epsilon)+(1-\alpha_t)\tfrac{1}{N}\vec{1}
    \in\Delta .
\]
Thus, as in the masked case, the generator can receive gradients through
weighted token probabilities instead of through a hard one-hot choice.

\paragraph{Gaussian reparameterization.}
The second issue is the intermediate uniform token. Unlike masked diffusion,
uniform diffusion can move from any token to any other token, so the sampled
intermediate state cannot be removed by the \textsc{subs} cancellation. Duo
resolves this by sampling a Gaussian latent variable whose noise is independent
of the generator output:
\[
    \xi\sim\NC(0,I),
    \qquad
    w_t=\tilde{\alpha}_tG_\theta(\epsilon)
    +\sqrt{1-\tilde{\alpha}_t^2}\,\xi .
\]
The hard token used in the token-level loss and the relaxed input passed to
the denoiser are
\[
    \bar{x}_t=\arg\max(w_t),
    \qquad
    \tilde{x}_t=\mathrm{softmax}(w_t/\tau).
\]
Equivalently, the full differentiable transition from the generator output to
the relaxed Duo input is
\[
    G_\theta(\epsilon)
    \mapsto
    \tilde{x}_t
    =
    \mathrm{softmax}\!\left(
    \frac{\tilde{\alpha}_tG_\theta(\epsilon)
    +\sqrt{1-\tilde{\alpha}_t^2}\,\xi}{\tau}
    \right),
    \qquad
    \xi\sim\NC(0,I).
\]
These definitions make the fixed-fake objective in Eq.~\eqref{eq:idlm-duo-fixed}
fully reparameterized and keep the intermediate-state path differentiable.

\paragraph{Teacher-over-fake advantage.}
Following the local-advantage notation of Section~\ref{sec:loss intuition}, define
\[
a_t \vcentcolon= g_{\mathrm{Duo}}(\bar{x}_t,G_\theta(\epsilon),f_t^*) - g_{\mathrm{Duo}}(\bar{x}_t,G_\theta(\epsilon),f_t).
\]
Using Eq.~\eqref{eq:uniform duo integrand}, and dropping the common positive
weight $\lambda_t\kappa_t(\bar{x}_t,G_\theta(\epsilon))$, the local advantage
expands to
\[
\begin{aligned}
\bigl(\lambda_t\kappa_t(\bar{x}_t,G_\theta(\epsilon))\bigr)^{-1}a_t
&=
-\left\langle q_t(\cdot\mid G_\theta(\epsilon)),\log\frac{q_t(\cdot\mid f_t^*)}{q_t(\cdot\mid f_t)}\right\rangle
\\
&\quad
+ q_t(\bar{x}_t\mid G_\theta(\epsilon))
\left(\frac{1}{q_t(\bar{x}_t\mid f_t^*)}-\frac{1}{q_t(\bar{x}_t\mid f_t)}\right)
+ \log\frac{q_t(\bar{x}_t\mid f_t^*)}{q_t(\bar{x}_t\mid f_t)}.
\end{aligned}
\]
The first term is the distribution-level teacher-over-fake signal at time $t$.
The remaining terms compare how much probability the teacher and fake model
assign to the current intermediate token $\bar{x}_t$.

\paragraph{Intuition.}
Uniform diffusion can revise the same position more than once. At each reverse
step, the sampler decides whether to keep the current token or replace it.
The first term in the local advantage is the time-$t$ analogue of the
distribution-level attraction--repulsion signal from masked diffusion
(Appendix~\ref{app:mdlm-inverse-distillation}):
\begingroup\small
\[
-\left\langle q_t(\cdot\mid G_\theta(\epsilon)),
\log\frac{q_t(\cdot\mid f_t^*)}{q_t(\cdot\mid f_t)}\right\rangle
=
\DC_{\mathrm{KL}}\!\left(q_t(\cdot\mid G_\theta(\epsilon))\|q_t(\cdot\mid f_t^*)\right)
-
\DC_{\mathrm{KL}}\!\left(q_t(\cdot\mid G_\theta(\epsilon))\|q_t(\cdot\mid f_t)\right).
\]
\endgroup
Thus, it favors generator samples whose distribution after uniform noising at
time $t$ is closer to the teacher than to the fake model. Compared with masked
diffusion, this comparison is not made directly on clean-token distributions:
the uniform CTMC first mixes each token toward the uniform stationary
distribution, and the amount of mixing depends on $t$. The signal is therefore
a difference between KL divergences of the time-dependent noised distributions.
The holding-time intuition is controlled by the probability assigned to the
current intermediate token: larger $q_t(\bar{x}_t\mid f_t^*)$ corresponds to a
longer teacher holding time at $\bar{x}_t$, while smaller
$q_t(\bar{x}_t\mid f_t^*)$ corresponds to a shorter one. This effect appears
directly in the current-token part of Eq.~\eqref{eq:idlm-duo-fixed}:
\[
q_t(\bar{x}_t\mid G_\theta(\epsilon))
\left(
\frac{1}{q_t(\bar{x}_t\mid f_t^*)}
-
\frac{1}{q_t(\bar{x}_t\mid f_t)}
\right)
+
\log\frac{q_t(\bar{x}_t\mid f_t^*)}{q_t(\bar{x}_t\mid f_t)}.
\]
For a fixed intermediate token, the coefficient of
$q_t(\bar{x}_t\mid G_\theta(\epsilon))$ is
\[
c_t(\bar{x}_t)
\vcentcolon=
\frac{1}{q_t(\bar{x}_t\mid f_t^*)}
-
\frac{1}{q_t(\bar{x}_t\mid f_t)}.
\]
If $q_t(\bar{x}_t\mid f_t^*)$ is small, then $c_t(\bar{x}_t)$ is large and
positive, so minimizing the loss pushes the generator to reduce the probability
of clean samples whose noising lands at $\bar{x}_t$. This agrees with the
holding-time view: the teacher reverse process has a large exit rate at such an
intermediate state and would leave it quickly, so the generator is discouraged
from producing samples whose uniform noising often visits that state. If the
teacher probability is large while the fake-model probability is small, then
$c_t(\bar{x}_t)<0$. Since it minimizes the IDLM loss, the generator is pushed
toward clean samples whose uniform noising visits such teacher-supported states
more often; the small fake-model probability strengthens this preference through
the negative term $-1/q_t(\bar{x}_t\mid f_t)$. This is the uniform-process
analogue of the attraction--repulsion interpretation in
Appendix~\ref{app:mdlm-inverse-distillation}: teacher-supported states attract
the generator, while the fake model's low probability makes this signal
stronger. In the next alternating step, the fake model is trained on the updated generator
distribution, so it adapts to these states and reduces the teacher--fake gap
there. Thus, the uniform loss teaches both which token should be chosen and how
long the sampler should stay in the current intermediate state.

\paragraph{DMD-like stop-gradient for uniform diffusion.}
The ablation in Section~\ref{sec:ablation} and
Appendix~\ref{app:ablation-results} compares the full IDLM-Duo update with a
stop-gradient variant. In the idealized objective, IDLM matches complete diffusion path
measures,
\[
    \LC_{\mathrm{IDLM}}(\theta)
    =
    \DC_{\mathrm{KL}}(\PP^\theta\|\PP^*),
\]
where $\PP^\theta$ and $\PP^*$ are the path measures induced by the student and
data distributions. For any time $t$, let
\[
    p_t^\theta(x_t)=\int q_t(x_t\mid x_0)\,p_\theta(dx_0),
    \qquad
    p_t^*(x_t)=\int q_t(x_t\mid x_0)\,p^*(dx_0).
\]
A DMD-like reduction, as shown by RSD~\citep{selikhanovych2025one}, replaces
the path-measure objective by noised-marginal matching,
\[
    \LC_{\mathrm{DMD\text{-}like}}(\theta)
    =
    \EE_t\!\left[
    \DC_{\mathrm{KL}}(p_t^\theta\|p_t^*)
    \right],
\]
and treats the intermediate state as fixed in the generator update. In the Duo
parameterization this corresponds to replacing the full relaxed path
\[
    w_t
    =
    \tilde{\alpha}_tG_\theta(\epsilon)
    +\sqrt{1-\tilde{\alpha}_t^2}\,\xi
\]
by its stop-gradient version
\[
    w_t^{\mathrm{sg}}
    =
    \tilde{\alpha}_t\mathrm{sg}\!\left(G_\theta(\epsilon)\right)
    +\sqrt{1-\tilde{\alpha}_t^2}\,\xi .
\]
Thus, the DMD-like stop-gradient objective optimizes a marginal matching problem
rather than the path-level IDLM-Duo objective. Empirically, this distinction is
important: Figure~\ref{fig:ablation-uniform} shows that IDLM-Duo with stop-gradient
quickly enters a high-entropy, high-GenPPL regime, while the full IDLM-Duo update
remains stable.

\begin{figure}[t]
    \centering
    \includegraphics[width=\textwidth]{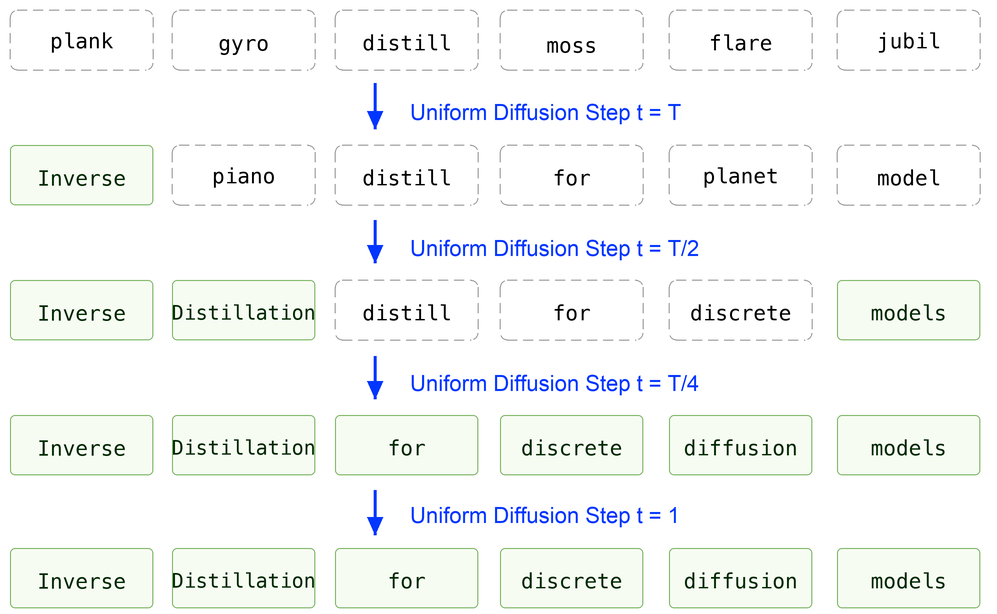}
    \caption{Uniform diffusion sampling. Unlike masked diffusion, a position can be revised multiple times, so each step must decide both whether to hold or change the current token and which token should replace it.}
    \label{fig:appendix process intuition}
\end{figure}
\end{reviewblock}

\section{Proofs}
\label{app: proofs}

\paragraph{Notation and idealized assumptions.}
Recall that
\[
    \VC=\{x\in\{0,1\}^N:\sum_{i=1}^N x_i=1\}
\]
is the set of one-hot tokens, and let \(D([0,T],\VC)\) be the Skorokhod space of
cadlag paths with values in \(\VC\). For a path measure \(\mathbb Q\), write
\(\mathbb Q_t\) for its marginal at time \(t\).

We assume that the teacher exactly minimizes the expected loss and therefore
induces the true reverse path measure, \(\PP^{f^*}=\PP^*\).
We also assume that the minimization over \(f\) is performed over a sufficiently
rich model class, so the exact reverse predictor is available for every clean
distribution considered. We assume that the student path does not use
transitions that are impossible under the teacher path. If
\(\PP^\theta\not\ll\PP^*\), then
\(\DC_{\mathrm{KL}}(\PP^\theta\|\PP^*)=+\infty\), so the main theorem becomes
immediate.

\paragraph{Time convention.}
For a clean distribution \(p\), let \(\PP^p\) denote the path measure associated
with the fixed forward noising process, written in reverse-time coordinates so
that the clean sample is the terminal marginal of \(\PP^p\). Its forward-time
noised marginal at noise level \(t\) is
\[
    p_t(x_t)=\int q_t(x_t\mid x_0)\,p(dx_0).
\]
We write \(\PP^\theta=\PP^{p_\theta}\) and \(\PP^*=\PP^{p^*}\). Since time
reversal is a deterministic reparameterization of paths, the KL divergence is
unchanged by choosing either coordinate convention.

\subsection{Connection with the main text}
\label{app: connection with the main text}

DLM training can be viewed as fitting the reverse-time path measure associated
with a fixed forward noising process. Following the main text, we write this
using expected losses of the form \(\LC(f,p)\), where \(p\) is a clean-sample
distribution. For SEDD, Dynkin's formula~\citep{hanson2007applied,
campbell2022continuous} gives the
following distribution-level objective, up to terms independent of the optimized
model:
\begin{equation}
\label{eq:sedd-distribution}
\begin{aligned}
    \LC_{\mathrm{SEDD}}(f,p)
    =
    \EE_{p(x_0),t,q_t}
    \sum_{y\neq x_t}
    \langle x_t,y\rangle_{Q_t}
    \bigl(
      \langle f(x_t,t),y\rangle
      -
      \langle s(x_t,t\mid x_0),y\rangle
      \log \langle f(x_t,t),y\rangle
    \bigr).
\end{aligned}
\end{equation}
This is the SEDD instance of the unified loss \(\LC(f,p)\) used in
Equation~\eqref{eq:sedd loss}. For a generic clean distribution \(p\), define
\[
    f^p=\arg\min_f \LC_{\mathrm{SEDD}}(f,p).
\]
At this minimizer,
\begin{equation}
\label{eq:posterior-score}
    \langle f^p(x_t,t),y\rangle
    =
    \EE_{p(x_0\mid x_t)}
    \langle s(x_t,t\mid x_0),y\rangle .
\end{equation}
In particular, \(f^*=f^{p^*}\) denotes the exact teacher and
\(f^\theta=f^{p_\theta}\) denotes the exact fake model for the student
distribution.

\subsection{Proof of the theorem}
\label{app: proof of the theorem}

We prove the theorem in three steps. First, we show that the path KL equals the
SEDD inverse-distillation gap. Second, we transfer the result to MDLM and the
uniform objective, with Duo covered in the hard-token limit. Third, we use the
chain rule to show that the path KL controls the clean-data KL.

\begin{lemma}[SEDD path KL equals the inverse gap]
\label{lem:sedd-path-kl-gap}
Under the population assumptions above,
\begin{equation}
\label{eq:sedd-path-kl-gap}
    \DC_{\mathrm{KL}}(\PP^\theta\|\PP^*)
    =
    \LC_{\mathrm{SEDD}}(f^*,p_\theta)
    -
    \min_f \LC_{\mathrm{SEDD}}(f,p_\theta).
\end{equation}
\end{lemma}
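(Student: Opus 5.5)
The plan is to express both sides as integrals over time of local rate-matching quantities for the reverse-time CTMC, and match them term by term.

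First, write the KL divergence between the two reverse path measures using the Girsanov-type formula for CTMCs. Both $\PP^\theta$ and $\PP^*$, in reverse-time coordinates, start from the same terminal law $\pi$, so the initial-marginal contribution vanishes. Their reverse generators are $\overline Q_t^{f^\theta}$ and $\overline Q_t^{f^*}$, with off-diagonal rates
\[
\langle y,x\rangle_{\overline Q_t^{f}}=\langle f(x,t),y\rangle\langle x,y\rangle_{Q_t}.
\]
By \eqref{eq:sedd-reverse-rates}, these are exact for $f^\theta$ under $\PP^\theta$, and for $f^*$ under $\PP^*$ by the idealized assumption $\PP^{f^*}=\PP^*$. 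The standard formula then gives
\[
\DC_{\mathrm{KL}}(\PP^\theta\|\PP^*)=\int_0^1\EE_{x_t\sim p^\theta_t}\sum_{y\neq x_t}\langle x_t,y\rangle_{Q_t}\Bigl(r^*_y-r^\theta_y-r^\theta_y\log\tfrac{r^*_y}{r^\theta_y}\Bigr)dt,
\]
where $r^f_y=\langle f(x_t,t),y\rangle$. Absolute continuity, which is assumed, ensures the expression is finite. The time integral is identified with $\EE_t$ for uniform $t$.

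Second, expand the right-hand side of \eqref{eq:sedd-path-kl-gap}. The minimizer of $\LC_{\mathrm{SEDD}}(\cdot,p_\theta)$ over the rich class is $f^\theta$, by \eqref{eq:posterior-score}. Subtracting the two losses gives exactly \eqref{eq:idlm-sedd-simplified} with $f=f^\theta$. Now use the tower property, conditioning on $x_t$. The only $x_0$-dependence is the linear factor $\langle s(x_t,t\mid x_0),y\rangle$, and its posterior mean under $p_\theta(x_0\mid x_t)$ equals $r^\theta_y$, again by \eqref{eq:posterior-score}. This replaces $s$ by $r^\theta$ and yields the integrand $r^*_y-r^\theta_y-r^\theta_y\log(r^*_y/r^\theta_y)$, averaged over $p^\theta_t$. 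That is term for term the path KL above. The additive constants dropped via Dynkin's formula in \eqref{eq:sedd-distribution} are model-independent, so they cancel in the difference.

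The main obstacle is the first step: stating the CTMC path-KL formula rigorously and fixing the time-direction conventions. Under time reversal, the rates weighted by $\langle x_t,y\rangle_{Q_t}$ must be evaluated at the correct state, so one must check that the jump $x_t\to y$ in reverse time pairs with the forward rate from $y$ to $x_t$. I would derive the formula by discretizing $[0,1]$, writing the KL of the finite-dimensional Markov chains via the chain rule as a sum of expected one-step transition KLs, and expanding each to first order in $h$ using the small-step transition probabilities stated in Appendix~\ref{app:sedd-formulation-theory}. Passing $h\to0$ then recovers the integral, with dominated convergence justified by finiteness of the state space and boundedness of the rates away from $t=0$.
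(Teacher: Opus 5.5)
Your argument is correct. It shares the paper's skeleton (the relative-entropy formula for the two reverse chains, the posterior identity~\eqref{eq:posterior-score}, and the tower property), but it brings in the minimization over $f$ from the opposite direction.

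The paper never plugs in a minimizer. It rewrites the integrand of~\eqref{eq:ctmc-kl} through the linearization $\phi(r)=\max_l\{rl+1-e^l\}$ with $l=\log(\langle f,y\rangle/\langle f^*,y\rangle)$, so the path KL becomes a maximum over $f$ as in~\eqref{eq:sedd-max}. Only then does it use the posterior identity to turn $\langle f^\theta,y\rangle$ into $\langle s(x_t,t\mid x_0),y\rangle$, and it reads off $\max_f\{\LC_{\mathrm{SEDD}}(f^*,p_\theta)-\LC_{\mathrm{SEDD}}(f,p_\theta)\}$. You instead take $f^\theta$ as the known minimizer and form the gap at $f=f^\theta$. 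You then run the tower property from the loss side, replacing $\langle s(x_t,t\mid x_0),y\rangle$ by $r^\theta_y$.

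What each route buys:
\begin{itemize}
\item Your route is shorter. It avoids the convex-duality step and the exchange of a pointwise maximum with the time integral and expectation. Both routes still need the rich-class assumption; yours uses it to place $f^\theta$ in the class.
\item The paper's route does not logically require the minimizer characterization as an input. The fact that the optimal fake model reproduces the true ratios drops out of the variational formula. Only the Bayes identity $\EE_{p_\theta(x_0\mid x_t)}\langle s(x_t,t\mid x_0),y\rangle=p^\theta_t(y)/p^\theta_t(x_t)$ is used.
\end{itemize}

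Your explicit handling of the reverse-time initial law is more careful than the paper. The identity genuinely needs $p^\theta_T=p^*_T=\pi$ (e.g.\ full mixing, $\alpha_T=0$). Otherwise the path KL carries an extra $\DC_{\mathrm{KL}}(p^\theta_T\|p^*_T)$ term, which~\eqref{eq:ctmc-kl} silently omits.

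One small correction to your limiting argument: with full mixing, the rates are not bounded near $t=T$ either. For uniform diffusion, $\sigma_t$ is unbounded there, and so are the reverse rates. You should therefore truncate at both ends and pass to the limit by monotone convergence. This is legitimate because the integrand $\langle f^*,y\rangle\,\phi(\cdot)$ is nonnegative.
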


\begin{proof}
The relative entropy between two continuous-time jump-process path measures can
be written as an integral over the relative jump-rate divergence. In the SEDD
parameterization this gives
\begin{equation}
\label{eq:ctmc-kl}
\begin{aligned}
    \DC_{\mathrm{KL}}(\PP^\theta\|\PP^*)
    &=
    \int_0^T
    \EE_{p_t^\theta(x_t)}
    \sum_{y\neq x_t}
    \langle x_t,y\rangle_{Q_t}
    \langle f^*(x_t,t),y\rangle
    \phi
    \left(
      \frac{\langle f^\theta(x_t,t),y\rangle}
           {\langle f^*(x_t,t),y\rangle}
    \right)dt,
\end{aligned}
\end{equation}
where \(\phi(r)=r\log r-r+1\). Following the linearization trick
of~\citet{kornilov2025universal}, for the positive ratios used below,
\[
    \phi(r)=\max_l \{rl+1-\exp(l)\}.
\]
Using the parameterization
\[
    l(y,x_t,t)
    =
    \log
    \frac{\langle f(x_t,t),y\rangle}
         {\langle f^*(x_t,t),y\rangle},
\]
and taking the maximum over admissible \(f\), Equation~\eqref{eq:ctmc-kl}
becomes
\begin{equation}
\label{eq:sedd-max}
\begin{aligned}
    \DC_{\mathrm{KL}}(\PP^\theta\|\PP^*)
    =
    \max_f
    \int_0^T
    \EE_{p_t^\theta(x_t)}
    \sum_{y\neq x_t}
    \langle x_t,y\rangle_{Q_t}
    \biggl[
      &\langle f^\theta(x_t,t),y\rangle
      \log
      \frac{\langle f(x_t,t),y\rangle}
           {\langle f^*(x_t,t),y\rangle}
      \\
      &+
      \langle f^*(x_t,t)-f(x_t,t),y\rangle
    \biggr]dt.
\end{aligned}
\end{equation}
We now rewrite the expectation in Equation~\eqref{eq:sedd-max}. First,
apply the posterior identity~\eqref{eq:posterior-score} with
\(p=p_\theta\):
\[
    \langle f^\theta(x_t,t),y\rangle
    =
    \EE_{p_\theta(x_0\mid x_t)}
    \langle s(x_t,t\mid x_0),y\rangle .
\]
After substituting this identity, the only term depending on \(x_0\) is
\(\langle s(x_t,t\mid x_0),y\rangle\). The remaining term
\(\langle f^*(x_t,t)-f(x_t,t),y\rangle\) depends only on \(x_t\), \(t\), and
\(y\), so it can be placed inside the same conditional expectation:
\[
\begin{aligned}
&\EE_{p_t^\theta(x_t)}
\sum_{y\neq x_t}
\langle x_t,y\rangle_{Q_t}
\biggl[
  \langle f^\theta(x_t,t),y\rangle
  \log
  \frac{\langle f(x_t,t),y\rangle}
       {\langle f^*(x_t,t),y\rangle}
  +
  \langle f^*(x_t,t)-f(x_t,t),y\rangle
\biggr]
\\
&=
\EE_{p_t^\theta(x_t)}
\EE_{p_\theta(x_0\mid x_t)}
\sum_{y\neq x_t}
\langle x_t,y\rangle_{Q_t}
\biggl[
  \langle s(x_t,t\mid x_0),y\rangle
  \log
  \frac{\langle f(x_t,t),y\rangle}
       {\langle f^*(x_t,t),y\rangle}
  +
  \langle f^*(x_t,t)-f(x_t,t),y\rangle
\biggr].
\end{aligned}
\]
Finally, the joint law of \((x_0,x_t)\) can be written in two equivalent ways:
\[
    p_t^\theta(x_t)p_\theta(x_0\mid x_t)
    =
    p_\theta(x_0)q_t(x_t\mid x_0),
\]
so
\[
    \EE_{p_t^\theta(x_t)}
    \EE_{p_\theta(x_0\mid x_t)}[\cdot]
    =
    \EE_{p_\theta(x_0),\,x_t\sim q_t(\cdot\mid x_0)}[\cdot].
\]
The maximization over \(f\) stays outside this change of variables. Therefore
we obtain
\begin{equation}
\label{eq:sedd-population-recognition}
\begin{aligned}
    \DC_{\mathrm{KL}}(\PP^\theta\|\PP^*)
    =
    \max_f
    \EE_{p_\theta(x_0),t,q_t}
    \sum_{y\neq x_t}
    \langle x_t,y\rangle_{Q_t}
    \biggl[
      &\langle s(x_t,t\mid x_0),y\rangle
      \log
      \frac{\langle f(x_t,t),y\rangle}
           {\langle f^*(x_t,t),y\rangle}
      \\
      &+
      \langle f^*(x_t,t)-f(x_t,t),y\rangle
    \biggr].
\end{aligned}
\end{equation}
The right-hand side is exactly the population SEDD loss difference:
\[
    \max_f
    \left\{
      \LC_{\mathrm{SEDD}}(f^*,p_\theta)
      -
      \LC_{\mathrm{SEDD}}(f,p_\theta)
    \right\}.
\]
Because the first term is independent of \(f\), this equals
\[
    \LC_{\mathrm{SEDD}}(f^*,p_\theta)
    -
    \min_f \LC_{\mathrm{SEDD}}(f,p_\theta).
\]
This proves the claim. Importantly, the maximum remains outside the population
expectation throughout the argument.
\end{proof}

\begin{lemma}[Gap preservation under equivalent losses]
\label{lem:gap-preservation}
Suppose two losses satisfy
\[
    \LC_A(Tf,p)=\LC_B(f,p)+C(p),
\]
where \(C(p)\) is independent of the optimized model \(f\), and \(T\) maps the
candidate class for \(\LC_B\) onto the corresponding candidate class for
\(\LC_A\). Then their inverse gaps agree:
\[
    \LC_A(Tf^*,p)-\min_g \LC_A(g,p)
    =
    \LC_B(f^*,p)-\min_f \LC_B(f,p).
\]
\end{lemma}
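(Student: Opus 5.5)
The identity should follow by direct substitution. The idea is to transport every minimization problem through the map $T$ and observe that the additive constant $C(p)$ cancels in each gap. I would work at a fixed clean distribution $p$; nothing depends on $p$ beyond the fact that $C(p)$ does not depend on the model.

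First, I rewrite the teacher term. Applying the hypothesis with $f=f^*$ gives
\[
\LC_A(Tf^*,p)=\LC_B(f^*,p)+C(p).
\]

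Second, I identify the inner minima. Because $T$ maps the candidate class for $\LC_B$ onto the candidate class for $\LC_A$, every $g$ in the $\LC_A$-class can be written as $g=Tf$ for some $f$ in the $\LC_B$-class. Conversely, every $Tf$ is admissible for $\LC_A$. Hence
\[
\min_g \LC_A(g,p)=\min_f \LC_A(Tf,p)=\min_f\bigl\{\LC_B(f,p)+C(p)\bigr\}=\min_f \LC_B(f,p)+C(p).
\]
Pulling $C(p)$ out of the minimum is legitimate because it does not depend on $f$. I would phrase this step with infima, which also shows that a minimizer of one problem exists if and only if a minimizer of the other does; the image under $T$ of a $\LC_B$-minimizer is a $\LC_A$-minimizer.

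Third, I subtract the two displays. The $C(p)$ terms cancel, which gives exactly the claimed equality of inverse gaps.

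The only point needing care is the surjectivity of $T$ onto the candidate class. Without it, $\min_g\LC_A$ could be strictly smaller than $\min_f\LC_A(Tf,p)$, and the gaps would differ. I would also note implicitly that $C(p)$ must be finite, so that subtracting it is meaningful. I expect no real obstacle here. When this lemma is later applied to MDLM and uniform or Duo versus SEDD, the substantive work lies in exhibiting the concrete $T$ (for instance, sending a clean-token predictor to the induced score ratios) and the constant $C(p)$. That is not part of this lemma.
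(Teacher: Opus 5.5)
Your proof is correct and follows the paper's argument: apply the hypothesis to the teacher term and to the inner minimum, then cancel the model-independent constant $C(p)$. You also make explicit the surjectivity step $\min_g \LC_A(g,p)=\min_f \LC_A(Tf,p)$, which the paper's one-line computation uses without stating it.
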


\begin{proof}
The claim follows by cancellation of the model-independent constant:
\[
\begin{aligned}
    \LC_A(Tf^*,p)-\min_g\LC_A(g,p)
    &=
    \LC_B(f^*,p)+C(p)
    -
    \min_f\{\LC_B(f,p)+C(p)\}
    \\
    &=
    \LC_B(f^*,p)-\min_f\LC_B(f,p).
\end{aligned}
\]
\end{proof}

\paragraph{MDLM, UDLM, and Duo.}
For the absorbing diffusion matrix \(Q_{\mathrm{abs}}\) in
Equation~\eqref{eq: absorbing diffusion matrix}, the SEDD objective under the
clean-token parameterization is equivalent to the MDLM loss
\(\LC_{\mathrm{MDLM}}\) up to terms independent of the optimized predictor.
Lemma~\ref{lem:gap-preservation} therefore gives
\[
    \DC_{\mathrm{KL}}(\PP^\theta\|\PP^*)
    =
    \LC_{\mathrm{MDLM}}(f^*,p_\theta)
    -
    \min_f \LC_{\mathrm{MDLM}}(f,p_\theta).
\]
For the uniform diffusion matrix \(Q_{\mathrm{uni}}\) in
Equation~\eqref{eq: uniform diffusion matrix}, the same argument transfers the
SEDD gap to the hard-token UDLM objective. In distribution-level notation,
\[
    \LC_{\mathrm{UDLM}}(f,p)
    \vcentcolon=
    \EE_{p(x_0),t,x_t\sim q_t(\cdot\mid x_0)}
    \bigl[
      g_{\mathrm{UDLM}}\bigl(x_t,x_0,f(x_t,t)\bigr)
    \bigr].
\]
Thus
\[
    \DC_{\mathrm{KL}}(\PP^\theta\|\PP^*)
    =
    \LC_{\mathrm{UDLM}}(f^*,p_\theta)
    -
    \min_f \LC_{\mathrm{UDLM}}(f,p_\theta).
\]
Duo evaluates the same uniform-process objective on a Gaussian-relaxed input and
recovers the hard-token UDLM objective as \(\tau\to0^+\). Hence the theorem
applies to Duo in this hard-token limit, as stated in the main text. This
argument does not claim exact uniqueness for finite-temperature Duo.

\begin{theorem}[Unique solution]
\label{thm:unique-solution-appendix}
For SEDD, MDLM, and hard-token UDLM, and for Duo in the limit
\(\tau\to0^+\), the IDLM loss satisfies
\[
    \LC_{\mathrm{IDLM}}(\theta)
    =
    \DC_{\mathrm{KL}}(\PP^\theta\|\PP^*)
    \geq
    \DC_{\mathrm{KL}}(p_\theta\|p^*)
    \geq 0.
\]
Moreover,
\[
    \LC_{\mathrm{IDLM}}(\theta)=0
    \quad\Longleftrightarrow\quad
    p_\theta=p^*.
\]
\end{theorem}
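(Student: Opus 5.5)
The plan assembles three ingredients: Lemma~\ref{lem:sedd-path-kl-gap}, Lemma~\ref{lem:gap-preservation}, and the chain rule for relative entropy. The first step identifies the IDLM loss with the path KL. For SEDD, Lemma~\ref{lem:sedd-path-kl-gap} gives directly
\[
\LC_{\mathrm{IDLM}}(\theta)=\LC_{\mathrm{SEDD}}(f^*,p_\theta)-\min_f\LC_{\mathrm{SEDD}}(f,p_\theta)=\DC_{\mathrm{KL}}(\PP^\theta\|\PP^*).
\]
For MDLM and hard-token UDLM, we invoke the equivalence, up to model-independent constants, between the SEDD loss under $Q_{\mathrm{abs}}$ (respectively $Q_{\mathrm{uni}}$) and the clean-token losses, and then apply Lemma~\ref{lem:gap-preservation}; this is exactly as in the paragraph preceding the theorem. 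For Duo, we pass to the limit $\tau\to0^+$, in which the relaxed objective reduces to the UDLM objective, and interpret the statement in that limit. If $\PP^\theta\not\ll\PP^*$, both sides equal $+\infty$ and everything below holds trivially, so we assume absolute continuity.

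The second step proves $\DC_{\mathrm{KL}}(\PP^\theta\|\PP^*)\ge\DC_{\mathrm{KL}}(p_\theta\|p^*)$ via the chain rule (equivalently, the data-processing inequality). We write each path measure as its clean terminal marginal times the conditional law of the remaining path given the clean sample:
\[
\PP^\theta(d\omega)=p_\theta(dx_0)\,\PP^\theta(d\omega\mid x_0),
\qquad
\PP^*(d\omega)=p^*(dx_0)\,\PP^*(d\omega\mid x_0).
\]
The chain rule then gives
\[
\DC_{\mathrm{KL}}(\PP^\theta\|\PP^*)=\DC_{\mathrm{KL}}(p_\theta\|p^*)+\EE_{p_\theta(x_0)}\DC_{\mathrm{KL}}\bigl(\PP^\theta(\cdot\mid x_0)\|\PP^*(\cdot\mid x_0)\bigr).
\]
The second term is nonnegative, which yields the inequality. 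In fact, both path measures are built from the same forward noising kernel started at $x_0$, so the two conditionals given $x_0$ coincide. This would even give equality, though only the inequality is needed. The step requires the clean sample to be a measurable coordinate of the path, namely the terminal marginal in reverse-time coordinates, which holds by the time convention; the map $\omega\mapsto\omega_T$ on the Skorokhod space is measurable. Nonnegativity of KL finishes the chain of inequalities.

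The third step is the equivalence. If $\LC_{\mathrm{IDLM}}(\theta)=0$, then $\DC_{\mathrm{KL}}(p_\theta\|p^*)=0$ by the inequality, so $p_\theta=p^*$. Conversely, if $p_\theta=p^*$, then $\PP^\theta=\PP^*$ since the path measure is determined by the clean distribution and the fixed forward kernel, so the path KL vanishes; equivalently, the teacher attains the inner minimum, so the gap is zero.

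The main obstacle is not the chain rule but justifying the first step uniformly across objectives. The SEDD-to-MDLM/UDLM equivalence must hold with a bijective reparameterization $T$ between candidate classes, since subs zero-masking constrains the class. The Duo limit requires interchanging $\tau\to0^+$ with the inner minimization. I would handle the Duo case only in the stated limiting sense, defining the limiting loss as UDLM, rather than proving convergence of minimizers.
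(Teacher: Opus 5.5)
Your proposal is correct and follows the paper's proof essentially step for step. The matching steps are: the path-KL identity from Lemma~\ref{lem:sedd-path-kl-gap}; the transfer to MDLM and hard-token UDLM via Lemma~\ref{lem:gap-preservation}, with Duo covered only in the $\tau\to0^+$ limit; the chain rule after disintegrating both path measures along the clean marginal; and the same two-direction argument for the zero-loss equivalence. Your extra remark that the conditionals given $x_0$ coincide under the paper's definition of $\PP^p$ is correct, so the inequality is in fact an equality, but the paper does not use this and relies only on nonnegativity of the conditional KL term.
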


\begin{proof}
Disintegrate the path measures with respect to their clean-sample marginal:
\[
    \PP^\theta(dx,dw)=p_\theta(dx)\PP_x^\theta(dw),
    \qquad
    \PP^*(dx,dw)=p^*(dx)\PP_x^*(dw).
\]
The chain rule for relative entropy gives
\[
    \DC_{\mathrm{KL}}(\PP^\theta\|\PP^*)
    =
    \DC_{\mathrm{KL}}(p_\theta\|p^*)
    +
    \EE_{x\sim p_\theta}
    \DC_{\mathrm{KL}}(\PP_x^\theta\|\PP_x^*).
\]
The conditional KL term is nonnegative, so
\[
    \DC_{\mathrm{KL}}(\PP^\theta\|\PP^*)
    \geq
    \DC_{\mathrm{KL}}(p_\theta\|p^*)
    \geq 0.
\]
Now consider the zero-loss condition. From the previous part of the proof,
\(\LC_{\mathrm{IDLM}}(\theta)=\DC_{\mathrm{KL}}(\PP^\theta\|\PP^*)\). Therefore,
if \(\LC_{\mathrm{IDLM}}(\theta)=0\), then
\[
    \DC_{\mathrm{KL}}(\PP^\theta\|\PP^*)=0.
\]
Substituting this into the chain-rule decomposition gives
\[
    0
    =
    \DC_{\mathrm{KL}}(p_\theta\|p^*)
    +
    \EE_{x\sim p_\theta}
    \DC_{\mathrm{KL}}(\PP_x^\theta\|\PP_x^*).
\]
Both terms on the right are nonnegative, so each term must be zero. In
particular,
\[
    \DC_{\mathrm{KL}}(p_\theta\|p^*)=0.
\]
The KL divergence between two probability distributions is zero only when the
two distributions are equal, hence \(p_\theta=p^*\).

Conversely, assume \(p_\theta=p^*\). The forward noising process is fixed: once
the clean distribution \(p\) is chosen, the whole path measure \(\PP^p\) is
obtained by applying the same noising dynamics to samples from \(p\). Therefore,
if the clean distributions are equal, the induced path measures are also equal:
\[
    \PP^\theta=\PP^{p_\theta}=\PP^{p^*}=\PP^*.
\]
Thus
\[
    \DC_{\mathrm{KL}}(\PP^\theta\|\PP^*)=0,
\]
and using again
\(\LC_{\mathrm{IDLM}}(\theta)=\DC_{\mathrm{KL}}(\PP^\theta\|\PP^*)\), we get
\(\LC_{\mathrm{IDLM}}(\theta)=0\).
\end{proof}

\paragraph{Sequence-level extension.}
\label{app:sequence-level-extension}
The proof above is for one token. For a full sequence, the same proof works in
an ideal setting where the whole sequence is treated as one state and the models
describe the full reverse process over sequences. In that case, the IDLM loss is
the KL divergence between the full sequence path measures, so it also controls
the difference between \(p_\theta(x_0^{1:L})\) and \(p^*(x_0^{1:L})\). This is
not exactly the loss used in experiments. In practice, we sum token-level losses,
which is convenient for training but does not by itself give the full sequence
path KL, because tokens may still depend on each other. Thus, the proof covers
the one-token objective and the ideal full-sequence objective, but not the
practical factorized sequence loss. We leave a formal theoretical analysis of
the factorized sequence-level IDLM objective as a promising direction for future
research.

\section{Experimental Details.}
\label{app: experimental details}

\subsection{Unconditional Generation on OpenWebText}
\label{app:owt-experimental-details}

\noindent \textbf{Codebase}. Our implementation builds upon the original Duo~\href{https://github.com/s-sahoo/duo}{repository}~\citep{sahoo2025the} and the SEDD~\href{https://github.com/louaaron/Score-Entropy-Discrete-Diffusion}{repository}~\citep{lou2024discrete}, which serve as the primary codebases for our experiments. We extend these repositories to incorporate our proposed training framework, \reviewchange{with the main algorithmic details provided in Algorithm~\ref{alg:idlm}.}

\noindent \textbf{Teacher checkpoints}. We use pretrained checkpoints for the distillation of MDLM and Duo models obtained from the Duo~\href{https://github.com/s-sahoo/duo}{repository}~\citep{sahoo2025the}, and a pretrained checkpoint for SEDD distillation from the SEDD~\href{https://github.com/louaaron/Score-Entropy-Discrete-Diffusion}{repository}~\citep{lou2024discrete}. Unfortunately, for SEDD, only a checkpoint corresponding to the absorbing process was available.

\noindent \textbf{Model architecture}. We follow~\citep{sahoo2025the} for distilling MDLM and Duo, and design the denoising model to operate on both continuous and discrete latents. We use a Transformer~\citep{vaswani2017attention} in which the first-layer token representations are computed via a matrix multiplication with the embedding matrix. For discrete inputs, we perform standard embedding lookups. In contrast, continuous inputs correspond to “soft lookups,” producing a convex combination of vocabulary embeddings. For SEDD distillation, we follow~\citep{lou2024discrete} and use a model that only accepts one-hot vectors. For all setups we use small models with $169$M parameters, specifically with $12$ layers, $12$ heads, hidden size $768$, sequence length $1024$, and dropout $0.1$.

\noindent \textbf{Training hyperparameters}.  
Across all experimental settings, we employed a per-device batch size of 8, resulting in a global batch size of 512. Optimization was performed using the AdamW optimizer~\citep{loshchilov2017decoupled}, with a fixed learning rate of $1 \times 10^{-6}$ for MDLM and Duo, and $3 \times 10^{-7}$ for SEDD. A constant learning rate schedule was used with 2500 warmup steps for all configurations. We adopted the log-linear noise formulation from~\citep{lou2024discrete} in the loss function, and applied exponential moving average (EMA) with a decay rate of 0.9999 throughout training.

\noindent \textbf{Evaluation protocol}. 
As observed by~\citep{zheng2024masked}, the GenPPL metric is sensitive to floating-point precision. To ensure consistency and numerical stability, we follow the protocol of~\citep{sahoo2025the} and perform all sampling experiments using \texttt{float64} precision. All evaluation metrics are computed using the official codebase provided in the Duo~\citep{sahoo2025the} repository.

\phantomsection\label{app:metric-definitions}
\noindent \textbf{Metric definitions}.
Let \(x_0^{1:L}\sim p_\theta\) denote generated samples and let \(p^*\) denote the validation-data distribution. For a finite generated set \(\mathcal{S}_\theta\), generative perplexity (GenPPL) is computed by scoring each generated sequence with a fixed GPT-2 Large autoregressive evaluator \(p_{\mathrm{AR}}\)~\citep{radford2019language}:
\[
\mathrm{GenPPL}
=
\exp\!\left(
-\frac{1}{M}
\sum_{x_0^{1:L}\in\mathcal{S}_\theta}
\sum_{i=1}^{L}
\log p_{\mathrm{AR}}(x_0^i\mid x_0^{1:i-1})
\right),
\]
where \(M\) is the total number of evaluated tokens. Lower GenPPL means that the evaluator assigns higher likelihood to generated samples, and we report it together with diversity metrics to avoid favoring overly repetitive text. Entropy is the empirical token entropy of generated samples:
\[
\mathrm{Entropy}
=
-\sum_{v\in\VC}\hat{p}_\theta(v)\log \hat{p}_\theta(v),
\]
where \(\hat{p}_\theta(v)\) is the generated-token frequency of token \(v\in\VC\); higher entropy means greater token-level diversity. MAUVE~\citep{lou2024discrete} compares samples from \(p_\theta\) with validation samples from \(p^*\) in language-model embedding space, higher values mean that the two distributions are closer. Finally, GM denotes the GPT-2 Gradient Moment metric~\citep{hoogeboom2026beyond}:
\[
\mathrm{GM}(p_\theta,p^*)
=
\left\|
\EE_{x_0^{1:L}\sim p_\theta}\nabla_{\psi}\log p_{\mathrm{AR},\psi}(x_0^{1:L})
-
\EE_{x_0^{1:L}\sim p^*}\nabla_{\psi}\log p_{\mathrm{AR},\psi}(x_0^{1:L})
\right\|_2^2,
\]
estimated with independent minibatches. Lower GM means that generated samples induce reference-model gradients closer to those induced by validation data.

\noindent \textbf{Dataset preparation}.
Following prior work~\citep{sahoo2024simple, sahoo2025the, lou2024discrete}, we preprocess the One Billion Words dataset using the detokenization procedure described by~\citet{lou2024discrete} and~\citet{sahoo2024simple}, with the official implementation available at \href{https://github.com/louaaron/Score-Entropy-Discrete-Diffusion/blob/main/data.py}{this link}. For the OpenWebText dataset, we utilize the \texttt{GPT2} tokenizer and apply a similar concatenation and wrapping procedure as in the previous works~\citep{sahoo2024simple, sahoo2025the}, targeting a sequence length of 1,024 tokens. During wrapping, \texttt{eos} tokens are inserted between consecutive sequences. As OpenWebText does not include an official validation split, we reserve the final 100{,}000 documents from the dataset as a validation set.

\noindent \textbf{Other baselines}.
Our primary baselines are SDTT~\citep{deschenaux2024beyond} and Duo-DCD~\citep{sahoo2025the}. Performance metrics for both methods are reported using values provided in the Duo~\citep{sahoo2025the}, corresponding to the best-performing results obtained after 5 rounds of distillation, described in~\citep{sahoo2025the}. While a valid pretrained checkpoint was available for Duo-DCD, we were unable to identify a usable checkpoint for SDTT~\citep{deschenaux2024beyond}, which precluded a fair comparison or distillation of this model within our experimental framework.

\subsection{Conditional Generation on TinyGSM}
\label{app:tinygsm-experimental-details}

\noindent \textbf{Dataset preparation}.
We use TinyGSM for training and validation and tokenize examples with the \texttt{HuggingFaceTB/SmolLM-135M} tokenizer. Following the conditional-generation setup, examples are kept at sequence length $512$ without document wrapping. We insert \texttt{eos} tokens, use a newline separator between the problem and solution fields, filter examples that exceed the maximum length, and reserve $1\%$ of the data for validation with seed $42$. The problem prompt is treated as conditioning context and is excluded from the training loss, while padding tokens are included.

\noindent \textbf{Model and teacher checkpoint}.
We use the same small denoising Transformer architecture as in the OpenWebText experiments: $12$ layers, $12$ attention heads, hidden size $768$, sequence length $512$, and dropout $0.1$. IDLM-MDLM uses the MDLM \textsc{subs} parameterization with log-linear noise and is initialized from the TinyGSM MDLM teacher checkpoint. IDLM-Duo uses uniform diffusion with the mean parameterization, time conditioning, and adaptive layer normalization; the student is initialized from the TinyGSM Duo teacher, using the teacher EMA weights.

\noindent \textbf{Training hyperparameters}.
Both IDLM-MDLM and IDLM-Duo are trained with adversarial distillation enabled at every update. We use a global batch size of $512$, $4$ GPUs, DDP training, and AdamW with learning rate $1\times10^{-6}$, weight decay $0$, $\beta_1=0.9$, $\beta_2=0.999$, and $\epsilon=10^{-8}$. IDLM-MDLM uses per-device batch size $32$ and bfloat16 precision, while IDLM-Duo uses per-device batch size $16$ and float32 precision. We train for $250{,}000$ steps with a constant learning-rate schedule after $2{,}500$ warmup steps, use antithetic time sampling, and maintain an EMA copy of the model with decay $0.9999$.

\noindent \textbf{Evaluation protocol}.
We follow the conditional-generation protocol of~\citet{deschenaux2026language}. Each GSM8K evaluation instance is formatted as a problem prompt followed by a solution region. During sampling, the problem prompt is clamped and the model generates only the solution region. We evaluate IDLM-MDLM and IDLM-Duo with $32$, $64$, and $128$ reverse steps and compare them against the corresponding $1024$-step teachers. IDLM-MDLM uses the cached ancestral predictor, IDLM-Duo uses the ancestral sampler, and both use \texttt{float64} precision at sampling time.

\noindent \textbf{Metric}.
For each problem, we generate one completion and use the execution-based TinyGSM/GSM8K scorer to extract the predicted numerical answer. We report exact-match accuracy against the GSM8K ground-truth answer in Table~\ref{tab:tinygsm}.

\clearpage
\section{Additional Experimental Results}
\label{app:additional-experimental-results}

\subsection{Ancestral Sampling Results}
\label{app:ancestral-sampling-results}
Table~\ref{tab:uniform-ancestral-expanded} reports the detailed Duo$^{\mathrm{a}}$ results corresponding to the main OWT comparison. As mentioned above, Figure~\ref{fig:owt-genppl-steps} shows the corresponding GenPPL and entropy curves together with the other step-scaling results. Under ancestral sampling, IDLM-DCD remains strongest in the aggressive low-step regime. In particular, the 8-step sampler gives the clearest speed-quality tradeoff: it stays close to the 1024-step Duo teacher while reducing the reverse-step budget by $128\times$.

\begin{table}[t]
    \caption{\textbf{Duo$^{\mathrm{a}}$ distillation comparison.} We report GenPPL $\downarrow$, MAUVE $\uparrow$, and Entropy $\uparrow$. Methods are grouped by Steps, with the best metrics bolded in each group.}
    \label{tab:uniform-ancestral-expanded}
    \centering
    \footnotesize
    \setlength{\tabcolsep}{5pt}
    \begin{tabular}{lcccc}
        \toprule
         & {\scriptsize Steps} & {\scriptsize GenPPL $\downarrow$} & {\scriptsize MAUVE $\uparrow$} & {\scriptsize Entropy $\uparrow$} \\
        \midrule
        FLM~\citep{lee2026flow} & \multirow{2}{*}{1024} & \textbf{62.23} & -- & 5.33 \\
        Duo$^{\mathrm{a}}$ (Teacher)~\citep{sahoo2025the} & & 77.69 & \textbf{0.96} $\pm$ 0.01 & \textbf{5.55} \\
        \midrule
        ELF~\citep{hu2026elf} & \multirow{4}{*}{32} & \textbf{24.08} $\pm$ 0.16 & -- & 5.14 \\
        FMLM~\citep{lee2026flow} & & 45.09 & -- & 5.25 \\
        Duo-DCD$^{\mathrm{a}}$~\citep{sahoo2025the} & & 61.31 & \textbf{0.97} $\pm$ 0.01 & \textbf{5.52} \\
        IDLM-DCD$^{\mathrm{a}}$ (\textbf{Ours}) & & 42.13 $\pm$ 3.65 & 0.94 $\pm$ 0.01 & 5.41 $\pm$ 0.05 \\
        \midrule
        ELF~\citep{hu2026elf} & \multirow{4}{*}{16} & \textbf{33.66} $\pm$ 1.09 & -- & 5.16 \\
        FMLM~\citep{lee2026flow} & & 63.63 & -- & 5.29 \\
        Duo-DCD$^{\mathrm{a}}$~\citep{sahoo2025the} & & 75.24 & \textbf{0.96} $\pm$ 0.01 & \textbf{5.53} \\
        IDLM-DCD$^{\mathrm{a}}$ (\textbf{Ours}) & & 52.06 $\pm$ 4.23 & 0.95 $\pm$ 0.01 & 5.44 $\pm$ 0.05 \\
        \midrule
        ELF~\citep{hu2026elf} & \multirow{4}{*}{8} & 67.32 $\pm$ 2.25 & -- & 5.14 \\
        FMLM~\citep{lee2026flow} & & 86.5 & -- & 5.36 \\
        Duo-DCD$^{\mathrm{a}}$~\citep{sahoo2025the} & & 111.88 & 0.94 $\pm$ 0.01 & \textbf{5.52} \\
        IDLM-DCD$^{\mathrm{a}}$ (\textbf{Ours}) & & \textbf{66.41} $\pm$ 4.59 & \textbf{0.95} $\pm$ 0.01 & 5.42 $\pm$ 0.03 \\
        \midrule
        FMLM~\citep{lee2026flow} & \multirow{3}{*}{4} & 111.31 & -- & 5.26 \\
        Duo-DCD$^{\mathrm{a}}$~\citep{sahoo2025the} & & 261.82 & 0.79 $\pm$ 0.02 & \textbf{5.50} \\
        IDLM-DCD$^{\mathrm{a}}$ (\textbf{Ours}) & & \textbf{111.29} $\pm$ 8.39 & \textbf{0.88} $\pm$ 0.02 & 5.32 $\pm$ 0.06 \\
        \bottomrule
    \end{tabular}
\end{table}

The main comparison is with Duo-DCD under the same ancestral sampler. At 8 steps, IDLM-DCD improves GenPPL from 111.88 to 66.41 while keeping MAUVE and entropy close to the teacher. At 4 steps, Duo-DCD degrades sharply, whereas IDLM-DCD still preserves a usable quality-diversity tradeoff. Thus the benefit of IDLM is largest when very few reverse steps must replace the original 1024-step teacher trajectory.

\subsection{\texorpdfstring{GenPPL and Entropy vs. Sampling Steps}{GenPPL and Entropy vs. Sampling Steps}}
\label{app:genppl-step-plots}
Figure~\ref{fig:owt-genppl-steps} provides the main GenPPL and entropy curves that complement the compact OWT summary in Figure~\ref{fig:owt-main-results}. These plots show how generation quality and token-level diversity change as the number of reverse sampling steps is reduced. For MDLM, IDLM-MDLM gives the strongest low-step tradeoff relative to the teacher and SDTT baselines, improving GenPPL while maintaining comparable entropy. For Duo$^{\mathrm{g}}$, IDLM-DCD is most effective in the aggressive low-step regime, reaching strong GenPPL at $4$ steps while preserving competitive entropy. We also include the ancestral Duo curve here, so the greedy and ancestral step trends can be compared in one place.

\begin{figure*}[t]
    \centering
    \begin{subfigure}[t]{0.32\textwidth}
      \centering
      \includegraphics[width=\linewidth]{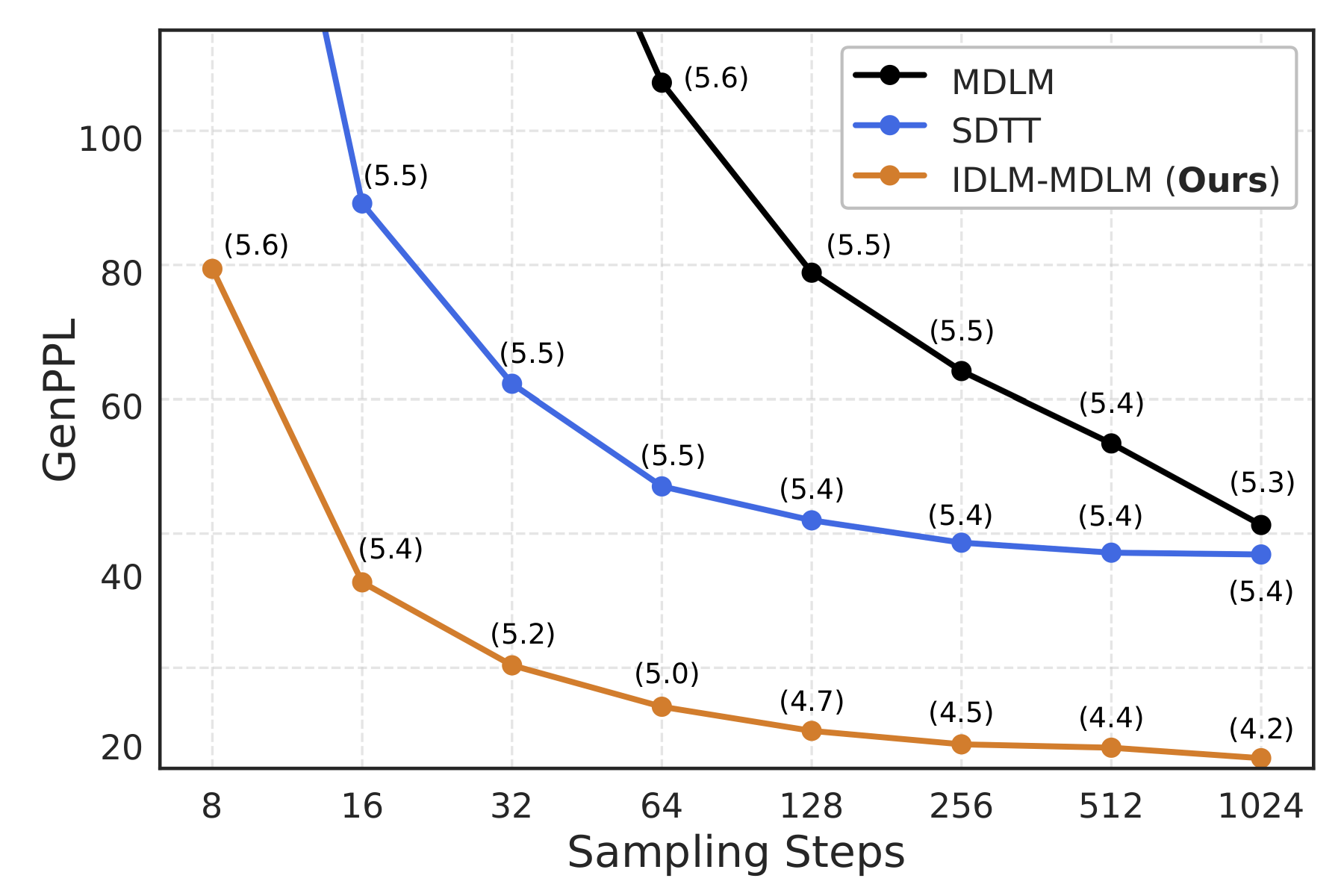}
      \caption{MDLM / SDTT}
    \end{subfigure}\hfill
    \begin{subfigure}[t]{0.32\textwidth}
      \centering
      \includegraphics[width=\linewidth]{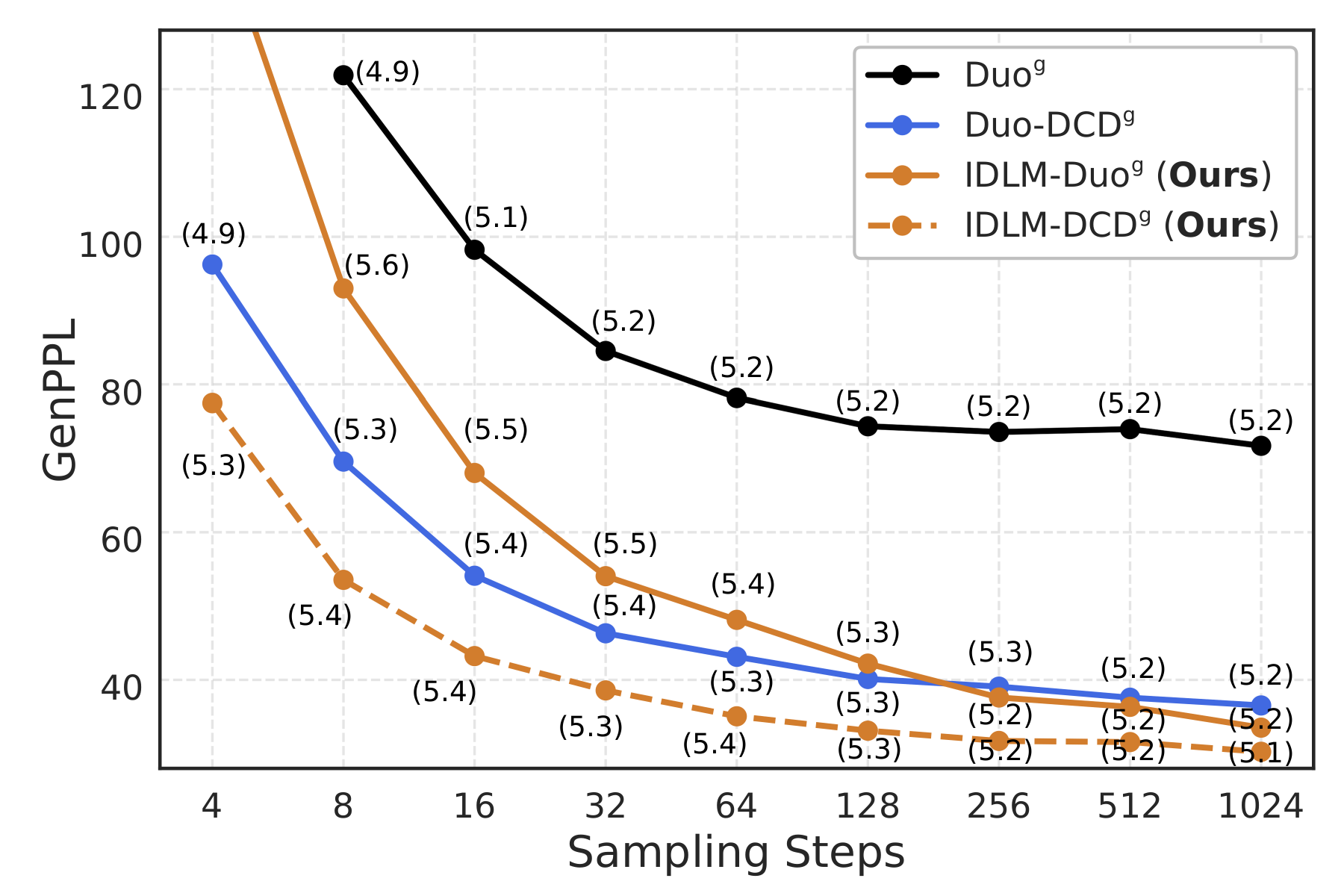}
      \caption{Duo$^{\mathrm{g}}$}
    \end{subfigure}\hfill
    \begin{subfigure}[t]{0.32\textwidth}
      \centering
      \includegraphics[width=\linewidth]{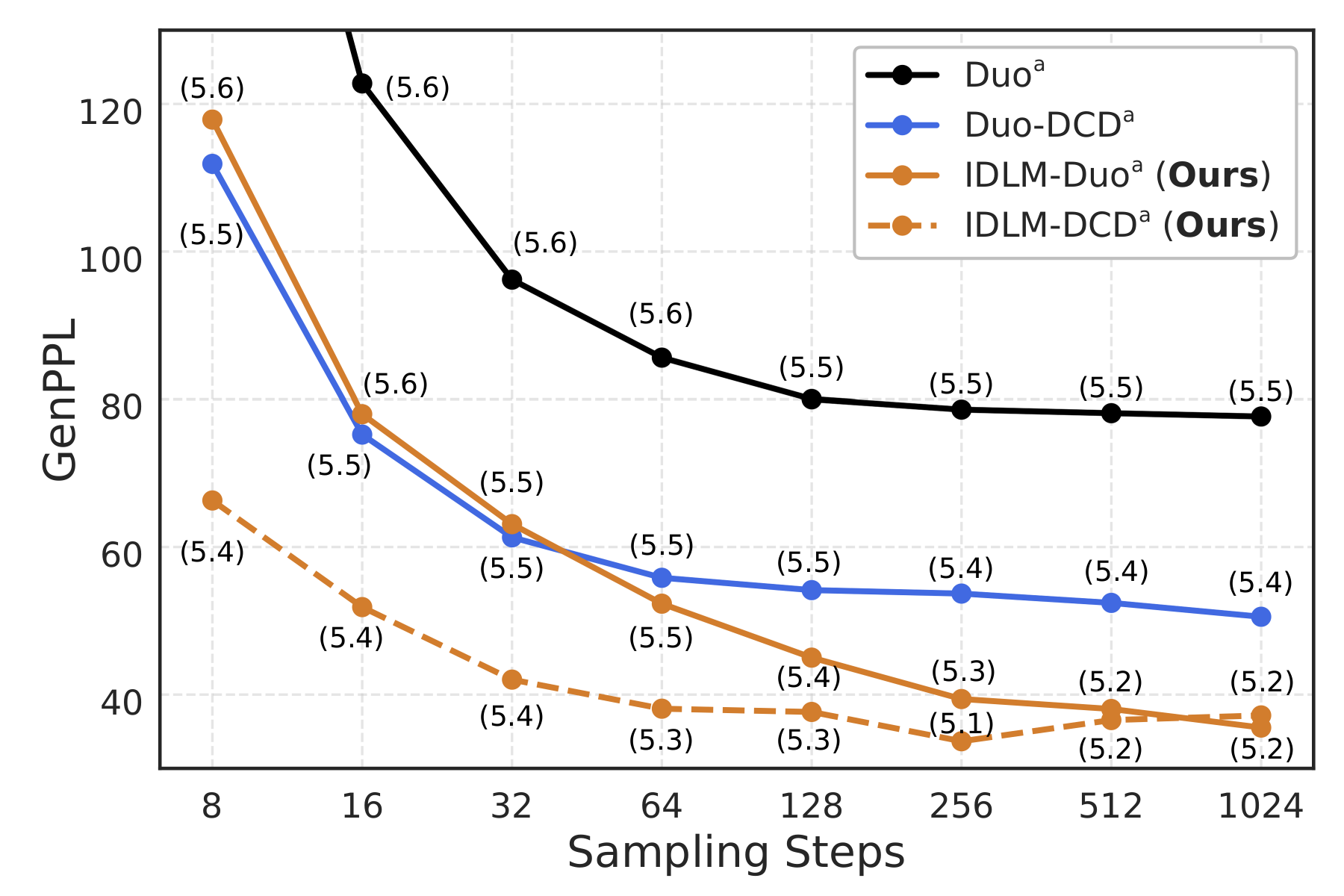}
      \caption{Duo$^{\mathrm{a}}$}
    \end{subfigure}
    \caption{\textbf{OWT GenPPL and entropy vs. sampling steps.} Each panel shows how generation quality and diversity change as the reverse sampling budget is reduced. GenPPL measures sample quality, where lower is better, while entropy tracks output diversity, where higher is better. The panels provide a unified comparison of step-scaling behavior for our IDLM-MDLM, IDLM-DCD$^{\mathrm{g}}$, and IDLM-DCD$^{\mathrm{a}}$ models.}
    \label{fig:owt-genppl-steps}
\end{figure*}

\subsection{Scaling to Larger MDLM}
\label{app:mdlm-scaling}
To test whether IDLM's low-step behavior persists beyond the $169$M setting, we distill an $863$M MDLM~\citep{sahoo2024simple} checkpoint from the SDTT~\citep{deschenaux2024beyond} work. This is the most direct larger-scale comparison available because SDTT uses the same model family and provides matched low-step baselines. As shown in Table~\ref{tab:mdlm-scaling}, IDLM-MDLM consistently improves over SDTT at $32$, $16$, and $8$ sampling steps, improving both GenPPL and entropy in the low-step regime. In particular, the $16$-step sampler preserves the $64\times$ reduction from the $1024$-step teacher, indicating that the acceleration behavior observed in the $169$M experiments also extends to the larger-scale regime.

\begin{table}[t]
\centering
\caption{\textbf{Scaling to an 863M MDLM teacher.} We report GenPPL $\downarrow$ and Entropy $\uparrow$.}
\label{tab:mdlm-scaling}
\footnotesize
\begin{tabular}{lccc}
\toprule
Method ($863$M params) & Steps & GenPPL $\downarrow$ & Entropy $\uparrow$ \\
\midrule
MDLM (Teacher)~\citep{sahoo2024simple} & 1024 & 34.99 & 5.24 \\
\midrule
SDTT~\citep{deschenaux2024beyond} & \multirow{2}{*}{32} & 31.16 & 5.12 \\
IDLM-MDLM \textbf{(Ours)} & & \textbf{13.71} $\pm$ 1.70 & \textbf{5.17} $\pm$ 0.10 \\
\midrule
SDTT~\citep{deschenaux2024beyond} & \multirow{2}{*}{16} & 52.13 & 5.20 \\
IDLM-MDLM \textbf{(Ours)} & & \textbf{24.07} $\pm$ 2.13 & \textbf{5.36} $\pm$ 0.04 \\
\midrule
SDTT~\citep{deschenaux2024beyond} & \multirow{2}{*}{8} & 121.51 & 5.25 \\
IDLM-MDLM \textbf{(Ours)} & & \textbf{67.37} $\pm$ 4.11 & \textbf{5.59} $\pm$ 0.03 \\
\bottomrule
\end{tabular}
\end{table}

\clearpage
\subsection{Ablation Study}
\label{app:ablation-results}

This subsection expands the ablations summarized in
Section~\ref{sec:ablation}. We keep the teacher and evaluation setup fixed, so
the comparisons isolate how gradients are passed to the generator and how much
latent stochasticity is available during masked-diffusion sampling.

\noindent\textbf{Beyond the \textsc{subs} parameterization.}
In masked diffusion, the \textsc{subs} parameterization makes the IDLM-MDLM
generator update mask-only: unmasked positions are copied by both the teacher
and fake model, so their contribution cancels. As discussed in
Section~\ref{sec:practical extension of idlm}, this removes the sampled
intermediate token from the generator gradient and behaves like an implicit
stop-gradient through the absorbing transition. We therefore test whether
explicitly differentiating through the intermediate state improves the update.
Let $x_0=G_\theta(\epsilon)$ and let $m$ denote the mask token. The Gumbel
relaxation softens the full categorical draw from $q_t(\cdot\mid x_0)$,
replacing the hard token by
\[
\tilde{x}^{\mathrm{G}}_t
=
\softmax\!\left(
\frac{\log q_t(\cdot\mid G_\theta(\epsilon)) + g}{\tau}
\right),
\qquad
g_i \overset{\mathrm{i.i.d.}}{\sim}\mathrm{Gumbel}(0,1).
\]
Its objective is
\[
\LC_{\mathrm{Gumbel}}(\theta)
=
-\EE_{t,\epsilon,g}
\left\langle
G_\theta(\epsilon),
\log f^*(\tilde{x}^{\mathrm{G}}_t,t)
-
\log \hat f(\tilde{x}^{\mathrm{G}}_t,t)
\right\rangle .
\]
The Argmax relaxation keeps the forward state closer to the hard masked
process by using the Gumbel-max form of
$q_t(x_t\mid x_0)=\alpha_t\delta_{x_0}(x_t)+(1-\alpha_t)\delta_m(x_t)$:
\[
b_{\mathrm{copy}}
=
\mathbf{1}\{\log \alpha_t + g_1
\geq
\log(1-\alpha_t)+g_2\},
\qquad
b_{\mathrm{mask}}=1-b_{\mathrm{copy}},
\]
\[
\tilde{x}^{\mathrm{A}}_t
=
b_{\mathrm{copy}}G_\theta(\epsilon)
+
b_{\mathrm{mask}}m,
\qquad
g_1,g_2 \overset{\mathrm{i.i.d.}}{\sim}\mathrm{Gumbel}(0,1).
\]
The corresponding objective is
\[
\LC_{\mathrm{Argmax}}(\theta)
=
-\EE_{t,\epsilon,g_1,g_2}
\left\langle
G_\theta(\epsilon),
\log f^*(\tilde{x}^{\mathrm{A}}_t,t)
-
\log \hat f(\tilde{x}^{\mathrm{A}}_t,t)
\right\rangle .
\]
Figure~\ref{fig:ablation-masked} shows that this additional gradient path does
not improve the masked-diffusion tradeoff. The \textsc{subs} update reaches the
useful low-GenPPL regime fastest while preserving substantially more entropy
than Argmax. Gumbel is smoother, but it reduces GenPPL more slowly and gives a
weaker quality--diversity profile near the shared training budget. This
supports the main-text hypothesis that feeding soft or branch-switched
intermediate states to an MDLM teacher can move the evaluation outside the
one-hot training domain and weaken the distillation signal.

\begin{center}
    \includegraphics[width=0.98\textwidth]{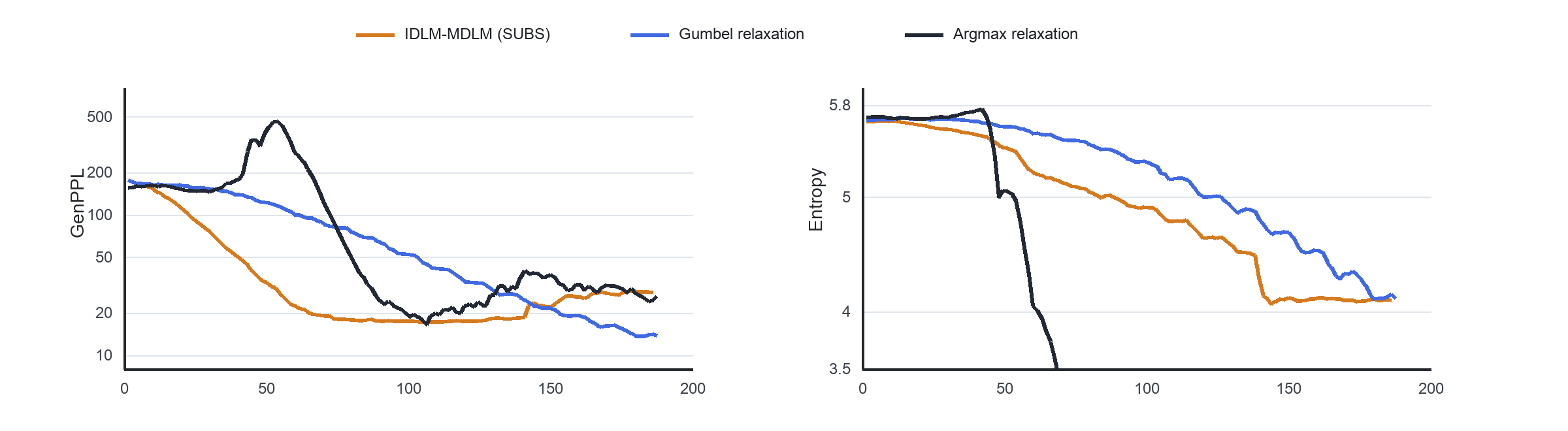}
    \vspace{-0.8em}

    {\small Training steps (k)}
    \captionsetup{hypcap=false}
    \captionof{figure}{
    \textbf{Masked-diffusion parameterization ablation.}
    Validation GenPPL and entropy for IDLM-MDLM with the \textsc{subs},
    Gumbel, and Argmax generator updates. The \textsc{subs} update gives the
    strongest early quality--diversity tradeoff under the shared training
    budget.
    }
    \label{fig:ablation-masked}
\end{center}

\noindent\textbf{DMD-like uniform distillation.}
For uniform diffusion, the Duo parameterization evaluates denoisers on relaxed
states, so the intermediate state remains differentiable. This makes the
difference between full inverse distillation and a DMD-like stop-gradient
reduction directly testable. Let
\[
\begin{aligned}
a_t(w,x)
&= g_{\mathrm{Duo}}\bigl(x_t(w),x,f^*(x_t^\tau(w),t)\bigr)\\
&\quad - g_{\mathrm{Duo}}\bigl(x_t(w),x,f(x_t^\tau(w),t)\bigr),
\end{aligned}
\]
where \(x_t(w)=\argmax(w)\) and \(x_t^\tau(w)=\softmax(w/\tau)\) are the
hard and relaxed Duo states. The full IDLM-Duo objective and the DMD-like
stop-gradient objective use the same teacher--fake Duo advantage, but differ in
whether the relaxed intermediate state carries gradients:
\begin{equation}
\label{eq:uniform-full-vs-sg}
\begin{aligned}
w_t
&= \tilde{\alpha}_t G_\theta(\epsilon)
 + \sqrt{1-\tilde{\alpha}_t^2}\,\xi,\\
w_t^{\mathrm{sg}}
&= \tilde{\alpha}_t \text{sg}\!\left(G_\theta(\epsilon)\right)
 + \sqrt{1-\tilde{\alpha}_t^2}\,\xi,\\
\LC_{\mathrm{IDLM}}^{\mathrm{Duo}}(\theta)
&= \EE\!\left[a_t(w_t,G_\theta(\epsilon))\right],\\
\LC_{\mathrm{DMD\text{-}like}}^{\mathrm{Duo}}(\theta)
&= \EE\!\left[a_t(w_t^{\mathrm{sg}},G_\theta(\epsilon))\right].
\end{aligned}
\end{equation}
Figure~\ref{fig:ablation-uniform} shows that this distinction matters in the
uniform case. The stop-gradient variant quickly enters a high-GenPPL,
high-entropy regime, so its entropy reflects instability rather than useful
diversity. In contrast, the full IDLM-Duo update remains controlled for most of
training. Thus, the formal connection between IDLM and DMD-like matching does
not imply that the two generator updates have the same optimization behavior
for uniform diffusion; the pathwise gradient through the relaxed intermediate
state is empirically important.

\begin{center}
    \includegraphics[width=0.98\textwidth]{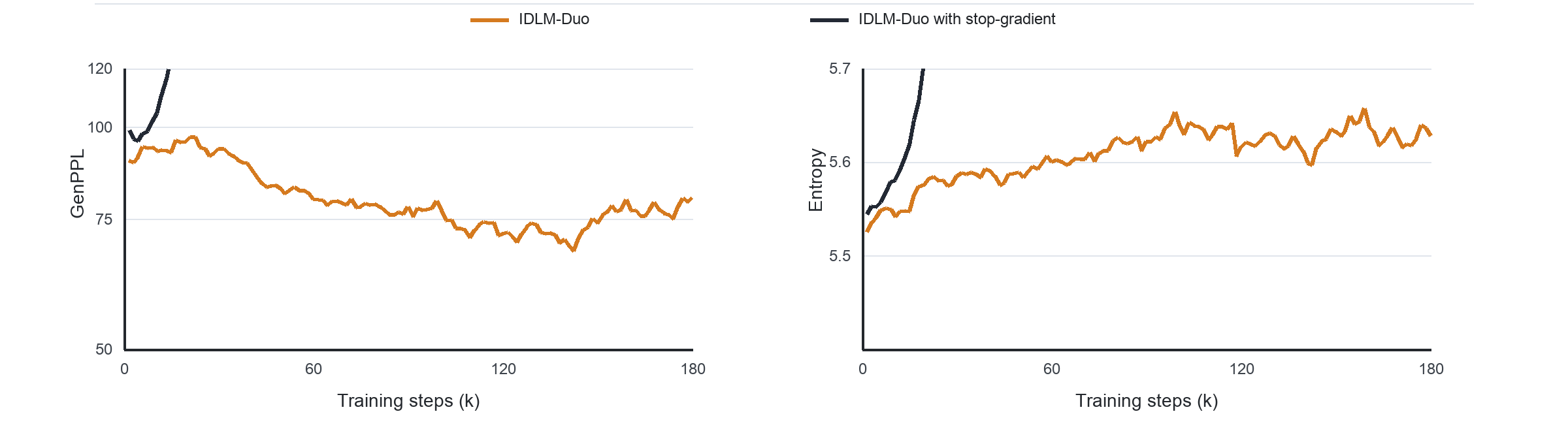}
    \captionsetup{hypcap=false}
    \captionof{figure}{
    \textbf{Uniform-diffusion stop-gradient ablation.}
    Validation GenPPL and entropy for IDLM-Duo and its DMD-like
    stop-gradient variant. Removing the intermediate-state gradient leads to a
    divergent high-GenPPL regime, whereas the full IDLM update remains stable.
    }
    \label{fig:ablation-uniform}
\end{center}

\noindent\textbf{Additional latent noise for masked diffusion.}
Finally, we test whether extra latent randomness helps masked diffusion in the
extreme low-step regime. Unlike uniform diffusion, masked diffusion starts from
a deterministic all-mask terminal state, so a deterministic one-step generator
has no rich latent source for sample diversity. We therefore inject Gaussian
noise into the IDLM-MDLM input as a simple control. Table~\ref{tab:gaussian-noise-control}
shows that this modification does not improve GenPPL or entropy:
\begin{center}
    \captionsetup{hypcap=false}
    \captionof{table}{
    \textbf{Additional latent noise for IDLM-MDLM.}
    Values are GenPPL with entropy in parentheses.
    }
    \label{tab:gaussian-noise-control}
    \small
\begin{tabular}{c|cc}
Steps & IDLM-MDLM & IDLM-MDLM + Gaussian noise \\
\hline
1  & 4056.12 (5.87) & 4096.54 (5.88) \\
8  & 79.75 (5.61)  & 81.61 (5.61) \\
16 & 32.75 (5.42)  & 35.76 (5.45) \\
32 & 20.45 (5.23)  & 21.03 (5.26)
\end{tabular}
\end{center}
The negative result is consistent with Section~\ref{sec:ablation}: simple
input-level Gaussian noise is insufficient for masked diffusion, and improving
the low-step regime may require stochasticity that is matched more closely to
the discrete absorbing process.

\section{Qualitative Samples}
\label{app: qualitative samples}

\newcommand{\qualpenicon}{%
\begin{tikzpicture}[baseline=-0.5ex, x=1cm, y=1cm]
    \filldraw[fill=owtgreen, draw=owtgreen, rounded corners=1.5pt] (0,0) rectangle (0.31,0.31);
    \draw[white, line width=0.95pt, line join=round, rounded corners=0.5pt]
        (0.08,0.10) -- (0.17,0.24) -- (0.25,0.20) -- (0.15,0.06) -- cycle;
    \draw[white, line width=0.8pt, line join=round]
        (0.08,0.10) -- (0.06,0.03) -- (0.15,0.06);
    \draw[white, line width=0.65pt, line cap=round] (0.18,0.23) -- (0.24,0.19);
    \draw[white, line width=0.85pt, line cap=round]
        (0.14,0.05) .. controls (0.18,0.01) and (0.22,0.06) .. (0.27,0.05);
\end{tikzpicture}}

\newcommand{\qualdocicon}{%
\begin{tikzpicture}[baseline=-0.5ex, x=1cm, y=1cm]
    \filldraw[fill=tinyorange, draw=tinyorange, rounded corners=1.5pt] (0,0) rectangle (0.31,0.31);
    \draw[white, line width=0.85pt, rounded corners=0.35pt]
        (0.085,0.055) -- (0.085,0.255) -- (0.205,0.255) -- (0.255,0.205) -- (0.255,0.055) -- cycle;
    \draw[white, line width=0.65pt] (0.205,0.255) -- (0.205,0.205) -- (0.255,0.205);
    \draw[white, line width=0.65pt, line cap=round] (0.12,0.175) -- (0.22,0.175);
    \draw[white, line width=0.65pt, line cap=round] (0.12,0.135) -- (0.22,0.135);
    \draw[white, line width=0.65pt, line cap=round] (0.12,0.095) -- (0.19,0.095);
\end{tikzpicture}}

\newcommand{\qualdivider}[1]{%
\vspace{-0.9mm}
\noindent\begin{tikzpicture}
\draw[#1, densely dashed, line width=0.5pt] (0,0) -- (\linewidth,0);
\end{tikzpicture}
\vspace{-2.0mm}}

\newcommand{\qualgroup}[1]{%
\par\vspace{1.4mm}
\noindent\begin{tikzpicture}
\draw[owtgreen, densely dashed, line width=0.45pt] (0,0) -- (\linewidth,0);
\end{tikzpicture}
\vspace{-1.0mm}
{\scriptsize\bfseries\textcolor{owtgreen}{#1}}\\[-0.2mm]}

\newcommand{\qualfirstgroup}[1]{%
\vspace{-0.4mm}
{\scriptsize\bfseries\textcolor{owtgreen}{#1}}\\[-0.2mm]}

\newcommand{\qualrowsep}{\par\vspace{1.0mm}}

\newcommand{\owtrow}[4]{%
\noindent
\begin{minipage}[t]{0.31\linewidth}
{\footnotesize\bfseries\textcolor{owtgreen}{#1 / #2 Sampling steps}}\\[-0.1mm]
{\scriptsize\bfseries #3}
\end{minipage}\hfill
\begin{minipage}[t]{0.66\linewidth}
{\scriptsize #4}
\end{minipage}}

\newcommand{\condmetric}[1]{%
{\scriptsize\bfseries\textcolor{tinyorange}{Accuracy:} #1}\\[-0.1mm]}

\newcommand{\condrowsep}{%
\par\vspace{0.8mm}
\noindent\begin{tikzpicture}
\draw[tinyorange, densely dashed, line width=0.45pt] (0,0) -- (\linewidth,0);
\end{tikzpicture}
\vspace{-0.8mm}}

This appendix provides compact qualitative excerpts for the unconditional OpenWebText setting and the conditional TinyGSM setting. Samples are shortened for readability.

\subsection{Unconditional Generation (OpenWebText)}
\label{app:owt-qualitative-samples}

\begin{tcolorbox}[
    enhanced,
    breakable,
    colframe=owtgreen,
    colback=green!3!white,
    boxrule=0.65pt,
    arc=4pt,
    left=5pt,
    right=5pt,
    top=3pt,
    bottom=3pt,
    width=\textwidth,
    before upper={\setlength{\parindent}{0pt}}]
\noindent
\begin{minipage}[c]{0.04\linewidth}
\centering
\qualpenicon
\end{minipage}%
\begin{minipage}[c]{0.94\linewidth}
{\normalsize\bfseries\textcolor{owtgreen}{Unconditional Generation (OpenWebText)}}
\end{minipage}
\qualdivider{owtgreen}

\qualfirstgroup{SEDD}
\owtrow{IDLM-SEDD}{256}
{\textcolor{owtgreen}{Gen. PPL:} 39.24; \textcolor{owtgreen}{H:} 5.3}
{\ldots{} We knew the injuries against Fulham and Newcastle prevented us from gaining victory, but we felt we deserved a big victory in a game which brought us back to the group stage. There were only two fifth division sides in the last eight games, and the Premier League came into the day in third place. \ldots{}}
\qualrowsep
\owtrow{SEDD}{1024}
{\textcolor{owtgreen}{Gen. PPL:} 43.31; \textcolor{owtgreen}{H:} 5.2}
{\ldots{} I think we want to win every championship with these players, and we have given all I have to say. I know and I do not know that we will rush forward. In our view, we will not give any assurances that we are going to stay. \ldots{}}

\qualgroup{Masked Diffusion}
\owtrow{IDLM-MDLM}{16}
{\textcolor{owtgreen}{Gen. PPL:} 32.75 $\pm$ 1.62; \textcolor{owtgreen}{H:} 5.42 $\pm$ 0.06}
{\ldots{} In the wake of angry remarks from the media, Sam Kicks Radio decided to take a shot, claiming that he had only ever learned how to get away with something bad. I want to get over it, he says, while the exchange returns to the same public dispute. \ldots{}}
\qualrowsep
\owtrow{MDLM}{1024}
{\textcolor{owtgreen}{Gen. PPL:} 41.29; \textcolor{owtgreen}{H:} 5.28}
{\ldots{} Google claims that Android is its own operating system and could soon be replaced by more complicated cloud services from Samsung and Mozilla. Several business applications on mobile devices bring many direct interaction options. \ldots{}}

\qualgroup{Uniform Diffusion (Ancestral Sampling)}
\owtrow{IDLM-Duo$^{\mathrm{a}}$}{16}
{\textcolor{owtgreen}{Gen. PPL:} 78.00; \textcolor{owtgreen}{H:} 5.6}
{\ldots{} I think it is important to worry about better credit security and better integrated trading of those metrics, says a professor of regenerative engineering at McMaster University. The resident range has grown to 7.4 million in the UAE's next 25 years. \ldots{}}
\qualrowsep
\owtrow{IDLM-DCD$^{\mathrm{a}}$}{8}
{\textcolor{owtgreen}{Gen. PPL:} 66.41 $\pm$ 4.59; \textcolor{owtgreen}{H:} 5.42 $\pm$ 0.03}
{\ldots{} I am mostly an economist; for me, I am one of the co-creators of the culture of socialization and a simple person. A computer is enhanced with incredible energy by itself, while the passage moves through biography and explanation. \ldots{}}
\qualrowsep
\owtrow{Duo-DCD$^{\mathrm{a}}$}{16}
{\textcolor{owtgreen}{Gen. PPL:} 75.24; \textcolor{owtgreen}{H:} 5.53}
{\ldots{} My favorite cuisine is pretty varied, changing from regular to country cooking. Some local dishes are fine when guests visit the hotel, and the Southern Japanese Thai restaurant is described as something to watch. \ldots{}}
\qualrowsep
\owtrow{Duo$^{\mathrm{a}}$}{1024}
{\textcolor{owtgreen}{Gen. PPL:} 77.69; \textcolor{owtgreen}{H:} 5.55}
{\ldots{} The Germanic literature is based on a set of cultural expectations, while the English is largely freighted with fiction. Thus, it would be a useful contribution to this field for a shift away from the vernacular. \ldots{}}

\qualgroup{Uniform Diffusion (Greedy Sampling)}
\owtrow{IDLM-Duo$^{\mathrm{g}}$}{16}
{\textcolor{owtgreen}{Gen. PPL:} 68.04; \textcolor{owtgreen}{H:} 5.6}
{\ldots{} Strong performance in the NBN Co network will drive costs toward power parity, said Paul McCormick, chief economist at Business Economics. Strong growth of the network in Telstra Powerplows will drive costs in real time. \ldots{}}
\qualrowsep
\owtrow{IDLM-DCD$^{\mathrm{g}}$}{4}
{\textcolor{owtgreen}{Gen. PPL:} 77.47 $\pm$ 4.43; \textcolor{owtgreen}{H:} 5.28 $\pm$ 0.06}
{\ldots{} This device is not the only way to use features in this article. People who conduct interviews in secure environments are attracted by a two-way monitor, making it an ideal candidate for an interviewee who can adapt to secure systems. \ldots{}}
\qualrowsep
\owtrow{Duo-DCD$^{\mathrm{g}}$}{8}
{\textcolor{owtgreen}{Gen. PPL:} 69.58; \textcolor{owtgreen}{H:} 5.30}
{\ldots{} Craft beer from Wisconsin was based on a local foundation, while the market spread through misleading advertising and state-level pushes for ways to craft beer. The passage repeats the theme of freshness and local identity. \ldots{}}
\qualrowsep
\owtrow{Duo$^{\mathrm{g}}$}{1024}
{\textcolor{owtgreen}{Gen. PPL:} 71.72; \textcolor{owtgreen}{H:} 5.22}
{\ldots{} Medical applications and patents are being struck down under the patent devices act. The patent does not include more than medical devices or the described holder, and readers are asked to note changes in size or functionality. \ldots{}}
\end{tcolorbox}

\subsection{Conditional Generation (TinyGSM)}
\label{app:tinygsm-qualitative-samples}

\begin{tcolorbox}[
    enhanced,
    breakable,
    colframe=tinyorange,
    colback=orange!3!white,
    boxrule=0.65pt,
    arc=4pt,
    left=5pt,
    right=5pt,
    top=3pt,
    bottom=3pt,
    width=\textwidth,
    before upper={\setlength{\parindent}{0pt}}]
\noindent
\begin{minipage}[c]{0.04\linewidth}
\centering
\qualdocicon
\end{minipage}%
\begin{minipage}[c]{0.94\linewidth}
{\normalsize\bfseries\textcolor{tinyorange}{Conditional Generation (TinyGSM)}}
\end{minipage}
\qualdivider{tinyorange}

\noindent
\begin{minipage}[t]{0.31\linewidth}
{\footnotesize\bfseries\textcolor{tinyorange}{IDLM-MDLM / 32 Sampling steps}}\\[-0.1mm]
\condmetric{12.8\%}
\end{minipage}\hfill
\begin{minipage}[t]{0.66\linewidth}
{\scriptsize\bfseries Prompt.}
{\scriptsize Marcus is half of Leo's age and five years younger than Deanna. Deanna is 26. How old is Leo?}\\[-0.1mm]
{\scriptsize\bfseries Generation.}\\[-2.4mm]
\begin{lstlisting}[language=Python,basicstyle=\ttfamily\tiny,columns=fullflexible,keepspaces=true,breaklines=true,aboveskip=0pt,belowskip=0pt,xleftmargin=0.04\linewidth,xrightmargin=0.04\linewidth]
def simple_math_problem() -> int:
    marcus = 26 - 5
    leo = marcus * 2
    result = leo
    return result
\end{lstlisting}
\end{minipage}

\par\vspace{1.0mm}
\noindent
\begin{minipage}[t]{0.31\linewidth}
{\footnotesize\bfseries\textcolor{tinyorange}{IDLM-MDLM / 64 Sampling steps}}\\[-0.1mm]
\condmetric{14.9\%}
\end{minipage}\hfill
\begin{minipage}[t]{0.66\linewidth}
{\scriptsize\bfseries Prompt.}
{\scriptsize Dan plants 3 rose bushes. Each rose bush has 25 roses. Each rose has 8 thorns. How many thorns are there total?}\\[-0.1mm]
{\scriptsize\bfseries Generation.}\\[-2.4mm]
\begin{lstlisting}[language=Python,basicstyle=\ttfamily\tiny,columns=fullflexible,keepspaces=true,breaklines=true,aboveskip=0pt,belowskip=0pt,xleftmargin=0.04\linewidth,xrightmargin=0.04\linewidth]
def simple_math_problem() -> int:
    thorns = 3 * 25 * 8
    result = thorns
    return result
\end{lstlisting}
\end{minipage}

\par\vspace{1.0mm}
\noindent
\begin{minipage}[t]{0.31\linewidth}
{\footnotesize\bfseries\textcolor{tinyorange}{IDLM-MDLM / 128 Sampling steps}}\\[-0.1mm]
\condmetric{19.9\%}
\end{minipage}\hfill
\begin{minipage}[t]{0.66\linewidth}
{\scriptsize\bfseries Prompt.}
{\scriptsize John collects peaches for 3 hours. He can collect 2 peaches a minute. How many peaches does he collect?}\\[-0.1mm]
{\scriptsize\bfseries Generation.}\\[-2.4mm]
\begin{lstlisting}[language=Python,basicstyle=\ttfamily\tiny,columns=fullflexible,keepspaces=true,breaklines=true,aboveskip=0pt,belowskip=0pt,xleftmargin=0.04\linewidth,xrightmargin=0.04\linewidth]
def simple_math_problem() -> int:
    minutes = 3 * 60
    peaches = 2 * minutes
    result = peaches
    return result
\end{lstlisting}
\end{minipage}

\condrowsep
\noindent
\begin{minipage}[t]{0.31\linewidth}
{\footnotesize\bfseries\textcolor{tinyorange}{IDLM-Duo / 32 Sampling steps}}\\[-0.1mm]
\condmetric{15.4\%}
\end{minipage}\hfill
\begin{minipage}[t]{0.66\linewidth}
{\scriptsize\bfseries Prompt.}
{\scriptsize Ted starts with \$200. He buys 3 books for 16 dollars each and 3 pencils for 6 dollars each. How much did he spend in total?}\\[-0.1mm]
{\scriptsize\bfseries Generation.}\\[-2.4mm]
\begin{lstlisting}[language=Python,basicstyle=\ttfamily\tiny,columns=fullflexible,keepspaces=true,breaklines=true,aboveskip=0pt,belowskip=0pt,xleftmargin=0.04\linewidth,xrightmargin=0.04\linewidth]
def simple_math_problem() -> int:
    total_cost = 3 * 16 + 3 * 6
    result = total_cost
    return result
\end{lstlisting}
\end{minipage}

\par\vspace{1.0mm}
\noindent
\begin{minipage}[t]{0.31\linewidth}
{\footnotesize\bfseries\textcolor{tinyorange}{IDLM-Duo / 64 Sampling steps}}\\[-0.1mm]
\condmetric{19.0\%}
\end{minipage}\hfill
\begin{minipage}[t]{0.66\linewidth}
{\scriptsize\bfseries Prompt.}
{\scriptsize Sam ran 3 miles on Monday, Wednesday and Friday. On Tuesday and Thursday, Sam ran 5 miles. How many miles did Sam run this week?}\\[-0.1mm]
{\scriptsize\bfseries Generation.}\\[-2.4mm]
\begin{lstlisting}[language=Python,basicstyle=\ttfamily\tiny,columns=fullflexible,keepspaces=true,breaklines=true,aboveskip=0pt,belowskip=0pt,xleftmargin=0.04\linewidth,xrightmargin=0.04\linewidth]
def simple_math_problem() -> int:
    total_miles = (3 * 3) + 5 * 2
    result = total_miles
    return result
\end{lstlisting}
\end{minipage}

\par\vspace{1.0mm}
\noindent
\begin{minipage}[t]{0.31\linewidth}
{\footnotesize\bfseries\textcolor{tinyorange}{IDLM-Duo / 128 Sampling steps}}\\[-0.1mm]
\condmetric{21.4\%}
\end{minipage}\hfill
\begin{minipage}[t]{0.66\linewidth}
{\scriptsize\bfseries Prompt.}
{\scriptsize James has to buy insurance. Since he had an accident, it was 60\% more than normal. The normal cost is \$120 a month. How much does he pay a year?}\\[-0.1mm]
{\scriptsize\bfseries Generation.}\\[-2.4mm]
\begin{lstlisting}[language=Python,basicstyle=\ttfamily\tiny,columns=fullflexible,keepspaces=true,breaklines=true,aboveskip=0pt,belowskip=0pt,xleftmargin=0.04\linewidth,xrightmargin=0.04\linewidth]
def simple_math_problem() -> int:
    yearly_cost = (120 * (1 + 0.60)) * 12
    result = yearly_cost
    return result
\end{lstlisting}
\end{minipage}
\end{tcolorbox}



\end{document}